\theoremstyle{plain}
\newtheorem{theorem}{Theorem}[section]
\newtheorem{lemma}[theorem]{Lemma}
\theoremstyle{definition}
\newtheorem{definition}[theorem]{Definition}
\newtheorem{assumption}[theorem]{Assumption}
\newtheorem{remark}[theorem]{Remark}
\def\liminf{\mathop{\rm lim\,inf}\limits}
\def\bZ{\mathbf{Z}}
\def\Q{\mathbf{Q}}
\def\R{\mathbb{R}}
\def\C{\mathbb{C}}
\def\E{\mathbb{E}}
\def\P{\mathbb{P}}
\def\L{\mathcal{L}}
\def\H{\mathbf{H}}
\def\eps{\varepsilon}
\def\T{\mathbf{T}}
\def\S{\mathbf{S}}
\def\D{\mathbf{D}}
\def\X{\mathbf{X}}
\def\x{\mathbf{x}}
\def\v{\mathbf{v}}
\def\u{\mathbf{u}}
\def\W{\mathbf{W}}
\def\a{\mathbf{a}}
\def\A{\mathbf{A}}
\def\B{\mathbf{B}}
\def\C{\mathbf{C}}
\def\var{\textup{Var}}
\def\param{\boldsymbol{\theta}}
\def\Param{\boldsymbol{\Theta}}
\def\I{\mathbf{I}}
\def\h{\mathbf{h}}
\def\bSigma{\boldsymbol{\Sigma}}
\def\V{\mathbf{V}}
\def\U{\mathbf{U}}
\def\bR{\mathbf{R}}
\def\w{\mathbf{w}}
\def\blambda{\boldsymbol{\lambda}}
\def\beps{\boldsymbol{\eps}}
\def\rank{\textup{rank}}
\def\bSigma{\boldsymbol{\Sigma}}
\def\bphi{\boldsymbol{\phi}}
\def\bpsi{\boldsymbol{\psi}}
\def\bPhi{\boldsymbol{\Phi}}
\def\L{\mathcal{L}}
\DeclareMathOperator{\diag}{diag}
\DeclareMathOperator*{\argmin}{arg\,min}
\DeclareMathOperator{\vect}{vec}
\newcommand{\tr}{\textup{tr}}
\newcommand{\p}{\mathbf{p}}
\def\liminf{\mathop{\rm lim\,inf}\limits}
\def\R{\mathbb{R}}
\def\E{\mathbb{E}}
\def\P{\mathbb{P}}
\def\L{\mathcal{L}}
\def\H{\mathbf{H}}
\def\eps{\varepsilon}
\def\X{\mathbf{X}}
\def\x{\mathbf{x}}
\def\g{\mathbf{g}}
\def\W{\mathbf{W}}
\def\Y{\mathbf{Y}}
\def\bP{\mathbf{P}}
\def\H{\mathbf{H}}
\def\Beta{\boldsymbol{\beta}}
\def\bGamma{\boldsymbol{\Gamma}}
\def\bgamma{\boldsymbol{\gamma}}
\def\A{\mathbf{A}}
\def\B{\mathbf{B}}
\def\var{\textup{Var}}
\def\y{\mathbf{y}}
\def\u{\mathbf{u}}
\title{Exponentially Convergent Algorithms for \\ Supervised Matrix Factorization}
\author{%
	Joowon Lee \\ 
	Department of Statistics \\
	University of Wisconsin - Madison, WI, USA \\
	\texttt{jlee2256@wisc.edu} \\
	\And
	Hanbaek Lyu \\
	Department of Mathematics \\
	University of Wisconsin - Madison, WI, USA \\
	\texttt{hlyu@math.wisc.edu} \\
	\AND
	Weixin Yao \\
	Department of Statistics \\
	University of California, Riverside, CA, USA \\
	\texttt{weixiny@ucr.edu} \\
}
\begin{document}

	\maketitle

	\begin{abstract}
		Supervised matrix factorization (SMF) is a classical machine learning method that simultaneously seeks feature extraction and classification tasks, which are not necessarily a priori aligned objectives.
		Our goal is to use SMF to learn low-rank latent factors that offer interpretable, data-reconstructive, and class-discriminative features, addressing challenges posed by high-dimensional data. Training SMF model involves solving a nonconvex and possibly constrained optimization with at least three blocks of parameters. Known algorithms are either heuristic or provide weak convergence guarantees for special cases. In this paper, we provide a novel framework that `lifts' SMF as a low-rank matrix estimation problem in a combined factor space and propose an efficient algorithm that provably converges exponentially fast to a global minimizer of the objective with arbitrary initialization under mild assumptions. Our framework applies to a wide range of SMF-type problems for multi-class classification with auxiliary features. To showcase an application, we demonstrate that our algorithm successfully identified well-known cancer-associated gene groups for various cancers.
\end{abstract}

\section{Introduction}

In classical classification models, such as logistic regression, a conditional class-generating probability distribution is modeled as a simple function of the observed features with unknown parameters to be trained. However, the raw observed features may be high-dimensional, and most of them might be uninformative and hard to interpret (e.g., pixel values of an image). Therefore, it would be desirable to extract more informative and interpretable low-dimensional features prior to the classification task. For instance, the multi-layer perceptron or deep neural networks (DNN) in general \cite{bishop1995neural, bishop2006pattern} use additional feature extraction layers prior to the logistic regression layer. This allows the model itself to learn the most effective (supervised) feature extraction mechanism and the association of the extracted features with class labels simultaneously.

Matrix factorization (MF) is a classical unsupervised feature extraction framework, which learns latent structures of complex datasets and is regularly applied in the analysis of text and images \cite{elad2006image, mairal2007sparse, peyre2009sparse}. Various matrix factorization models such as singular value decomposition (SVD), principal component analysis (PCA), and nonnegative matrix factorization (NMF) provide fundamental tools for unsupervised feature extraction tasks \cite{golub1971singular, wall2003singular, abdi2010principal, lee1999learning}. Extensive research has been conducted to adapt matrix factorization models to perform classification tasks by supervising the matrix factorization process using additional class labels. Note that matrix factorization and classification are not necessarily aligned objectives, so some degree of trade-off is necessary when seeking to achieve both goals simultaneously. \textit{Supervised matrix factorization} (SMF) provides systematic approaches for such multi-objective tasks. Our goal is to use SMF to learn low-rank latent factors that offer interpretable, data-reconstructive, and class-discriminative features, addressing challenges posed by high-dimensional data. The general framework of SMF was introduced in \cite{mairal2008supervised}. A similar SMF-type framework of discriminative K-SVD was proposed for face recognition \cite{zhang2010discriminative}. A stochastic formulation of SMF was proposed in \cite{mairal2011task}. SMF has also found numerous applications in various other problem domains, including speech and emotion recognition \cite{gangeh2014multiview}, music genre classification \cite{zhao2015supervised}, concurrent brain network inference \cite{zhao2015supervised}, structure-aware clustering \cite{yankelevsky2017structure}, and object recognition \cite{li2019discriminative}. Recently, supervised variants of NMF, as well as PCA, were proposed in \cite{austin2018fully, leuschner2019supervised, ritchie2020supervised}. See also the survey work of \cite{gangeh2015supervised} on SMF.

Various SMF-type models have been proposed in the past two decades. We divide them into two categories depending on whether the extracted low-dimensional feature or the feature extraction mechanism itself is supervised. We refer to them as feature-based and filter-based SMF, respectively. Feature-based SMF models include the classical ones by Mairal et al. (see, e.g., \cite{mairal2008supervised, mairal2011task}) as well as the more recent model of convolutional matrix factorization by \cite{kim2016convolutional} for a contextual text recommendation system. Filter-based SMF models have been studied more recently in the supervised matrix factorization literature, most notably from supervised nonnegative matrix factorization \cite{austin2018fully, leuschner2019supervised} and supervised PCA \cite{ritchie2020supervised}. 

\paragraph{Contributions}

In spite of vast literature on SMF, due to the high non-convexity of the associated optimization problem (see \eqref{eq:SMF_main}), 
algorithms for SMF mostly lack rigorous convergence analysis and there has not been any algorithm that provably converges to a global minimizer of the objective at an exponential rate. We summarize our contributions below.

\begin{description}[itemsep=0cm, leftmargin=0.5cm]
	\item{$\bullet$} We formulate a general class of SMF-type  models (including both the feature- and the filter-based ones) with high-dimensional features as well as low-dimensional auxiliary features (see \eqref{eq:SMF_main}).  
	
	\item{$\bullet$} We provide a novel framework that `lifts' SMF as a low-rank matrix estimation problem in a combined factor space and propose an efficient algorithm that converges exponentially fast to a global minimizer of the objective with an arbitrary initialization (Theorem \ref{thm:SMF_LPGD})
	. We numerically validate our theoretical results (see Fig. \ref{fig:benchmark_MNIST}).

	\item{$\bullet$} We theoretically compare the robustness of filter-based and feature-based SMF, establishing that the former is computationally more robust (see Theorem \ref{thm:SMF_LPGD}) 
	while the latter is statistically more robust (see Theorem \ref{thm:SMF_LPGD_STAT}).
	
	\item{$\bullet$} Applying our method to microarray datasets for cancer classification, we show that not only it is competitive against benchmark methods, but it is able to identify groups of genes including well-known cancer-associated genes (see Fig. \ref{fig:pancreatic_cancer}).
\end{description}

\subsection{Notations}
\label{subsection:notation}

Throughout this paper, we denote by $\R^{p}$ the ambient space for data equipped with standard inner project $\langle\cdot, \cdot \rangle$ that induces the Euclidean norm $\lVert \cdot \rVert$. We denote by  $\{ 0,1,\dots,\kappa \}$ the space of class labels with $\kappa+1$ classes.   For a convex subset $\Param$ in an Euclidean space, we denote $\Pi_{\Param}$ the projection operator onto $\Param$. For an integer $r\ge 1$, we denote by $\Pi_{r}$ the rank-$r$ projection operator for matrices. 
For a matrix $\A=(a_{ij})_{ij}\in \R^{m\times n}$, we denote its Frobenius, operator (2-), and supremum norm by $\lVert \A \rVert_{F}^{2} := \sum_{i,j} a_{ij}^{2},  \lVert \A \rVert_{2} := \sup_{\x\in \R^{n},\, \lVert \x \rVert=1} \, \lVert \A\x \rVert,  \lVert \A \rVert_{\infty}:= \max_{i,j} |a_{ij}|$, 
respectively. For each $1\le i \le m$ and $1\le j \le n$, we denote $\A[i,:]$ and $\A[:,j]$ for the $i$th row and the $j$th column of $\A$, respectively. For each integer $n\ge 1$, $\I_{n}$ denotes the $n\times n$ identity matrix. For square symmetric matrices  $\A,\B\in \R^{n\times n}$, we denote $\A\preceq  \B$ if $\v^{T}\A\v \le \v^{T}\B\v $ for all unit vectors $\v\in \R^{n}$.	
For two matrices $\A$ and $\B$, we denote $[\A, \B]$ and $[\A \parallel \B]$ the matrices obtained by concatenating (stacking) them by horizontally and vertically, respectively, assuming matching dimensions.

\subsection{Model setup}\label{subsection:SMF1}

Suppose we are given with $n$ labeled signals $(y_{i}, \x_{i},\x_{i}')$ for $i=1,\dots, n$, where $y_{i} \in \{ 0,1,\dots,\kappa \}$ is the label, $\x_{i}\in \R^{p}$ is a high-dimensional feature of $i$, and $\x_{i}'\in \R^{q}$ is a low-dimensional auxiliary feature of $i$ ($p\gg q$). For a vivid context, think of $\x_{i}$ as the X-ray image of a patient $i$ and $\x_{i}'$ denoting some biological measurements, such as gender, smoking status, and body mass index. When making predictions of $y_{i}$, we use a suitable $r\, (\ll p)$ dimensional compression of the high-dimensional feature $\x_{i}$ as well as the low-dimensional feature $\x_{i}'$ as-is. We assume such compression is done by some  matrix of \textit{(latent) factors} 
$\W=[\w_{1},\dots,\w_{r}]\in \R^{p\times r}$ that is \textit{reconstructive} in the sense that the observed signals $\x_{i}$ can be 
reconstructed as (or approximated by) the linear transform of the `atoms' $\w_{1},\dots,\w_{r}\in \R^{p}$ for some suitable `code' $\h_{i}\in \R^{r}$. More concisely, $\X_{\textup{data}}=[\x_{1},\dots,\x_{n}]\approx \W \H$, where $\H=[\h_{1},\dots, \h_{n}]\in \R^{r\times n}$. In practice, we can choose $r$ to be the approximate rank of data matrix $\X_{\textup{data}}$ (e.g., by finding the elbow of the scree plot).

\begin{figure}[h!]
	\centering
	\includegraphics[width=1\linewidth]{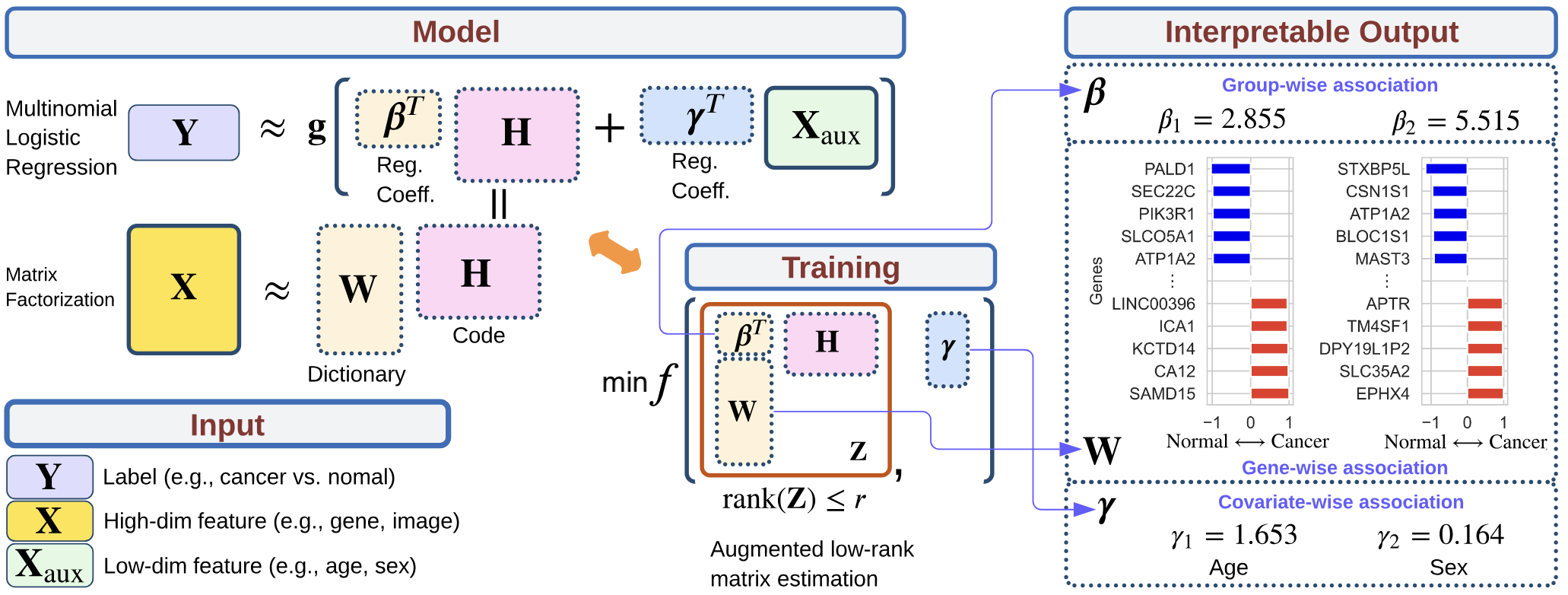} 
	\vspace{-0.5cm}
	\caption{Overall scheme of the proposed method for SMF-$\H$.}
	\label{fig:flowchart}
\end{figure}

Now, we state our probabilistic modeling assumption. Fix parameters $\W\in \R^{p\times r}$, $\h_{i}\in \R^{r}$, $\Beta\in \R^{r\times \kappa}$, and $\bgamma\in \R^{q\times \kappa}$. Let $h:\R\rightarrow [0,\infty)$ be a \textit{score function} (e.g., $h(\cdot)=\exp(\cdot)$ for multinomial logistic regression). We assume $y_{i}$ is a realization of a random variable whose conditional distribution is specified as 
\begin{align}\label{eq:SMF_model}
	\left[ \P\left( y_{i}=0\,|\, \x_{i}, \x_{i}'\right),\dots,  \P\left( y_{i}=\kappa\,|\, \x_{i}, \x_{i}'\right)\right] = \mathbf{g}(\a_{i}) := C [1, h(\a_{i,1}),\dots, h(\a_{i,\kappa})], 
\end{align}
where $C$ is the normalization constant and $\a_{i}=(\a_{i,1},\dots,\a_{i,\kappa})\in \R^{\kappa}$ is the \textit{activation} for $y_{i}$ defined in two ways, depending on whether we use a `feature-based' or `filter-based' SMF model:
\begin{align}\label{eq:activation}
	&\a_{i}  = \begin{cases}
		\Beta^{T} \h_{i} + \bgamma^{T} \x'_{i} & \textup{for feature-based (SMF-$\H$)},\\
		\Beta^{T} \W^{T} \x_{i} + \bgamma^{T} \x'_{i} & \textup{for filter-based (SMF-$\W$)}. 
	\end{cases}
\end{align}
One may regard $(\Beta, \bgamma)$ as the `multinomial regression coefficients' with input feature $(\h_{i}, \x_{i}')$ or $(\W^{T}\x_{i},\x_{i}')$. In \eqref{eq:activation}, we may regard the code $\h_{i}$ (coming from $\x_{i}\approx \W \h_{i}$) or the `filtered signal' $\W^{T}\x_{i}$ as the $r$-dimensional compression of $\x_{i}$. Note that these two coincide if we have perfect factorization $\x_{i}=\W \h_{i}$ and the factor matrix $\W$ are orthonormal, i.e., $\W^{T}\W=\I_{r}$, but we do not necessarily make such an assumption. 

There are some notable differences between SMF-$\H$ and SMF-$\W$ when predicting the unknown label of a test point.     If we are given a test point $(\x_{\textup{test}}, \x_{\textup{test}}')$, the predictive probabilities for its unknown label $y_{\textup{test}}$ is given by \eqref{eq:SMF_model} with activation $\a$ computed as in \eqref{eq:activation}. 
This only involves straightforward matrix multiplications for SMF-$\W$, which can also be viewed as a forward propagation in a multilayer perceptron \cite{murtagh1991multilayer} with $\W$ acting as the first layer weight matrix (hence named `filter'). However, for SMF-$\H$, one needs to solve additional optimization problems for testing. Namely, for every single test signal $(\x_{\textup{test}},\x_{\textup{test}}')$, its correct code representation $\h_{\textup{test}}$ needs to be learned by solving the following `supervised sparse coding' problem (see \cite{mairal2008supervised}): 
\begin{align}\label{eq:supervised_sparse_coding}
	\min_{y\in \{0,1,\dots,\kappa\} }  \min_{\h}\, \ell(y, \Beta^{T}\h + \bgamma^{T}\x_{\textup{test}}' ) + \xi \lVert \x_{\textup{test}} - \W \h \rVert_{F}^{2}. 
\end{align}
A more efficient heuristic testing method for SMF-$\H$ is by approximately computing $\h_{\textup{test}}$ by only minimizing the second term in \eqref{eq:supervised_sparse_coding}.

In order to estimate the model parameters $(\W, \H, \Beta,\bgamma)$ from observed training data $(\x_{i},y_{i})$  for $i=1,\dots, n$, we consider the following multi-objective optimization problem: 
\begin{align}\label{eq:SMF_main}
	\min_{\W,\H,\Beta,\bgamma } \,\, &   \sum_{i=1}^{n}  \ell(y_{i},\a_{i})  +   \xi \lVert \X_{\textup{data}} - \W\H\rVert_{F}^{2},
\end{align}
where $\X_{\textup{data}}=[\x_{1},\dots,\x_{n}]\in \R^{p\times n}$, $\a_{i}$ is as in \eqref{eq:activation}, and $\ell(\cdot)$ is the classification loss measured by the negative log-likelihood: 
\begin{align}\label{eq:loglikelihood}
	\ell(y, \a):=\log \left( 1+\sum_{c=1}^{\kappa} h(a_{c}) \right)   - \sum_{c=1}^{\kappa}\mathbf{1}_{\{y=c\} }\log h(a_{c}).
\end{align}
In \eqref{eq:SMF_main}, the \textit{tuning parameter} $\xi$ controls the trade-off between the two objectives of classification and matrix factorization. The above is a nonconvex problem involving four blocks of parameters that could have additional constraints (e.g., bounded norm).  
This problem entails many classical models as special cases. When 
$\xi\gg 1$ so that effectively only the second term in \eqref{eq:SMF_main} is being minimized with respect to $\W$ and $\H$, it becomes the classical matrix factorization problem \cite{mairal2010online, mairal2013optimization,mairal2013stochastic}, where one seeks to find factor matrix $\W$ that can best reconstruct the feature vectors $\X_{\textup{data}}$ via the factorization $\X_{\textup{data}}\approx \W \H$.
In Figure \ref{fig:pancreatic_cancer}, we will demonstrate that the best reconstructive factor matrix could be significantly different from the supervised factor matrix learned by SMF and may not be very effective for classification tasks.

\subsection{Related works}
The SMF training problem \eqref{eq:SMF_main} is a nonconvex and possibly constrained optimization problem, generally with non-unique minimizers.  Since it is difficult to solve exactly, approximate procedures such as block coordinate descent (BCD)  (see, e.g., \cite{wright2015coordinate}) are often used. Such methods utilize the fact that the objective function in \eqref{eq:SMF_main} is convex in each of the four (matrix) variables. Such an algorithm proceeds by iteratively optimizing for only one block while fixing the others (see \cite{mairal2008supervised, austin2018fully, leuschner2019supervised, ritchie2020supervised}). However, convergence analysis or statistical estimation bounds of such algorithms are quite limited. Appealing to general convergence results for BCD methods (e.g., \cite{grippo2000convergence, xu2013block}), one can at most guarantee asymptotic convergence to the  stationary points or polynomial 
convergence to Nash equilibria or of the objective \eqref{eq:SMF_main}, modulo carefully verifying the assumptions of these general results.
We also remark that \cite{mairal2011task} provided a rigorous justification of the differentiability of a feature-based SMF model. 

The main finding of our work is 
that the non-convexity of the SMF problem \eqref{eq:SMF_main} is `benign', in the sense that \textit{there exists an algorithm globally convergent to a global optimum at an exponential rate.} We use a `double-lifting' technique that converts the nonconvex SMF problem \eqref{eq:SMF_main} into a low-rank factored estimation with a convex objective. This is reminiscent of the tight relation between a low-rank matrix estimation and a nonconvex factored estimation problem, which has been actively employed in a body of works in statistics and optimization \cite{agarwal2010fast, ravikumar2011high, negahban2011estimation, zheng2015convergent, tu2016low, wang2017unified,park2017non, park2018finding, tong2021accelerating}. Our exponentially convergent SMF algorithms are versions of low-rank projected gradient descent in the algorithm \eqref{eq:LRPGD_iterate0} that operate in the double-lifted space.

\section{Methods}

\subsection{Sketch of key idea}
\label{subsection:sketch}

Our key idea to solve \eqref{eq:SMF_main} is to transform it into a variant of the low-rank matrix estimation problem \eqref{eq:SMF_regression_H3} and then use a \textit{Low-rank Projected Gradient Descent} (LPGD) algorithm~\eqref{eq:LRPGD_iterate_main}:
\begin{minipage}[t]{.35\textwidth}
	\vspace{-0.2cm}
	\begin{align}\label{eq:SMF_regression_H3}
		\min_{\bZ=[\param,\bgamma]\in \Param,\, \rank(\param)\le r}
		f\left( \bZ \right) \quad 
	\end{align}
\end{minipage}
\begin{minipage}[t]{.65\textwidth}
	\vspace{-0.35cm}
	\begin{align}\label{eq:LRPGD_iterate_main}
		\qquad	\bZ_{t} \leftarrow \Pi_{r} \bigg(\Pi_{\Param} \left(\bZ_{t-1} - \tau \nabla f(\bZ_{t-1}) \right) \bigg), \, \textup{$\tau>0$ fixed}.
	\end{align}
\end{minipage}

\noindent In \eqref{eq:SMF_regression_H3}, one seeks to minimize an objective $f$ w.r.t. a paired matrix parameter $\bZ=[\param,\bgamma]$ within a convex constraint set $\Param$ and an additional rank constraint $\rank(\param)\le r$. In \eqref{eq:LRPGD_iterate_main}, $\Pi_{r}$ denotes applying rank-$r$ projection on the first factor $\param$ while keeping $\bgamma$ the same. Problem \eqref{eq:SMF_regression_H3} encompasses a variety of significant problems, such as matrix regression \cite{candes2012exact, negahban2009unified} and matrix completion \cite{rohde2011estimation, negahban2012restricted}. 
For these problems, algorithms of type \eqref{eq:LRPGD_iterate_main} have been studied in \cite{park2018finding,wang2017unified}. 

To illustrate how the SMF problem \eqref{eq:SMF_main} transforms into a low-rank matrix estimation \eqref{eq:SMF_regression_H3}, we consider a much simpler version of SMF-$\H$. That is, instead of combining matrix factorization with multinomial logistic regression for multi-class classification problems, we combine it with \textbf{linear regression}. Thus, the response variable $y$ in this discussion assumes all values in the real line. For additional simplicity, we assume there are no auxiliary features $\X_{\textup{aux}}$.
We seek to solve matrix factorization and linear regression problems simultaneously for data matrix $\X_{\textup{data}}\in \R^{p\times n}$ and response variable $\Y\in \R^{1\times n}$: 
\begin{align}
	\min_{\W, \H, \Beta } \lVert \Y - \Beta^{T}\H  \rVert_{F}^{2} + \xi  \lVert \X_{\textup{data}} - \W \H  \rVert_{F}^{2}   .
\end{align}
This is a three-block optimization problem involving three factors $\W\in \R^{p\times r},\H\in \R^{r\times n}$ and $\Beta\in \R^{r\times 1}$, which is nonconvex and computationally challenging to solve exactly. Instead, consider reformulating this nonconvex problem as the following matrix factorization problem:
\begin{align}\label{eq:SMF_regression_H1}
	\min_{\W, \H, \Beta } f\left( \begin{bmatrix} \Beta^{T} \\  \W  \end{bmatrix} \H \right)
	:=
	\left\lVert 
	\begin{bmatrix}
		\Y \\
		\sqrt{\xi} \X_{\textup{data}}
	\end{bmatrix}
	- 
	\begin{bmatrix}
		\Beta^{T}\\
		\sqrt{\xi} 	\W 
	\end{bmatrix}
	\H \right\rVert_{F}^{2}.
\end{align}
Indeed, we now seek to find \textit{two} decoupled matrices (instead of three), one for $\Beta^{T}$ and $ \W$ stacked vertically, and the other for $\H$. A similar idea of matrix stacking was used in \cite{zhang2010discriminative} for discriminative K-SVD. Proceeding one step further, another important observation we make is that it is also equivalent to finding a \textit{single} matrix $\param:=\begin{bmatrix} \Beta^{T}\H \parallel  \W\H \end{bmatrix}\in \R^{(1+p)\times n}$ of rank at most $r$ that minimizes the function $f$ in  \eqref{eq:SMF_regression_H1}, which is convex (specifically, quadratic) in $\param$: (See Fig. \ref{fig:flowchart} Training).

For SMF-W, consider the following analogous  linear regression model:
\begin{align}\label{eq:SMF_regression_W1}
	\min_{\W, \H, \Beta } f\left( \W [\Beta,  \H  ] \right):= \begin{matrix} 
		\lVert \Y -  \Beta^{T}  \W^{T}\X_{\textup{data}} \rVert_{F}^{2} 
		+ \xi  \lVert \X_{\textup{data}} - \W \H  \rVert_{F}^{2},
	\end{matrix}
\end{align}
where the right-hand side above is obtained by replacing $\H$ with $\W^{T}\X_{\textup{data}}$ in \eqref{eq:SMF_regression_H1}. 
Note that the objective function depends only on the product of the two matrices $\W$ and $[\Beta,\H]$. Then, we may further lift it as the low-rank matrix estimation problem by seeking a single matrix $\param:=[\W\Beta,\, \W\H]\in \R^{p\times (1+n)}$ of rank at most $r$ that solves \eqref{eq:SMF_regression_H3} with $f$ being the function in \eqref{eq:SMF_regression_W1}.

\subsection{Algorithm}
\label{sec:Algorithm}

Motivated by the observation we made before, we rewrite SMF-$\H$ in \eqref{eq:SMF_main} as 
\begin{align}\label{eq:SMF_feat_CALE0}
	\min_{\substack{[\param, \bgamma] \in \Param \\  \rank(\param)\le r}}	 F	\left(\param,\, \bgamma \right)
	&:=    \sum_{i=1}^{n} \ell(y_{i}, \A[:,i]  + \bgamma^{T}\x_{i}' )   +   \xi \lVert  \X_{\textup{data}}  -\B\rVert_{F}^{2} +  \lambda \left( \lVert \A \rVert_{F}^{2} + \lVert \bgamma \rVert_{F}^{2}  \right),   
\end{align}
where $\A = \Beta^{T}\H$, $\B = \W \H$, $\param = [\A \,\Vert\, \B] \in \R^{(\kappa+p)\times n}$, and $\Param$ is a convex subset of $\R^{(\kappa+p)\times n}\times \R^{q\times \kappa}$. 
We have added a $L_{2}$-regularization term for $\A$ and $ \bgamma$ with coefficient $\lambda\ge 0$, which will play a crucial role in well-conditioning \eqref{eq:SMF_feat_CALE0}. 

For solving \eqref{eq:SMF_feat_CALE0}, we propose to use the LGPD algorithm \eqref{eq:LRPGD_iterate_main}:  \textit{We iterate gradient descent followed by projecting onto the convex constraint set $\Param$ of the combined factor $[\param, \bgamma]$ and then perform rank-$r$ projection of the first factor $\param=[\A \, \Vert\, \B]$ via truncated SVD until convergence.} Once we have a solution $[\param^{\star},\bgamma^{\star}]$ to \eqref{eq:SMF_feat_CALE0}, we can use SVD of $\param^{\star}$ to obtain a solution to \eqref{eq:SMF_main}. Let $\param^{\star} = \U \bSigma \V^{T}$ denote the SVD of $\param$. Since $\rank(\param^{\star})\le r$, $\bSigma$ is an $r\times r$ diagonal matrix of singular values of $\param$. Then $\U\in \R^{(\kappa+p)\times r}$ and $\V\in \R^{n\times r}$ are semi-orthonormal matrices, that is, $\U^{T}\U = \V^{T}\V=\I_{r}$. Then since $\param^{\star}=[(\Beta^{\star})^{T} \, \Vert\, \W^{\star}] \H^{\star}$, we can take $\H^{\star}=\bSigma^{1/2}\V^{T}$ and  $[(\Beta^{\star})^{T} \, \Vert\,\W^{\star}]= \U \bSigma^{1/2}$. 

We summarize this approach of solving \eqref{eq:SMF_main} for SMF-$\H$ in Algorithm \ref{alg:SMF_PGD}. Here, $\textup{SVD}_{r}$ denotes rank-$r$ truncated SVD and the projection operators $\Pi_{\Param}$ and $\Pi_{r}$ are defined in Subsection \ref{subsection:notation}.

\begin{algorithm}[h!]
	\caption{Lifted PGD for SMF}
	\label{alg:SMF_PGD}
	\begin{algorithmic}
		\State {\bfseries Input:} $\X_{\textup{data}}\in \R^{p\times n}$ ;\,\,$\X'_{\textup{aux}}\in \R^{q\times n}$ (auxiliary features);\,\, $\Y_{\textup{label}}\in \{0,1,\dots,\kappa\}^{n}$ 
		\State {\bfseries Parameters:} $\tau>0$ (stepsize);\,\, $N\in \mathbb{N}$ (iterations); $r\ge 1$ (rank); $\lambda\ge 0$ ($L_{2}$-reg. param.)
		\State {\bfseries Constraints:} Convex  $\Param\subseteq \R^{(\kappa+p)\times n}\times \R^{q\times \kappa}$ for SMF-$\H$, $\Param\subseteq \R^{p\times (\kappa+n)}\times  \R^{q\times \kappa}$ for SMF-$\W$; 
		\State Initialize $\W_{0}\in \R^{p\times r}$, $\H_{0}\in \R^{r\times n}$, $\Beta_{0}\in \R^{r\times \kappa}$, $\bgamma_{0}\in \R^{q\times \kappa}$
		\State $\begin{cases}
			\param_{0}\leftarrow [\Beta_{0}^{T}\H_{0} \parallel \W_{0}\H_{0}]\in \R^{(\kappa+p)\times n} & \hspace{-0.1cm} \textup{($\triangleright$ for SMF-$\H$)} \\
			\param_{0}\leftarrow [\W_{0}\Beta_{0}, \W_{0}\H_{0}]\in \R^{p\times (\kappa+n)} &
			\hspace{-0.1cm}  \textup{($\triangleright$ for SMF-$\W$)}
		\end{cases}$
		\For{$k=1$ {\bfseries to} $N$} 
		\State $\param_{k}\leftarrow \Pi_{r}\left( \Pi_{\Param} \left( \param_{k-1} - \tau \nabla_{\param}  F(\param_{k-1}, \bgamma_{k-1})  \right)\right) $ \qquad \textup{($\triangleright$ See Appendix \ref{sec:gradient_computation} for computation)}
		\State $\bgamma_{k} \leftarrow  \bgamma_{k-1} - \tau \nabla _{\bgamma}  F(\param_{k-1}, \bgamma_{k-1}) $
		\EndFor
		\State  $\param_{N}=\U \bSigma \V^{T} $ \qquad ($\triangleright$ rank-$r$ SVD)
		\State 
		$\begin{cases}
			[\Beta_{N}^{T} \parallel \W_{N}] \leftarrow \U \bSigma^{1/2},\, \H_{N} \leftarrow(\bSigma)^{1/2}\V^{T} & \hspace{-0.2cm}\textup{($\triangleright$ SMF-$\H$)}\\
			\W_{N} \leftarrow \U,\, [\Beta_{N},\,\, \H_{N}] \leftarrow \bSigma \V^{T} &\hspace{-0.2cm} \textup{($\triangleright$ SMF-$\W$)}
		\end{cases}$
		
		\State \textbf{Output:} $(\W_{N}, \H_{N}, \Beta_{N}, \bgamma_{N})$
	\end{algorithmic}
\end{algorithm}

As for SMF-$\W$, we can rewrite \eqref{eq:SMF_main} with additional $L_{2}$-regularizer for $\A=\W\Beta$ and $\bgamma$ as 
\begin{align}\label{eq:SMF_filt_CALE0}
	\min_{\substack{[\param, \bgamma] \in \Param \\  \rank(\param)\le r}}	    F\left(\param,\, \bgamma \right) 
	=   \sum_{i=1}^{n} \ell(y_{i}, \A^{T} \x_{i} + \bgamma^{T}\x_{i}' )  +   \xi \lVert  \X_{\textup{data}}  -\B\rVert_{F}^{2} + \lambda \left( \lVert \A \rVert_{F}^{2} + \lVert \bgamma \rVert_{F}^{2}  \right),  
\end{align}
where $\param=[\A,\B]=\W[\Beta,\H]\in \R^{p\times (\kappa+n)}$ and $\Param\in \R^{p\times (\kappa+n)}\times \R^{q\times \kappa}$ is a convex set. Algorithm \ref{alg:SMF_PGD} for SMF-$\W$ follows similar reasoning as before with the reformulation above.

By using randomized truncated SVD for the efficient low-rank projection in Algorithm \ref{alg:SMF_PGD}, the per-iteration complexity is $O(pn\min(n,p))$, while that for the nonconvex algorithm is $O((pr+q)n)$. While the LPGD algorithm is in general more expensive per iteration than the nonconvex method, the iteration complexity is only $O(\log \epsilon^{-1})$ thanks to the exponential convergence to the global optimum (will be discussed in Theorem \ref{thm:SMF_LPGD}). To our best knowledge, the nonconvex algorithm for SMF does not have any guarantee to converge to a global optimum, and the iteration complexity of the nonconvex SMF method to reach an $\epsilon$-stationary point is at best $O(\epsilon^{-1})$ using standard analysis. Hence for $\epsilon$ small enough, Algorithm \ref{alg:SMF_PGD} achieves an $\epsilon$-accurate global optimum for SMF with a total computational cost comparable to a nonconvex SMF algorithm to achieve an $\epsilon$-stationary point.


\section{Global convergence guarantee} 
We have discussed that one can cast the SMF problem \eqref{eq:SMF_main} as the following `factored estimation problem' $\min_{\T,\S, \bgamma} f(\T \S^{T},\bgamma)$. Note that such problems generally do not have a unique minimizer due to the `rotation invariance'. Namely, let $\bR$ be any $r\times r$ orthonormal (rotation) matrix (i.e., $\bR^{T}\bR=\bR \bR^{T}=\I_{r}$). Then $f((\T\bR)(\S\bR)^{T}, \bgamma) =  f(\T\bR \bR^{T} \S^{T} , \bgamma) =   f(\T\S^{T} , \bgamma)$. Hence the best one is to obtain parameters up to rotation that globally minimize the objective value. 
Our main result, Theorem \ref{thm:SMF_LPGD}, establishes that this can be achieved by Algorithm \ref{alg:SMF_PGD} at an exponential rate. 
First, we introduce the following technical assumptions (\ref{assumption:A2}-\ref{assumption:A4}). 

\begin{assumption}(Bounded activation)\label{assumption:A2}
	The activation $\a\in \R^{\kappa}$ defined in \eqref{eq:activation} assumes bounded norm, i.e., $\lVert \a \rVert\le M$ for some constant $M\in (0,\infty)$.
\end{assumption}

\begin{assumption}(Bounded eigenvalues of covariance matrix)\label{assumption:A3}
	Denote $\bPhi=[\bphi_{1},\dots,\bphi_{n}]\in \R^{(p+q)\times n}$, where $\bphi_{i} = [\x_{i} \parallel \x_{i}'] \in \R^{p+q}$ (so $\bPhi=[\X_{\textup{data}} \parallel  \X_{\textup{aux}}]$), where $\X_{\textup{aux}}=[\x_{1}',\dots,\x_{n}']$.  Then, there exist constants $\delta^{-},\delta^{+}>0$ such that for all $n\ge 1$, 
	\begin{align}
		\delta^{-} \le \lambda_{\min}( n^{-1} \bPhi \bPhi^{T}) \le \lambda_{\max}( n^{-1}\bPhi \bPhi^{T}) \le \delta^{+}.
	\end{align}
\end{assumption}

\begin{assumption}(Bounded stiffness and eigenvalues of observed information)\label{assumption:A4}
	The score function $h:\R\rightarrow [0,\infty)$  is twice continuously differentiable. Further, let observed information $\ddot{\H}(y,\a):=\nabla_{\a}\nabla_{\a^{T}}\ell(y,\a)$ for $y$ and $\a$.
	Then, for the constant $M>0$ in Assumption \ref{assumption:A2}, there are constants $\gamma_{\max}, \alpha^{-},\alpha^{+}>0$ s.t. $\gamma_{\max}:=\sup_{\lVert \a \rVert \le M} \max_{1\le s \le  n}\, \lVert \nabla_{\a} \ell(y_{s},\a) \rVert_{\infty}$ and 
	\begin{align}
		&	\alpha^{-}:=\inf_{\lVert \a \rVert \le M} \min_{1\le s \le  n}\, \lambda_{\min}(\ddot{\H}(y_{s},\a)), \quad \alpha^{+}:=\sup_{\lVert \a \rVert \le M} \max_{1\le s \le  n}\, \lambda_{\max}(\ddot{\H}(y_{s},\a)).
	\end{align}
\end{assumption}
Assumption \ref{assumption:A2} limits the norm of the activation $\a$ as an input for the classification model in \eqref{eq:SMF_main} is bounded. This is standard in the literature (see, e.g., \cite{negahban2011estimation, yaskov2016controlling, lecue2017sparse}) in order to uniformly bound the eigenvalues of the Hessian of the (multinomial) logistic regression model. Assumption \ref{assumption:A3} introduces uniform bounds on the eigenvalues of the covariance matrix. Assumption \ref{assumption:A4} introduces uniform bounds on the eigenvalues of the $\kappa\times \kappa$ observed information as well as the first derivative of the predictive probability distribution (see \cite{bohning1992multinomial} and Sec. \ref{sec:thm_proofs} in Appendix for more details). 
Under Assumption \ref{assumption:A2} and the multinomial logistic regression model $h(\cdot)=\exp(\cdot)$, one can derive Assumption \ref{assumption:A4} with a simple expression for the bounds $\alpha^{\pm}$, as discussed in the following remark.

\begin{remark}[Multinomial Logistic Classifier]
	\label{rmk:MNL_constants}
	Let $\ell$ denote the negative log-likelihood function in \eqref{eq:loglikelihood}, where we take the multinomial logistic model with the score function $h(\cdot)=\exp(\cdot)$. Denote $(\dot{h}_{1},\dots,\dot{h}_{\kappa}):=\nabla_{\a} \ell(y,\a)$ and  $\ddot{\H}(y,\a):=\nabla_{\a} \nabla_{\a^{T}} \ell(y,\a)$. Then in this special case,  we have $\dot{h}_{j}(y,\a)= g_{j}(\a) - \mathbf{1}(y=j)$ and $\ddot{H}(y,\a)_{i,j} = g_{i}(\a) \left(\mathbf{1}(i=j)-g_{j}(\a) \right)$ (See \eqref{eq:MNL_h_def_appendix} and \eqref{eq:MNL_H_def_appendix} in Appendix). 
	Under Assumption \ref{assumption:A2}, according to Lemma \ref{lem:MNL}, we can take $\gamma_{\max}=1+\frac{ e^{M} }{1+ e^{M} + (\kappa-1) e^{-M}}\le 2$, $\alpha^{-} = \frac{e^{-M}}{1+e^{-M}+(\kappa-1) e^{M}},$ and $ \alpha^{+} = \frac{e^{M} \left(1+2(\kappa-1)e^{M} \right) }{\left( 1+e^{M}+(\kappa-1) e^{-M} \right)^{2}}$.
	For binary classification, $\alpha^{+}\le 1/4$. 
\end{remark}

Now define the following quantities: \begin{align}\label{eq:thm1_condition_numbers}
	\mu:=\begin{cases}
		\min(2\xi,\,2\lambda+n\delta^{-}\alpha^{-}) \\
		\min(2\xi,\,2\lambda)
	\end{cases},
	\,\,  
	L&:=\begin{cases}
		\max(2\xi,\, 2\lambda+ n \delta^{+}\alpha^{+})  & \textup{for SMF-}\W \\
		\max( 2\xi, \, 2\lambda+\alpha^{+}) & \textup{for SMF-$\H$}
	\end{cases}.
\end{align}

Now, we state a special case of our first main result, specifically when the model is `correctly specified', allowing the rank-$r$ SMF model to effectively account for the observed data. This implies the existence of a `low-rank stationary point' of $F$, as also demonstrated in \cite{wang2017unified}. However, we also handle the general case in Appendix (see Theorem \ref{thm:SMF_LPGD_full}).

\begin{theorem}(Exponential convergence)\label{thm:SMF_LPGD}
	Let $\bZ_{t}:=[\param_{t}, \bgamma_{t}]$ denote the iterates of Algorithm \ref{alg:SMF_PGD}. Assume \ref{assumption:A2}-\ref{assumption:A4} hold. Let $\mu$ and $L$ be as in \eqref{eq:thm1_condition_numbers}, fix  $\tau\in ( \frac{1}{2\mu}, \frac{3}{2L})$, and let $\rho:=2(1-\tau\mu) \in (0,1)$. Suppose $L/\mu < 3$ and let $\bZ^{*}=[\param^{*}, \bgamma^{*}]$ be any stationary point of $F$ over $\Param$ s.t. $\rank(\param^{*})\le r$. Then $\bZ^{*}$ is the unique global minimizer of $F$ among all $\bZ=[\param,\bgamma]$ with $\rank(\param)\le r$. Moreover, $\lVert \bZ_{t} - \bZ^{*}  \rVert_{F}\le  \rho^{t}  \, \lVert  \bZ_{0} - \bZ^{*}\rVert_{F}$ for $t\ge 1$. 
\end{theorem}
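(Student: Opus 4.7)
The plan is to combine a strong-convexity/smoothness analysis of $F$ on $\Param$ with a simple ``$2$-Lipschitz'' bound on the rank-$r$ truncation, exploiting that $\bZ^{*}$ is simultaneously a fixed point of the convex PGD step and of the rank-projection step. First I would verify that $F$ is $\mu$-strongly convex and $L$-smooth with the constants in \eqref{eq:thm1_condition_numbers}. Writing $F$ as the sum of the classification loss $\sum_{i}\ell(y_{i},\a_{i})$, the reconstruction $\xi\lVert\X_{\textup{data}}-\B\rVert_{F}^{2}$, and the Tikhonov penalty, the last two contribute $2\xi\I$ on the $\B$-block and $2\lambda\I$ on the $\A$- and $\bgamma$-blocks of the Hessian directly. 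For the classification term I apply the chain rule: $\nabla^{2}\ell(y_{i},\a_{i})=J_{i}^{T}\ddot{\H}(y_{i},\a_{i})J_{i}$, where $J_{i}$ is the Jacobian of $\a_{i}$ with respect to the parameters. Assumption \ref{assumption:A2} keeps $\lVert\a_{i}\rVert\le M$, so Assumption \ref{assumption:A4} delivers $\alpha^{-}\I\preceq\ddot{\H}(y_{i},\a_{i})\preceq\alpha^{+}\I$. For SMF-$\W$ the Jacobians accumulate to Gram-type sums $\sum_{i}\x_{i}\x_{i}^{T}$ which Assumption \ref{assumption:A3} pinches between $n\delta^{-}\I$ and $n\delta^{+}\I$; for SMF-$\H$ the $\A$-Hessian is instead block-diagonal across $i$, since $\a_{i}=\A[:,i]+\bgamma^{T}\x_{i}'$. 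Taking block-wise minima and maxima reproduces $\mu$ and $L$.

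Second, I would apply the elementary rank-projection lemma: for any matrix $M$ and any $P$ with $\rank(P)\le r$, since $\Pi_{r}(M)$ is the best rank-$r$ Frobenius approximation of $M$ and $P$ is a feasible alternative, $\lVert\Pi_{r}(M)-M\rVert_{F}\le\lVert P-M\rVert_{F}$, and the triangle inequality gives $\lVert\Pi_{r}(M)-P\rVert_{F}\le 2\lVert M-P\rVert_{F}$. Applied block-wise (noting $\Pi_{r}$ acts only on the $\param$-block while $\rank(\param^{*})\le r$) to $M=\Pi_{\Param}(\bZ_{t-1}-\tau\nabla F(\bZ_{t-1}))$ and $P=\bZ^{*}$, this yields $\lVert\bZ_{t}-\bZ^{*}\rVert_{F}\le 2\lVert M-\bZ^{*}\rVert_{F}$. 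Since $\bZ^{*}$ is stationary for $F$ on $\Param$, it is a fixed point of the convex PGD step: $\bZ^{*}=\Pi_{\Param}(\bZ^{*}-\tau\nabla F(\bZ^{*}))$. Non-expansiveness of $\Pi_{\Param}$ combined with the classical one-step bound
\[
\lVert(\bZ_{t-1}-\bZ^{*})-\tau(\nabla F(\bZ_{t-1})-\nabla F(\bZ^{*}))\rVert_{F}\le (1-\tau\mu)\,\lVert\bZ_{t-1}-\bZ^{*}\rVert_{F},
\]
valid in the stated step-size window under $L/\mu<3$ via the integrated-Hessian representation $\nabla F(\bZ_{t-1})-\nabla F(\bZ^{*})=\mathcal{A}(\bZ_{t-1}-\bZ^{*})$ with $\mu\I\preceq\mathcal{A}\preceq L\I$, then yields $\lVert\bZ_{t}-\bZ^{*}\rVert_{F}\le\rho\,\lVert\bZ_{t-1}-\bZ^{*}\rVert_{F}$ with $\rho=2(1-\tau\mu)<1$, and induction on $t$ supplies the claimed rate.

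Uniqueness follows at once: any second rank-$\le r$ stationary point $\tilde\bZ$ is also a fixed point of the LPGD map, so the contraction forces $\lVert\tilde\bZ-\bZ^{*}\rVert_{F}\le\rho\lVert\tilde\bZ-\bZ^{*}\rVert_{F}$, which together with $\rho<1$ gives $\tilde\bZ=\bZ^{*}$. The main obstacle I anticipate is the Hessian bookkeeping---especially in the SMF-$\H$ case where the $\A$-, $\B$-, and $\bgamma$-blocks contribute asymmetrically---to combine the regularizer, reconstruction, and classification pieces so as to isolate the exact scalar constants $\mu$ and $L$ in \eqref{eq:thm1_condition_numbers}; once this is done, the remainder is a streamlined application of the standard convex-PGD contraction coupled with the rank-projection inequality.
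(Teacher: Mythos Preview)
Your approach is correct and follows the same two-stage architecture as the paper---verify that $F$ is (restricted) $\mu$-strongly convex and $L$-smooth via the Hessian computation, then feed this into a contraction argument for the LPGD iteration---but your contraction argument is more elementary. Where the paper constructs an auxiliary rank-$3r$ linear subspace $\mathcal{A}$ containing the column/row spaces of $\X^{*},\X_{t-1},\X_{t}$ and routes the iterate through the linear projection $\Pi_{\mathcal{A}}$ (their Lemma~\ref{lem:rank_r_lin_appx}), you bypass this entirely with the direct Eckart--Young bound $\lVert\Pi_{r}(M)-P\rVert_{F}\le 2\lVert M-P\rVert_{F}$ for any rank-$\le r$ matrix $P$. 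Both routes reach the same $2\eta$ contraction factor in the correctly specified case. The paper's subspace construction is not superfluous, however: it is what delivers the $\sqrt{3r}\,\lVert\Delta\param^{\star}\rVert_{2}$ error term in the \emph{misspecified} case (their Theorem~\ref{thm:SMF_LPGD_full}\textbf{(ii)}), replacing the cruder $\lVert\Delta\param^{\star}\rVert_{F}$ that your direct argument would give. Since the theorem you are asked to prove is precisely the correctly specified case, your shortcut is entirely adequate.

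One small gap: your uniqueness paragraph shows only that $\bZ^{*}$ is the unique rank-$\le r$ \emph{stationary point}, whereas the theorem also asserts it is the unique rank-$\le r$ \emph{global minimizer} of $F$. A global minimizer over the nonconvex rank-constrained set need not be stationary for $F$ over $\Param$, so the fixed-point argument does not immediately apply to it. The cleaner route (and the one the paper takes) is: for any $\bZ=[\param,\bgamma]\in\Param$ with $\rank(\param)\le r$, stationarity of $\bZ^{*}$ gives $\langle\nabla F(\bZ^{*}),\bZ-\bZ^{*}\rangle\ge 0$, and then restricted strong convexity yields $F(\bZ)-F(\bZ^{*})\ge\frac{\mu}{2}\lVert\bZ-\bZ^{*}\rVert_{F}^{2}$, which gives both global optimality and uniqueness in one stroke.
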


In the statement above, we write $\lVert \bZ \rVert_{F}^{2}= \lVert [\param, \bgamma] \rVert_{F}^{2}:=\lVert \param \rVert_{F}^{2} + \lVert \bgamma \rVert_{F}^{2}$. Note that we may view the ratio $L/\mu$ that appears in Theorem \ref{thm:SMF_LPGD} as the condition number of the SMF problem in \eqref{eq:SMF_main}, whereas the ratio $L^{*}/\mu^{*}$ for $\mu^{*}:=\delta^{-}\alpha^{-}$ and $L^{*}:=\delta^{+}\alpha^{+}$ as the condition number for the multinomial classification problem. These two condition numbers are closely related. First, note that for any given $\mu^{*}, L^{*}$ and sample size $n$, we can always make $L/\mu<3$ by choosing sufficiently large $\xi$ and $\lambda$ so that Theorem \ref{thm:SMF_LPGD} holds. However, using large $L_{2}$-regularization parameter $\lambda$ may perturb the original objective in \eqref{eq:SMF_main} too much that the converged solution may not be close to the optimal solution. Hence, we may want to take $\lambda$ as small as possible. Setting $\lambda=0$ leads to 
\begin{align}\label{eq:thm_SMF_LGPD_cond_filt1}
	\frac{L}{\mu}<3,\, \lambda=0	 \,\, \Longleftrightarrow \,\, \begin{cases}
		0 < \frac{L^{*}}{\mu^{*}} < 3,\, \frac{L^{*}}{6}< \frac{\xi}{n} < \frac{3\mu^{*}}{2} & \textup{for SMF-}\W \\
		\frac{\max( 2\xi, \alpha^{+})}{ \min(2\xi,\, 0)} \,\, < \,\, 3 & \textup{for SMF-$\H$}. 
	\end{cases}
\end{align}
For SMF-$\W$, if  the multinomial classification problem is  well-conditioned ($L^{*}/\mu^{*}<3$) and the ratio $\xi/n$ is in the above interval, then SMF-$\W$ enjoys exponential convergence in Theorem \ref{thm:SMF_LPGD}. However, the condition for SMF-$\H$ in \eqref{eq:thm_SMF_LGPD_cond_filt1} is violated, so $L_{2}$-regularization is necessary for guaranteeing exponential convergence of SMF-$\H$.    

The proof of Theorem \ref{thm:SMF_LPGD} involves two steps: (1) We establish a general exponential convergence result for the general LPGD algorithm \eqref{eq:LRPGD_iterate_main} in Theorem \ref{thm:CALE_LPGD} in Appendix. (2) We compute the Hessian eigenvalues of the SMF objectives \eqref{eq:SMF_feat_CALE0}-\eqref{eq:SMF_filt_CALE0} and apply the result to obtain Theorem \ref{thm:SMF_LPGD}. The proof contains two challenges: first, the low-rank projection in \eqref{eq:LRPGD_iterate_main} is not non-expansive in general. To overcome this, we show that the iterates closely approximate certain `auxiliary iterates' which exhibit exponential convergence towards the global optimum. Secondly, the second-order analysis is highly non-trivial since the SMF problem \eqref{eq:SMF_main} has a total of four unknown matrix factors that are intertwined through the joint multi-class classification and matrix factorization tasks. See Appendix \ref{sec:thm_proofs} for the details.



\section{Statistical estimation guarantee}
\label{subsection:statistical_estimation}


In this section, we formulate a generative model for SMF \eqref{eq:SMF_main} and state statistical parameter estimation guarantee. Fix dimensions $p\gg q$, and let $n\ge 1$ be possibly growing sample size, and fix unknown true parameters $\B^{\star}\in \R^{p\times n},\, \C^{\star}\in \R^{q\times n},\,    \bgamma^{\star}\in \R^{q\times \kappa}$. In addition, fix  $\A^{\star}\in \R^{\kappa\times n}$ for SMF-$\H$ and $\A^{\star}\in \R^{p\times \kappa}$ for SMF-$\W$.
Now suppose that class label, data, and auxiliary features are drawn i.i.d. according to the following joint distribution:
\begin{align}\label{eq:SMF_generative}
	\begin{cases}
		&\x_{i} = \B^{\star}[:,i] +\beps_{i}, \quad \x_{i}'=\C^{\star}[:,i]+\beps_{i}', \\ &y_{i}\,|\, \x_{i}, \x_{i}' \sim \text{Multinomial}\big(1, \g\left( \a_{i} \right) \big),
		\\
		&\a_{i} = \begin{cases}
			\A^{\star}[:,i] + (\bgamma^{\star})^{T} \x_{i}' & \textit{SMF-$\H$},\\
			(\A^{\star})^{T}\x_{i} + (\bgamma^{\star})^{T} \x_{i}'  & \textit{SMF-$\W$},
		\end{cases}
	\end{cases}
	,\quad 
	\begin{cases}
		\rank([\A^{\star}\parallel \B^{\star}]) \le r  & \textit{for SMF-$\H$},\\
		\rank([\A^{\star},\B^{\star}])\le r & \textit{for SMF-$\W$}.
	\end{cases}
\end{align}
where each $\beps_{i}$ (resp., $\beps_{i}'$) are  $p\times 1$ (resp., $q\times 1$) vector of i.i.d. mean zero Gaussian entries with variance $\sigma^{2}$ (resp., $(\sigma')^{2}$). We call the above the \textit{generative SMF model}. In what follows, we will assume that the noise levels $\sigma$ and $\sigma'$ are known and focus on estimating the four-parameter matrices.

The ($L_{2}$-regularized) negative log-likelihood of observing triples $(y_{i},\x_{i},\x_{i}')$ for $i=1,\dots, n$ is given as $\L_{n} := F(\A,\B,\bgamma)  + \frac{1}{2(\sigma')^{2}} \lVert \X_{\textup{aux}}-\C \rVert_{F}^{2} + c$,	where $c$ is a constant and $F$ is as in \eqref{eq:SMF_feat_CALE0} or \eqref{eq:SMF_filt_CALE0} depending on the activation type with tuning parameter  $\xi=\frac{1}{2\sigma^{2}}$. The $L_{2}$ regularizer in $F$ can be understood as Gaussian prior for the parameters and interpreting the right-hand side above as the negative logarithm of the posterior distribution function (up to a constant) in a Bayesian framework. 
Note that the problem of estimating $\A$ and $\B$ are coupled due to the low-rank model assumption in \eqref{eq:SMF_generative}, while the problem of estimating $\C$ is standard and separable, so it is not of our interest. The joint estimation problem for $[\A,\B,\bgamma]$ is equivalent to the corresponding SMF problem \eqref{eq:SMF_main} with tuning parameter $\xi=(2\sigma^{2})^{-1}$. This and Theorem \ref{thm:SMF_LPGD} motivate us to estimate the true parameters $\A^{\star}, \B^{\star}$, and $\bgamma^{\star}$ by the output of Algorithm \ref{alg:SMF_PGD} with $\xi=(2\sigma^{2})^{-1} $ for $O(\log n)$ iterations.

Now we give the second main result. 
Roughly speaking, it states that the estimated parameter $\bZ_{t}$ is within the true parameter $\bZ^{\star}=[\A^{\star}, \B^{\star},\bgamma^{\star}]$ within $O(\log n/\sqrt{n})$ with high probability, provided that the noise variance $\sigma^{2}$ is small enough and the SMF objective \eqref{eq:SMF_feat_CALE0}-\eqref{eq:SMF_filt_CALE0} is well-conditioned.

\begin{theorem}(Statistical estimation for SMF)\label{thm:SMF_LPGD_STAT}
	Assume the model \eqref{eq:SMF_generative} with fixed $p$. Suppose Assumptions \ref{assumption:A2}-\ref{assumption:A4} hold. Let $\mu,L$ be as in \eqref{eq:thm1_condition_numbers},  $\rho:=2(1-\tau\mu)$ and $c=O(1)$ if $\bZ^{\star}-\tau \nabla_{\bZ} F (\bZ^{\star})\in \Param$ and $c=O(\sqrt{\min(p,n)})$ otherwise. Let $\bZ_{t}$ denote the iterates of Algorithm \ref{alg:SMF_PGD} with the tuning parameter $\xi= (2\sigma^{2})^{-1}$, $L_{2}$-regularization parameter $\lambda>0$, and stepsize $\tau\in ( \frac{1}{2\mu}, \frac{3}{2L})$. 
	Then following holds with probability at least $1-\frac{1}{n}$: For all $t\ge 1$ and $n\ge 1$, $\lVert \bZ_{t} - \bZ^{\star} \rVert_{F} -  \rho^{t}  \, \lVert  \bZ_{0} - \bZ^{\star}\rVert_{F} \le  c\frac{(\sqrt{n}\log n+\lambda)}{\mu } $,
	provided  $L/\mu<3$. Furthermore, $c\frac{(\sqrt{n}\log n+\lambda)}{\mu }$ is $O(\log n/\sqrt{n})$ if  $\bZ^{\star}-\tau \nabla_{\bZ} F (\bZ^{\star})\in \Param$ and $\sigma^{2}=O(1/n)$.
\end{theorem}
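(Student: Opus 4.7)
The plan is to invoke the \emph{perturbed} form of Theorem \ref{thm:SMF_LPGD} (the general case \ref{thm:SMF_LPGD_full} of the Appendix), which allows the target in the contraction bound to be \emph{any} low-rank point $\bZ^{\star}$ with $\rank(\param^{\star})\le r$ (not necessarily a stationary point of $F$), and then to control the resulting stochastic bias at the true generative parameter via non-asymptotic concentration. Concretely, I would plug $\bZ^{\star}=[\param^{\star},\bgamma^{\star}]$ from \eqref{eq:SMF_generative} (for which $\rank(\param^{\star})\le r$ by construction) into the perturbed bound, obtaining the deterministic inequality
\begin{align*}
    \lVert \bZ_{t}-\bZ^{\star}\rVert_{F}\le \rho^{t}\lVert \bZ_{0}-\bZ^{\star}\rVert_{F}+\frac{\tau}{1-\rho}\Bigl(\sqrt{3r}\,\lVert \Delta\param^{\star}\rVert_{2}+\lVert \nabla_{\bgamma}F(\bZ^{\star})\rVert_{F}\Bigr),
\end{align*}
where $\Delta\param^{\star}:=\tau^{-1}\bigl(\param^{\star}-\Pi_{\Param}(\param^{\star}-\tau\nabla_{\param}F(\bZ^{\star}))\bigr)$. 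Since $\rho=2(1-\tau\mu)<1$ under $L/\mu<3$, it remains to bound the two residual norms in high probability, using that $\tau/(1-\rho)=\Theta(1/\mu)$.

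Next, I would compute $\nabla F(\bZ^{\star})$ in closed form and split it into a classification term, a reconstruction term, and an $L_{2}$-regularizer contribution. Correct specification of the generative model \eqref{eq:SMF_generative} implies that the per-sample classification score $\nabla_{\a}\ell(y_{i},\a_{i}^{\star})$ has zero conditional mean given $(\x_{i},\x_{i}')$; combined with the uniform bound $\lVert \nabla_{\a}\ell(y_{s},\a)\rVert_{\infty}\le\gamma_{\max}$ from Assumption \ref{assumption:A4}, a matrix Bernstein (or vector Hoeffding) argument gives operator-norm and Frobenius-norm control of the classification gradient of order $O(\sqrt{n\log n})$ with probability at least $1-n^{-1}$. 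The reconstruction gradient equals $-2\xi\,\bE$ with $\bE=\X_{\textup{data}}-\B^{\star}$ an i.i.d.\ $\mathcal{N}(0,\sigma^{2})$ matrix, so the standard non-asymptotic Gaussian matrix bound yields $\lVert \bE\rVert_{2}=O\bigl(\sigma(\sqrt{p}+\sqrt{n})\sqrt{\log n}\bigr)$ w.h.p., whence $2\xi\lVert \bE\rVert_{2}=O\bigl((\sqrt{p}+\sqrt{n})\sqrt{\log n}/\sigma\bigr)$. The regularizer contributes $O(\lambda)$ through $2\lambda\A^{\star}$ and $2\lambda\bgamma^{\star}$.

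The two cases for the constant $c$ arise from how $\Delta\param^{\star}$ is bounded. When $\bZ^{\star}-\tau\nabla_{\bZ}F(\bZ^{\star})\in\Param$, the projection $\Pi_{\Param}$ acts as the identity and $\Delta\param^{\star}=\nabla_{\param}F(\bZ^{\star})$, so the concentration bounds above yield $\lVert \Delta\param^{\star}\rVert_{2}=O(\sqrt{n}\log n+\lambda)$ directly and $c=O(1)$. Otherwise, non-expansiveness of $\Pi_{\Param}$ only furnishes a Frobenius bound $\lVert \Delta\param^{\star}\rVert_{F}\le \lVert \nabla_{\param}F(\bZ^{\star})\rVert_{F}$, and passing to an operator bound costs a factor of $\sqrt{\min(p,n)}$, giving $c=O(\sqrt{\min(p,n)})$. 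Plugging in $\xi=(2\sigma^{2})^{-1}$ with $\sigma^{2}=O(1/n)$ produces $\mu=\Omega(n)$, so the bias $c(\sqrt{n}\log n+\lambda)/\mu$ reduces to $O(\log n/\sqrt{n})$ for fixed $p$ and $\lambda$ of order at most $\sqrt{n}\log n$, which is the stated rate.

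The main obstacle I foresee is the operator-norm concentration of the SMF-$\W$ classification gradient $\sum_{i}\x_{i}(\g(\a_{i}^{\star})-\one\{y_{i}=\cdot\})^{T}$, whose summands couple the random features $\x_{i}$ (whose empirical Gram matrix is controlled by Assumption \ref{assumption:A3}) with bounded classification residuals. Establishing the $O(\sqrt{n\log n})$ operator-norm bound calls for a matrix Bernstein argument that uses $n^{-1}\bPhi\bPhi^{T}\preceq \delta^{+}\I$ to bound the matrix variance proxy; by contrast, the corresponding bound for SMF-$\H$ is easier because the classification gradient with respect to $\A$ is a sum of bounded independent columns with no feature coupling.
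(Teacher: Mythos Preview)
Your plan is correct and mirrors the paper's proof almost exactly: both invoke the misspecified case of Theorem~\ref{thm:SMF_LPGD_full}, compute $\nabla F(\bZ^{\star})$ block-by-block, exploit $\E[\nabla_{\a}\ell(y_{i},\a_{i}^{\star})\mid \x_{i},\x_{i}']=0$ together with the bound $\gamma_{\max}$ to control the classification part, use Gaussian matrix concentration for the noise $\X_{\textup{data}}-\B^{\star}$, absorb the $2\lambda\A^{\star},2\lambda\bgamma^{\star}$ regularizer terms into the $O(\lambda)$ bias, and distinguish the same two cases for $c$ via the projection. The one place the paper differs from what you anticipate as the main obstacle is the SMF-$\W$ classification gradient $\sum_{s}\x_{s}\dot{\h}(y_{s},\a_{s}^{\star})^{T}$: rather than a matrix Bernstein argument using Assumption~\ref{assumption:A3} for the variance proxy, the paper simply splits $\x_{s}=\B^{\star}[:,s]+\beps_{s}$ and bounds the two pieces by $\lVert \B^{\star}\rVert_{\infty}\lVert \mathtt{Q}_{1}\rVert_{2}$ and $\gamma_{\max}\lVert \mathtt{Q}_{2}\rVert_{2}$, where $\mathtt{Q}_{1}=[\dot{\h}(y_{1},\a_{1}^{\star}),\dots,\dot{\h}(y_{n},\a_{n}^{\star})]$ has independent bounded columns and $\mathtt{Q}_{2}=[\beps_{1},\dots,\beps_{n}]$ is Gaussian---this decouples the features from the scores and avoids the coupling you flagged.
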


We remark that Theorem \ref{thm:SMF_LPGD_STAT} implies that \textit{SMF-$\H$ is statistically more robust than SMF-$\W$}. Namely, in order to have an arbitrarily accurate estimate with high probability, one needs to have $\mu \gg \sqrt{n}\log n$. Combining with the expression in \eqref{eq:thm1_condition_numbers} and the well-conditioning assumption  $L/\mu<3$, one needs to require $\xi =\Omega(n)$, hence small noise variance $\sigma^{2}=O(1/n)$ for SMF-$\W$. However, for SMF-$\H$, this is guaranteed whenever $\sigma^{2}=o(1/(\sqrt{n}\log n))$ and  $\lambda\approx \mu$.

\section{Simulation and Numerical Validation}

\begin{figure}[h!]
	\centering
	\includegraphics[width=1\linewidth]{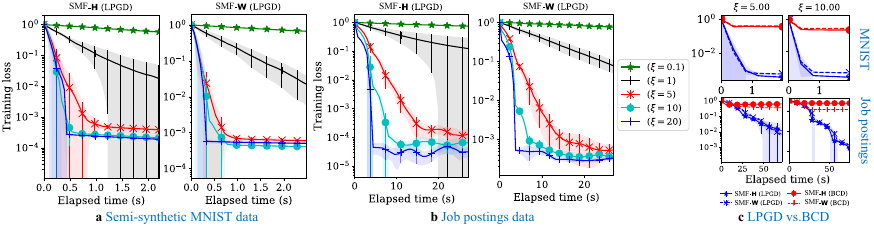} 
	\vspace{-0.5cm}
	\caption{(\textbf{a}-\textbf{b}) Training loss vs. elapsed CPU time for Algorithm \ref{alg:SMF_PGD} (with binary logistic classifier) on the semi-synthetic MNIST and Job postings datasets for several values of $\xi$ in log scale.  Average training loss over ten runs and the shades representing the standard deviation shown. (\textbf{c}) Comparison between LPGD (Algorithm \ref{alg:SMF_PGD}) and BCD algorithms for SMF.}
	\label{fig:benchmark_MNIST}
\end{figure}

We numerically verify Theorem \ref{thm:SMF_LPGD} on a semi-synthetic dataset generated by using MNIST image dataset  \cite{lecun-mnisthandwrittendigit-2010} ($p=28^{2}=784$, $q=0$, $n=500$, $\kappa=1$) and a text dataset named `Real / Fake Job Posting Prediction' \cite{fakejob} ($p =2840, q = 72, n = 17880, \kappa = 1$). 
Details about these datasets are in Sec. \ref{sec:experimental_details} in Appendix.\footnote{We provide our implementation of Algorithm \ref{alg:SMF_PGD} in our code repository \url{https://github.com/ljw9510/SMF/tree/main}.} We used Algorithm \ref{alg:SMF_PGD} with rank $r=2$ for MNIST and $r=20$ for job postings datasets. For all experiments, $\lambda=2$ and stepsize $\tau=0.01$ were used.

We validate the theoretical exponential convergence results of our LPGD algorithm (Algorithm \ref{alg:SMF_PGD}) in Figure \ref{fig:benchmark_MNIST}. Note that the convexity and smoothness parameters $\mu$ and $L$ in  Theorem \ref{thm:SMF_LPGD} are difficult to compute exactly. In practice, cross-validation of hyperparameters is usually employed. For $\xi\in \{ 0.1, 1, 5,10,20 \}$ in Figure \ref{fig:benchmark_MNIST}, we indeed observe exponential decay of training loss as dictated by our theoretical results for Algorithm \ref{alg:SMF_PGD}.  We also observe that the exponential rate of decay in training loss increases as $\xi$ increases. According to Theorem \ref{thm:SMF_LPGD}, the contraction coefficient is $\rho=(1-\tau \mu)$, which decreases in $\xi$ since $\mu$ increases in $\xi$ (see \eqref{eq:thm1_condition_numbers}). The decay for large $\xi\in \{10,20 \}$ seems even superexponential. Furthermore, \ref{fig:benchmark_MNIST}\textbf{c} shows that our LPGD algorithm converges significantly faster than BCD for training both SMF-$\H$ and SMF-$\W$ models at $\xi\in \{5,10\}$ (other values of $\xi$ omitted). 

\section{Application: Microarray Analysis for Cancer Classification}

We apply the proposed methods to two datasets from the Curated Microarray Database (CuMiDa) \cite{Feltes2019}. CuMiDa provides well-preprocessed microarray data for various cancer types for various machine-learning approaches. One consists of 54,676 gene expressions from 51 subjects with binary labels indicating pancreatic cancer; Another we use has 35,982 gene expressions from 289 subjects with binary labels indicating breast cancer. The primary purpose of the analysis is to classify cancer patients solely based on their gene expression. We compare the accuracies of the proposed methods -- SMF-$\W$ and SMF-$\H$ with a binary logistic classifier trained using Algorithm \ref{alg:SMF_PGD} -- against the following benchmark algorithms:  SMF-$\W$ and SMF-$\H$ trained using BCD; 
1-dimensional seven-layer Convolutional Neural Networks (CNN); three-layer Feed-Forward Neural Networks (FFNN); Naive Bayes (NB); Support Vector Machine (SVM); Random Forest (RF); Logistic Regression with Matrix Factorization by truncated SVD (MF-LR). For the last benchmark method MF-LR, we use rank-$r$ SVD to factorize $\X_{\textup{train}} \approx \U \Sigma\V^{T}$ and take $\W=\U \Sigma$ and $\H=\V^{T}$. For testing, we use $\W^{T}\X_{\textup{test}}$ as input to logistic regression for both filter and feature methods since $\lVert \X_{\textup{test}}-\W \H_{\textup{test}} \rVert_{F}$ is minimized when $\H_{\textup{test}}=(\W^{T}\W)^{-1}\W^{T}\X_{\textup{test}}=\W^{T}\X_{\textup{test}}$ with orthogonal $\W$.

\begin{figure}[h!]
	\centering
	\includegraphics[width=1\linewidth]{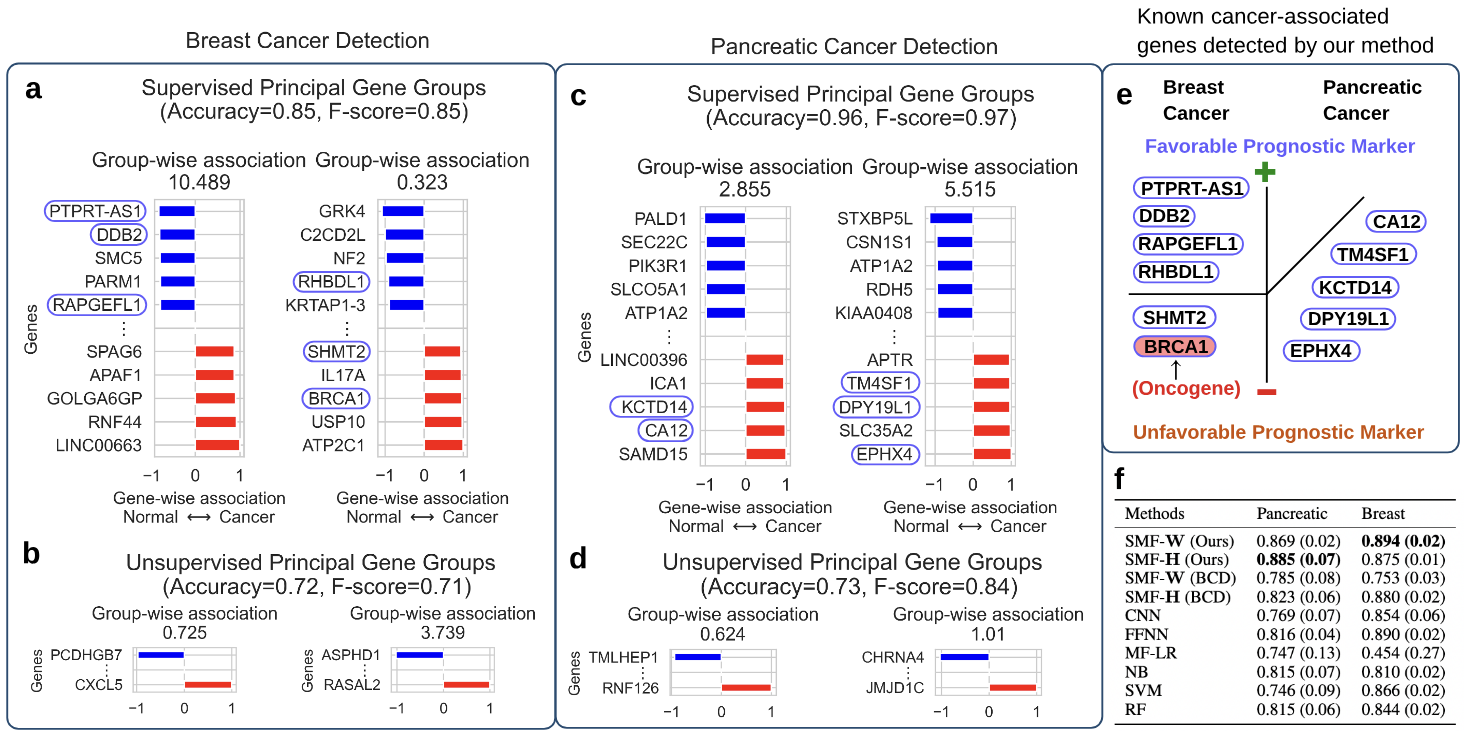} 
	\vspace{-0.5cm}
	\caption{ 
		(\textbf{a-b}) Two selected supervised/unsupervised principal gene groups (low-dimensional compression of genes) learned by rank-16 SMF-$\W$/SVD and their associated logistic regression coefficients for breast cancer detection. (\textbf{c}-\textbf{d}) Similar to \textbf{a}-\textbf{b} learned by rank-2 SMF-$\W$/SVD for pancreatic cancer detection. 
		\textbf{(e)} Blue-circled genes within each gene group of extreme coefficients coincide with known prognostic markers (for pancreatic cancer) and oncogene (for breast cancer). (\textbf{f}) Average classification accuracies and their standard deviations (in parenthesis) for various methods on two cancer microarray datasets over five-fold cross-validation. The highest-performing instances are marked in bold. 
	}
	\label{fig:pancreatic_cancer}
\end{figure}

We normalize gene expression for stable matrix factorization and interpretability of regression coefficients. We split each data into 50\% of the training set and 50\% of the test set and repeat the comparison procedure 5 times. A scree plot is used to determine the rank $r$. Other parameters are chosen through 5-fold cross-validation ($\xi\in \{0.1, 1, 10\}$ and $\lambda\in \{0.1,1,10\}$), and the algorithms are repeated in 1,000 iterations or until convergence. 
As can be seen in the table in Figure \ref{fig:pancreatic_cancer}\textbf{a},  the proposed methods show the best performance for both types of cancers. 

An important advantage of SMF methods is that they provide interpretable results in the form of `supervised factors'. Each supervised factor consists of a latent factor and the associated regression coefficient. That is, once we train the SMF model (for $\kappa=1$) and learn factor matrix $\W=[\w_{1},\dots,\w_{r}]\in \R^{p\times r}$ and vector of regression coefficients $\Beta=[\beta_{1},\dots,\beta_{r}]\in \R^{1\times r}$, each column $\w_{j}$ of $\W$ describes a latent factor 
and the corresponding regression coefficient $\beta_{j}$ tells us how $\w_{j}$ is associated with class labels. The pairs $(\w_{j},\beta_{j})$, which form supervised latent factors, provide insights into how the trained SMF model perceives the classification task. See Fig. \ref{fig:flowchart} for illustration.

In the context of microarray analysis for cancer research, each $\w_{j}$ corresponds to a weighted group of genes (which we call a `principal gene group') and $\beta_{j}$ represents the strength of its association with cancer. SMF learns supervised gene groups (Fig. \ref{fig:pancreatic_cancer}\textbf{a, c}) with significantly higher classification accuracy than the unsupervised gene groups (Fig. \ref{fig:pancreatic_cancer}\textbf{b, d}). In Fig. \ref{fig:pancreatic_cancer}\textbf{a, c}, both gene groups (consisting of $p$ genes) have positive regression coefficients, so they are positively associated with the log odds of the predictive probability of the corresponding cancer. 
Remarkably, our method detected the well-known oncogene BRCA1 of breast cancer and other various genes (in Fig. \ref{fig:pancreatic_cancer}\textbf{e}) that are known to be prognostic markers of breast/pancreatic cancer (see Human Protein Atlas \cite{sjostedt2020atlas}) in these groups of extreme coefficients (top five). The high classification accuracy suggests that the identified supervised principal gene groups may be associated with the occurrence of breast/pancreatic cancer.

\section{Conclusion and Limitations}

We propose an exponentially convergent algorithm for nonconvex SMF training using new lifting techniques. Our analysis demonstrates strong convergence and estimation guarantee. We compare the robustness of filter-based and feature-based SMF, finding that the former is computationally more robust while the latter is statistically more robust. The algorithm's exponential convergence is numerically verified. In cancer classification using microarray data analysis, our algorithm successfully identifies discriminative gene groups for various cancers and shows potential for identifying important gene groups as protein complexes or pathways in biomedical research. Our analysis framework can be extended to more complex classification models, such as combining a feed-forward deep neural network with a matrix factorization objective. While our convergence analysis holds in certain parameter regimes, we discuss them in detail. We have tested our method and convergence analysis on various real-world datasets but recommend further numerical verification on a wider range of datasets.


\bibliography{mybib}
\bibliographystyle{plain}

\newpage
\appendix
\onecolumn

\section{Gradient computation for executing the main algorithm}\label{sec:gradient_computation}

A straightforward computation shows (recall that $\param=[\A \parallel \B]$ for SMF-$\H$ and $\param=[\A, \B]$ for SMF-$\W$)
\begin{align}\label{eq:SMF_gradients}
	&\nabla_{\vect(\A)}  F - 2\lambda \vect(\A)=   \begin{cases} \sum_{s=1}^{n}\nabla_{\a} \ell(y_{s},\a_{s}) \otimes \I_{n}[:,s]  & \textup{for SMF-$\H$} \\
		\sum_{s=1}^{n}\nabla_{\a} \ell(y_{s},\a_{s}) \otimes \x_{s} & \textup{for SMF-$\W$,}
	\end{cases}
	\\
	&\nabla_{\B}  F = 2\xi (\B-\X_{\textup{data}}), \qquad \nabla_{\vect(\bgamma)}  F = \left(  \sum_{s=1}^{n} \nabla_{\a} \ell(y_{s},\a_{s}) \otimes \x_{s}' \right) + 2\lambda \vect(\bgamma),
\end{align}
where $\otimes$ denotes the Kronecker product. Here, we have $\nabla_{\a} \ell(y,\a)=(\dot{h}_{1},\dots,\dot{h}_{\kappa})$, where 
\begin{align}\label{eq:hdot_def}
	&\dot{h}_{j}:=\frac{h'(a_{j})}{1+\sum_{c=1}^{\kappa} h(a_{c})} - \mathbf{1}_{\{y=j \} }\frac{h'(a_{j})}{h(a_{j})}. 
\end{align}

\section{Generalized multinomial logistic regression}\label{sec:MLR}
In this section, we provide some background on a generalized multinomial logistic regression and record some useful computations. (See \cite{bohning1992multinomial} for backgrounds on multinomial logistic regression.)
Without loss of generality, we can assume that the $\kappa+1$ classes are the integers in $\{0,1,\dots, \kappa \}$. Say we have training examples $(\bphi(\x_{1}),y_{1}),\dots,(\bphi(\x_{n}),y_{n})$, where 
\begin{enumerate}[label={\textbf{$\bullet$}}]
	\item $\x_{1},\dots,\x_{n}$: Input data (e.g., collection of all medical records of each patient) 		
	\item $\bphi_{1}:=\bphi(\x_{1}),\dots,\bphi_{n}:=\bphi(\x_{n})\in \R^{p}:$ Features (e.g., some useful  information for each patient)
	\item $y_{1},\dots,y_{n}\in \{0, 1,\dots, \kappa \}$: $\kappa+1$ class labels (e.g., digits from 0 to 9). 
\end{enumerate}
The basic idea of multinomial logistic regression is to model the output $y$ as a discrete random variable  $Y$ with probability mass function  $\p=[p_{0}, p_{1},\dots,p_{\kappa}]$ that depends on the observed feature $\bphi(\x)$, score function $h:\R \rightarrow \R$ (strictly increasing, twice differentiable, and $h(0)=1$), and a matrix parameter $\W=[\w_{1},\dots,\w_{\kappa}]\in \R^{p\times \kappa}$ through the following relation: 
\begin{align}\label{eq:MLR}
	p_{0}= \frac{1}{1+\sum_{c=1}^{\kappa} h(\langle \bphi(\x), \w_{c} \rangle)},\quad 	p_{j}=\frac{h(\langle \bphi(\x), \w_{j} \rangle)}{1+\sum_{c=1}^{\kappa} h(\langle \bphi(\x), \w_{c} \rangle)},\quad \text{for $j=1,\dots, \kappa$}.
\end{align}
That is, given the feature vector $\bphi(\x)$, the probability $p_{i}$ of $\x$ having label $i$ is proportional to $h$ evaluated at the `linear activation' $\langle \bphi(\x), \w_{i} \rangle$ with the base category of class 0. Note that using $h(x)=\exp(x)$, the above multiclass classification model reduces to the classical multinomial logistic regression. In this case, the corresponding predictive probability distribution $\p$ is called  the \textit{softmax distribution} with activation $\mathbf{a}=[a_{1},\dots,a_{\kappa}]$ with $a_{i}=\langle \bphi(\x), \w_{i} \rangle$ for $i=1,\dots,\kappa$. Notice that this model has parameter vectors $\w_{1},\dots,\w_{\kappa}\in \R^{p}$, one for each of the $\kappa$ nonzero class labels.

Next, we derive the maximum log-likelihood formulation for finding optimal parameter $\W$ for the given training set $(\bphi_{i},y_{i})_{i=1,\dots,n}$. For each $1\le i \le n$, define the predictive probability mass function $\mathbf{p}_{i}= [p_{i0},p_{i1},\dots,p_{i\kappa}]$ using \eqref{eq:MLR} with $\bphi(\x)$ replaced by $\bphi_{i}$. We introduce the following matrix notations 
\begin{align}
	&
	\begin{matrix}
		\Y:=
		\begin{bmatrix}
			\mathbf{1}(y_{1}=1) & \cdots & \mathbf{1}(y_{1}=\kappa) \\ \vdots & & \vdots  \\ \mathbf{1}(y_{n}=1)  & \cdots & \mathbf{1}(y_{n}=\kappa)
		\end{bmatrix} 
		\\ \\ 
		\quad \in \{0,1\}^{n\times \kappa}
	\end{matrix}
	,\,\, 
	\begin{matrix}
		\bP:=
		\begin{bmatrix}
			p_{11} & \cdots & p_{1\kappa} \\ \vdots & & \vdots  \\ p_{n 1}  & \cdots & p_{n \kappa}
		\end{bmatrix} 
		\\ \\ 
		\quad \in [0,1]^{n\times \kappa}
	\end{matrix}
	\\
	&
	\begin{matrix}
		\bPhi:= 	
		\begin{bmatrix}
			\uparrow & & \uparrow \\
			\bphi(\x_{1})  & \cdots &  \bphi(\x_{n}) \\
			\downarrow &  &\downarrow
		\end{bmatrix}
		\\ \\
		\quad \in \R^{p\times n}
	\end{matrix}
	,\,\, 
	\begin{matrix}
		\W := 
		\begin{bmatrix}
			\uparrow & & \uparrow \\
			\w_{1}  & \cdots &  \w_{\kappa}\\
			\downarrow &  &\downarrow
		\end{bmatrix}
		.
		\\ \\
		\quad \in \R^{p\times \kappa}
	\end{matrix}
\end{align}
Note that the $s$th row of $\Y$ is a zero vector if and only if $y_{s}=0$. Similarly, since $p_{s0} = 1- (p_{s1}+\dots+p_{s\kappa})$, the corresponding row of $\bP$ determines its predictive probability distribution. Then the joint likelihood function of observing labels $(y_{1},\dots,y_{n})$ given input data $(\x_{1},\dots,\x_{n})$ under the above probabilistic model is 
\begin{align}
	L(y_{1},\dots,y_{n}\,;\, \W) = \P(Y_{1}=y_{1},\dots,Y_{n}=y_{n}\, ;\, \W)=  \prod_{s=1}^{n} \prod_{j=0}^{\kappa}  (p_{sj})^{\mathbf{1}(y_{s}=j)}.
\end{align}
Denote $\w_{0}=\mathbf{0}$. Then since $h(0)=1$ by definition,  we can conveniently write 
\begin{align}
	p_{sj}= \frac{h(\langle \bphi_{s}, \w_{j} \rangle)}{\sum_{c=0}^{\kappa} h(\langle \bphi_{s}, \w_{c} \rangle)} \quad \textup{for $s=1,\dots, n$ and $j=0,1,\dots,\kappa$}.
\end{align}

Now we can derive the negative log-likelihood  $\ell(\bPhi, \W)
:= - \sum_{s=1}^{n} \sum_{j=0}^{\kappa} \mathbf{1}(y_{s}=j) \log p_{sj}$ in a matrix form as follows: 
\begin{align}
	\ell(\bPhi, \W)
	& = \sum_{s=1}^{n}  \log \left( 1+\sum_{c=1}^{\kappa} h(\langle \bphi(\x_{s}), \w_{c} \rangle ) \right) - \sum_{s=1}^{n}  \sum_{j=0}^{\kappa} \mathbf{1}(y_{s}=j) \log h\left(  \langle \bphi(\x_{s}), \w_{j}\rangle \right)  \\
	&= \left( \sum_{s=1}^{n} \log \left( 1+\sum_{c=1}^{\kappa} h(\langle \bphi(\x_{s}), \w_{c} \rangle ) \right) \right)- \tr\left( \Y^{T} h(\bPhi^{T} \W) \right),
\end{align}
where $\tr(\cdot)$ denotes the trace operator. Then the maximum likelihood estimates $\hat{\W}$ is defined as the minimizer of the above loss function in $\W$ while fixing the feature matrix $\bPhi$. 

Both the maps $\W\mapsto \ell(\bPhi, \W)$ and $\bPhi\mapsto \ell(\bPhi, \W)$ are convex and we can compute their gradients as well as the Hessian explicitly as follows. For each $y\in \{0,1,\dots\kappa\}$, $\bphi\in \R^{p}$, and $\W\in \R^{p\times \kappa}$, define vector and matrix functions 
\begin{align}
	&\dot{\h}(y,\bphi, \W) :=(\dot{h}_{1},\dots, \dot{h}_{\kappa})^{T}\in \R^{\kappa\times 1}, \,\, \dot{h}_{j} := \frac{h'(\langle \bphi,\w_{j} \rangle)}{1+\sum_{c=1}^{\kappa} h(\langle \bphi,\w_{c} \rangle)} - \mathbf{1}(y=j)\frac{h'(\langle \bphi,\w_{j} \rangle)}{h(\langle \bphi,\w_{j} \rangle)} \label{eq:MNL_h_def_appendix}\\
	&\ddot{\H}(y,\bphi,\W)  := \left( \ddot{\H}_{ij}  \right)_{i,j} \in \R^{\kappa\times \kappa}, \\ &\ddot{\H}_{ij} =  \begin{matrix}  \frac{ h''(\langle \bphi,\w_{j} \rangle) \mathbf{1}(i=j)  }{  1+\sum_{c=1}^{\kappa} h(\langle \bphi,\w_{c} \rangle)  } - \frac{ h'(\langle \bphi,\w_{i} \rangle) h'(\langle \bphi,\w_{j} \rangle)   }{  \left( 1+\sum_{c=1}^{\kappa} h(\langle \bphi,\w_{c} \rangle) \right)^{2} }  
		- \mathbf{1}(y=i=j) \left( \frac{h''(\langle \bphi,\w_{j} \rangle)}{h(\langle \bphi,\w_{j} \rangle)} - \frac{\left( h'(\langle \bphi,\w_{j} \rangle) \right)^{2} }{\left( h(\langle \bphi,\w_{j} \rangle) \right)^{2}}   \right) \end{matrix}. \label{eq:MNL_H_def_appendix}
\end{align}
For each $\W=[\w_{1},\dots,\w_{\kappa}]\in \R^{p\times \kappa}$, let $\W^{\textup{vec}}:=[\w_{1}^{T},\dots, \w_{\kappa}^{T}]^{T}\in \R^{p \kappa}$ denote its vectorization.
Then a straightforward computation shows 
\begin{align}\label{eq:MNL_gradients_H}
	\nabla_{\vect(\W)}   \ell(\bPhi, \W) &= \sum_{s=1}^{n} \dot{\h}(y_{s},\bphi_{s}, \W) \otimes \bphi_{s}, \\
	\H:= \nabla_{\vect(\W)}  \nabla_{\vect(\W)^{T}}   \ell(\bPhi, \W) &= \sum_{s=1}^{n} \ddot{\H}(y_{s},\bphi_{s}, \W) \otimes \bphi_{s}\bphi_{s}^{T},
\end{align}
where $\otimes$ above denotes the Kronecker product. 	Recall that the eigenvalues of $\A\otimes \B$, where $\A$ and $\B$ are two square matrices, are given by $\lambda_{i}\mu_{j}$, where $\lambda_{i}$ and $\mu_{j}$ run over all eigenvalues of $\A$ and $\B$, respectively. Also, for two square matrices $\A,\B$ of the same size, write $\A \preceq \B$ if $v^{T}\A v\le v^{T} \B v$ for all unit vectors $v$. Then denoting $\lambda^{+}:=\max_{1\le s \le n}\lambda_{\max}(\ddot{\H}(y_{s},\bphi_{s}, \W) )$, 
\begin{align}
	\H \preceq \lambda^{+} \sum_{s=1}^{n} \I \otimes \bphi_{s} \bphi_{s}^{T} =  \lambda^{+}  \I \otimes \bPhi \bPhi^{T}. 
\end{align}
Similarly, $\lambda^{-}  \I \otimes \bPhi \bPhi^{T}\preceq \H$, where $\lambda^{-}$ denotes the minimum over all $\lambda_{\min}(\ddot{\H}(y_{s},\bphi_{s}, \W) )$. Hence we can deduce 
\begin{align}\label{eq:MNL_evals_bounds}	
	\lambda^{-}\lambda_{\min}\left( \bPhi \bPhi^{T} \right) 	 	\le \lambda_{\min}(\H) \le  \lambda_{\max}(\H)\le 	\lambda^{+}\lambda_{\max}\left( \bPhi \bPhi^{T} \right).
\end{align}

There are some particular cases worth noting. First, suppose binary classification case, $\kappa=1$. Then the Hessian $\H$ above reduces to 
\begin{align}
	\H = \sum_{s=1}^{n} \ddot{\H}_{11}(y_{s}, \bphi_{s},\W)  \bphi_{s}\bphi_{s}^{T}.
\end{align}
Second, let $h(x)=\exp(x)$ and consider the multinomial logistic regression case. Then $h=h'=h''$ so the above yields the following concise matrix expression
\begin{align}
	&\nabla_{\W} \, \ell(\bPhi, \W) = \bPhi (\bP - \Y) \in \R^{p\times \kappa},\qquad \nabla_{\bPhi} \, \ell(\bPhi, \W) = \W (\bP - \Y)^{T} \in \R^{p\times n}, \\
	& \H= \sum_{s=1}^{n} \begin{bmatrix}
		p_{s1}(1-p_{s1}) & - p_{s1}p_{s2} & \dots & -p_{s1}p_{s\kappa} \\
		-p_{s2}p_{s1} & p_{s2}(1-p_{s2}) & \dots & -p_{s2}p_{s\kappa} \\
		\vdots & \vdots & \ddots & \vdots \\
		-p_{s\kappa}p_{s1} & -p_{s\kappa}p_{s2} & \dots & p_{s\kappa}(1-p_{s\kappa}) 
	\end{bmatrix}
	\otimes \bphi_{s}\bphi_{s}^{T}.
\end{align}
Note that $\H$ in this case does not depend on $y_{s}$ for $s=1,\dots,n$.
The bounds on the eigenvalues depends on the range of linear activation $\langle \bphi_{i}, \w_{j} \rangle$ may take. For instance, if we restrict the norms of the input feature vector $\phi_{i}$ and parameter $\w_{j}$, then we can find a suitable positive uniform lower bound on the eigenvalues of $\H$.

\begin{lemma}\label{lem:MNL}
	Suppose $h(\cdot)=\exp(\cdot)$. Then 
	\begin{align}
		&\lambda_{\min}\left( \ddot{\H}(\bphi_{s},\W) \right)\ge \min_{1\le i \le \kappa} \frac{ \exp(\langle \bphi_{s}, \w_{i} \rangle) }{\left( 1+\sum_{c=1}^{\kappa} \exp(\langle \bphi_{s}, \w_{c} \rangle) \right)^{2} }, \\
		&\lambda_{\max}\left( \ddot{\H}(\bphi_{s},\W) \right)\le \max_{1\le i \le \kappa} \frac{ \exp(\langle \bphi_{s}, \w_{i} \rangle)  }{\left( 1+\sum_{c=1}^{\kappa} \exp(\langle \bphi_{s}, \w_{c} \rangle) \right)^{2} }\left(1+2\sum_{c=2}^{\kappa}  \exp(\langle \bphi_{s}, \w_{c} \rangle)  \right).
	\end{align}
\end{lemma}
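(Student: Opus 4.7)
The plan is to first write down the Hessian $\ddot{\H}(\bphi_{s},\W)$ explicitly in the softmax case, then establish the two eigenvalue bounds by (i) a Cauchy--Schwarz argument on the associated quadratic form and (ii) the Gershgorin circle theorem.

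\textbf{Step 1 (Explicit form).} Specializing the computation around \eqref{eq:MNL_H_def_appendix} to $h(\cdot)=\exp(\cdot)$, the indicator terms in $\ddot{\H}_{ij}$ involving $y$ cancel because $h''/h \equiv 1$ and $(h'/h)^{2}\equiv 1$. With $p_{sj}:=\exp(\langle \bphi_{s},\w_{j}\rangle)/(1+\sum_{c=1}^{\kappa}\exp(\langle \bphi_{s},\w_{c}\rangle))$ for $j\geq 1$ and $p_{s0}:=1-\sum_{c=1}^{\kappa}p_{sc}$, one gets
\begin{align*}
\ddot{\H}(\bphi_{s},\W) \;=\; \D_{s} - \p_{s}\p_{s}^{T},
\end{align*}
where $\p_{s}=(p_{s1},\dots,p_{s\kappa})^{T}$ and $\D_{s}=\diag(\p_{s})$. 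This is the familiar Hessian structure of the multinomial softmax loss and is the object whose extreme eigenvalues I must bound.

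\textbf{Step 2 (Lower bound via Cauchy--Schwarz).} For a unit vector $\v\in\R^{\kappa}$,
\begin{align*}
\v^{T}\bigl(\D_{s}-\p_{s}\p_{s}^{T}\bigr)\v
\;=\; \sum_{i=1}^{\kappa} p_{si} v_{i}^{2} \;-\; \Bigl(\sum_{i=1}^{\kappa} p_{si} v_{i}\Bigr)^{2}.
\end{align*}
Applying Cauchy--Schwarz as $\bigl(\sum_{i}\sqrt{p_{si}}\cdot\sqrt{p_{si}}\,v_{i}\bigr)^{2}\leq \bigl(\sum_{i}p_{si}\bigr)\bigl(\sum_{i}p_{si}v_{i}^{2}\bigr)$ yields
\begin{align*}
\v^{T}\ddot{\H}(\bphi_{s},\W)\v
\;\geq\; \Bigl(1-\sum_{i=1}^{\kappa}p_{si}\Bigr)\sum_{i=1}^{\kappa}p_{si}v_{i}^{2}
\;=\; p_{s0}\sum_{i=1}^{\kappa}p_{si}v_{i}^{2} \;\geq\; p_{s0}\min_{1\le i\le\kappa} p_{si}.
\end{align*}
Substituting $p_{si}=\exp(\langle\bphi_{s},\w_{i}\rangle)/(1+\sum_{c}\exp(\langle\bphi_{s},\w_{c}\rangle))$ and $p_{s0}=1/(1+\sum_{c}\exp(\langle\bphi_{s},\w_{c}\rangle))$ gives the stated lower bound on $\lambda_{\min}$.

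\textbf{Step 3 (Upper bound via Gershgorin).} Since $\ddot{\H}(\bphi_{s},\W)$ is symmetric with $\ddot{\H}_{ii}=p_{si}(1-p_{si})\geq 0$ and $\ddot{\H}_{ij}=-p_{si}p_{sj}\leq 0$ for $i\neq j$, the Gershgorin circle theorem yields
\begin{align*}
\lambda_{\max}\bigl(\ddot{\H}(\bphi_{s},\W)\bigr)
\;\leq\; \max_{1\leq i\leq\kappa}\Bigl[\,p_{si}(1-p_{si}) + \sum_{j\neq i} p_{si}p_{sj}\Bigr].
\end{align*}
Using $\sum_{j\neq i,\,j\geq 1}p_{sj} = 1-p_{s0}-p_{si}$, the bracket becomes $p_{si}\bigl[p_{s0}+2\sum_{j\neq i,\,j\geq 1}p_{sj}\bigr]$, and multiplying numerator and denominator appropriately converts $p_{si}p_{s0}$ into $\exp(\langle\bphi_{s},\w_{i}\rangle)/(1+\sum_{c}\exp(\langle\bphi_{s},\w_{c}\rangle))^{2}$ and $p_{sj}/p_{s0}=\exp(\langle\bphi_{s},\w_{j}\rangle)$. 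This produces precisely the claimed upper bound (with the index convention that the sum ranges over $c\neq i$, which under a relabeling of coordinates so that the maximizing $i=1$ matches the stated $\sum_{c=2}^{\kappa}$).

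I do not expect a serious obstacle here: the lower bound is essentially a one-line Cauchy--Schwarz once the $\D_{s}-\p_{s}\p_{s}^{T}$ form is recognized, and the upper bound is Gershgorin plus bookkeeping. The only subtlety is algebraic, namely keeping track of $p_{s0}$ versus the sum $\sum_{c\geq 1}p_{sc}$ and converting probabilities back to exponentials to match the display in the lemma.
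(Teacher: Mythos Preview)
Your proposal is correct, and for the upper bound it coincides with the paper's argument (Gershgorin, with the same bookkeeping yielding $p_{si}(2-p_{s0}-2p_{si})$). For the lower bound you take a genuinely different route: the paper invokes that $\ddot{\H}$ is a diagonally dominant $M$-matrix and cites an external result (\cite{amani2021ucb}, via \cite{tian2010inequalities}) to conclude $\lambda_{\min}(\ddot{\H})\ge \min_{i}\sum_{j}\ddot{H}_{ij}=\min_{i} p_{si}p_{s0}$, whereas you obtain the same bound by a direct Cauchy--Schwarz estimate on the quadratic form $\v^{T}(\D_{s}-\p_{s}\p_{s}^{T})\v$. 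Your argument is more elementary and fully self-contained; the paper's has the advantage of pointing to a general structural fact about $M$-matrices but at the cost of relying on outside references.
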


\begin{proof}
	For the lower bound on the minimum eigenvalue, we note that
	\begin{align}
		\lambda_{\min}\left( \ddot{\H}(\bphi_{s},\W) \right) \ge \min_{1\le i\le \kappa}  \sum_{j=1}^{\kappa}  \ddot{H}_{ij} = \min_{1\le i \le \kappa} p_{si}p_{s0} =\min_{1\le i \le \kappa}  \frac{ \exp(\langle \bphi_{s}, \w_{i} \rangle) }{\left( 1+\sum_{c=1}^{\kappa} \exp(\langle \bphi_{s}, \w_{c} \rangle) \right)^{2} },
	\end{align}
	where the first inequality was shown in  \cite{amani2021ucb} using the fact that $\ddot{\H}(\bphi_{s},\W)$ is a diagonally dominant $M$-matrix (see \cite{tian2010inequalities}). The following equalities can be verified easily. 
	
	For the upper bound on the maximum eigenvalue, we use the Gershgorin circle theorem (see, e.g., \cite{horn2012matrix}) to bound 
	\begin{align}
		\lambda_{\max}\left( \ddot{\H}(\bphi_{s},\W) \right) &\le \max_{1\le i \le \kappa} \left( p_{si}(1-p_{si}) + \sum_{c\ne i} p_{si}p_{sc}\right) \le \max_{1\le i \le \kappa}  p_{si} \left(2-p_{s0}-2p_{si}\right).
	\end{align}
	Then simplifying the last expression gives the assertion.
\end{proof}

\section{Exponential convergence of low-rank PGD}

In  Section \ref{subsection:sketch} of the main manuscript, we outlined our key idea for solving the SMF problem \eqref{eq:SMF_main}, which involves  `double lifting` the nonconvex problem to a low-rank matrix estimation problem. 
In this section, we make this approach precise by considering an abstract form of optimization problems that generalizes the SMF problem  \eqref{eq:SMF_main}.

Fix a function $f:\R^{d_{1}\times d_{2}}\times \R^{d_{3}\times d_{4}}\rightarrow \R$, which takes the input of a $d_{1}\times d_{2}$ matrix and an augmented variable in $\R^{d_{3}\times d_{4}}$.  Consider the following  \textit{constrained and augmented   low-rank estimation}  (CALE) problem 
\begin{align}\label{eq:CALE}
	&\min_{\bZ=[\X, \bGamma] \in \subseteq  \R^{d_{1}\times d_{2}} \times \R^{d_{3}\times d_{4}}} \,f(\bZ),\qquad \textup{subject to $\bZ\in \Param$ and $\rank(\X)\le r$},
\end{align}
where $\Param$ is a convex subset of $\R^{d_{1}\times d_{2}}\times \R^{d_{3}\times d_{4}}$. Here, we seek to find a global minimizer $\bZ^{\star}=[\X^{\star}, \bGamma^{\star}]$ of the objective function $f$ over the convex set $\Param$, consisting of a low-rank  component $\X^{\star}\in \R^{d_{1}\times d_{2}}$ and an auxiliary variable $\bGamma^{\star}\in \R^{d_{3}\times d_{4}}$. In a statistical inference setting, the loss function $f=f_{n}$ may be based on $n$ noisy observations according to a probabilistic model, and the true parameter $\bZ^{*}$ to be estimated may approximately minimize $f$ over the constraint set $\Param$, with some statistical error $\eps(n)$ depending on the sample size $n$. In this case, a global minimizer $\bZ^{\star}\in \argmin_{\Param} f$ serves as an estimate of the true parameter $\bZ^{*}$.  
The matrix completion and low-rank matrix estimation problem \cite{meka2009guaranteed, recht2010guaranteed} can be considered as special cases of \eqref{eq:CALE} without constraint $\Param$ and the auxiliary variable $\bGamma$. 
This problem setting has been one of the most important research topics in the machine learning and statistics literature for the past few decades. More importantly for our purpose, we have seen in \eqref{eq:SMF_feat_CALE0} and \eqref{eq:SMF_filt_CALE0} in the main manuscript that both the feature- and filter-based SMF problems can be cast as the form of \eqref{eq:CALE} after some lifting and change of variables.

One can reformulate \eqref{eq:CALE} as the following nonconvex problem, where one parameterizes the low-rank matrix variable $\X$ with product $\U\V^{T}$ of two matrices, which we call the \textit{constrained and augmented factored estimation} (CAFE) problem:
\begin{align}\label{eq:CAFE}
	&\min_{ \U\in \R^{d_{1}\times r}, \V\in \R^{d_{2}\times r}, \bGamma\in \R^{d_{3}\times d_{4}} } \,f(\U\V^{T}, \bGamma) ,\qquad \textup{subject to $[\U\V^{T},\bGamma]\in \Param$}.
\end{align}
Note that a solution to \eqref{eq:CAFE} gives a solution to \eqref{eq:CALE}. Conversely, when considering \eqref{eq:CALE} without any constraint on the first matrix component, the singular value decomposition of the first matrix component easily demonstrates that a solution to \eqref{eq:CALE} is also a solution to \eqref{eq:CAFE}. Recently, there has been a surge of progress in global guarantees of solving the factored problem \eqref{eq:CAFE} using various nonconvex optimization methods \cite{jain2010guaranteed, jain2013low, zhao2015nonconvex, zheng2015convergent, tu2016low, park2017non, wang2017unified, park2016provable, park2018finding}. While most of the work considers \eqref{eq:CAFE} without the auxiliary variable and constraints, some studies consider specific types of convex constraints such as matrix norm bound. We consider $\Param$ to be a general convex constraint set.

It is common that the nonconvex factored problem \eqref{eq:CAFE} is introduced as a more efficient formulation of the convex problem \eqref{eq:CALE}. Interestingly, in the present work, we reformulate the four-factor nonconvex problem of SMF in \eqref{eq:SMF_main} as a three-factor nonconvex CAFE problem in \eqref{eq:CAFE} and then realize it as a single-factor convex CALE problem in \eqref{eq:CALE}. We illustrated this connection briefly in Section \ref{subsection:sketch} of the main manuscript.

In order to solve the CALE problem \eqref{eq:CALE}, consider the following \textit{low-rank projected gradient descent} (LPGD) algorithm: (see Remark \ref{rmk:projection} for more discussion on the use of projections $\Pi_{\Param}$ and $\Pi_{r}$)
\begin{align}\label{eq:LRPGD_iterate0}
	\bZ_{t} \leftarrow \Pi_{r}\left( \Pi_{\Param} \left(\bZ_{t-1} - \tau \nabla f(\bZ_{t-1}) \right) \right), 
\end{align}
where $\tau$ is a stepsize parameter, $\Pi_{\Param}$ denotes projection onto the convex constraint set $\Param\subseteq \R^{d_{1}\times d_{2}} \times \R^{d_{3}\times d_{4}}$, and $\Pi_{r}$ denotes the projection of the first matrix component onto matrices of rank at most $r$ in $\R^{d_{1}\times d_{2}}$. More precisely, let $\bZ = [\X,\bGamma]$. Then $\Pi_{r}(\bZ):=[ \Pi_{r}(\X), \bGamma ]$. It is well-known that the rank-$r$ projection above can be explicitly computed by the singular value decomposition (SVD). Namely, $\Pi_{r}(\X)= \U\bSigma\V^{T}$, where $\bSigma$ is the $r\times r$ diagonal matrix of the top $r$ singular values of $\X$ and  $\U\in \R^{d_{1}\times r}$, $\V\in \R^{d_{2}\times r}$ are semi-orthonormal matrices (i.e., $\U^{T}\U = \V^{T}\V = \I_{r}$). 
Note that algorithm \eqref{eq:LRPGD_iterate0} resembles the standard \textit{projected gradient descent} (PGD) commonly used in the optimization literature. The algorithm follows a three-step procedure where a gradient descent step is performed, followed by projection onto the convex constraint set $\Param$ and subsequently the rank-$r$ projection. 
It is also worth noting the similarity of \eqref{eq:LRPGD_iterate0} to the `lift-and-project' algorithm in \cite{chu2003structured} for structured low-rank approximation problem, which proceeds by alternatively applying the projections $\Pi_{\Param}$ and $\Pi_{r}$ to a given matrix until convergence.  

In Theorem \ref{thm:CALE_LPGD}, we will show that the iterate $\bZ_{t}$ of the algorithm \eqref{eq:LRPGD_iterate0} converges exponentially to a low-rank approximation of the global minimizer of the objective $f$ over $\Param$, given that the objective $f$ satisfies the following restricted strong convexity (RSC) and restricted smoothness (RSM) properties in Definition \ref{def:RSC}. These properties were first used in  \cite{agarwal2010fast, ravikumar2011high,negahban2011estimation} for the class of matrix estimation problems and have found a number of applications in optimization and machine learning literature \cite{wang2017unified,park2018finding, tong2021accelerating}.

\begin{definition}(Restricted Strong Convexity and Smoothness)\label{def:RSC}
	A function $f:\R^{d_{1}\times d_{2}} \times \R^{d_{3}\times d_{4}} \rightarrow \R$ is \textit{$r$-restricted strongly convex and smooth} with parameters $\mu,L>0$ if for all $\bZ,\bZ'\in \R^{d_{1}\times d_{2}}\times \R^{d_{3}\times d_{4}}$  whose matrix coordinates are of rank $\le r$, 
	\begin{align}\label{eq:def_RSC_RSM}
		\frac{\mu}{2} \lVert \vect(\bZ) - \vect(\bZ') \rVert_{2}^{2} \overset{\textup{(RSC)}}{\le} 	f(\bZ') - f(\bZ) - \langle \nabla f(\bZ),\, \bZ'-\bZ \rangle  \overset{\textup{(RSM)}}{\le} \frac{L}{2} \lVert \vect(\bZ) - \vect(\bZ') \rVert_{2}^{2}.
	\end{align}
\end{definition}

Next, we discuss optimality measures for the CALE problem  \eqref{eq:CALE}. Recall that we want to minimize the objective $f$ subject to two constraints: (1) convex constraint $\Param$ and (2) low-rank constraint. We first consider the following simpler problem without the low-rank constraint:
\begin{align}\label{eq:convex_only}
	\min_{\bZ\in \Param} f(\bZ). 
\end{align}
A first-order optimal point $\bZ^{*}$ for the above problem is called a \textit{stationary point} of $f$ over $\Param$, which is defined as 
\begin{align}
	\langle \nabla f (\bZ^{*}),\, \bZ-\bZ^{*} \rangle \ge 0 \quad \textup{for all $\bZ\in \Param$}. 
\end{align}
An alternative definition of stationary points uses gradient mapping \cite{nesterov2013gradient, beck2017first}, which is particularly well-suited for projected gradient descent type algorithms. Define a map $G:\Param\times (0,\infty)\rightarrow \R$ by 
\begin{align}\label{eq:def_grad_mapping}
	G(\bZ, \tau) := \frac{1}{\tau}(\bZ - \Pi_{\Param}(\bZ - \tau \nabla f(\bZ))).
\end{align}
We call $G$ the \textit{gradient mapping} associated with problem \eqref{eq:convex_only}. In order to motivate the definition, fix $\bZ\in \Param$ and decompose it as 
\begin{align}
	\bZ &=\Pi_{\Param}(\bZ - \tau \nabla f(\bZ))   + \left( \bZ- \Pi_{\Param}(\bZ - \tau \nabla f(\bZ))   \right) \\
	&=\Pi_{\Param}(\bZ - \tau \nabla f(\bZ))  +  \tau G(\bZ, \tau).
\end{align}
Namely, the first term above is a one-step update of a projected gradient descent at $\bZ$ over $\Param$ with stepsize $\tau$, and the second term above is the error term. If $\bZ$ is a stationary point of $f$ over $\Param$, then $-\nabla f(\bZ)$ lies in the normal cone of $\Param$ at $\bZ$, so $\bZ$ is invariant under the projected gradient descent and the error term above is zero. If $\bZ$ is only approximately stationary, then the error above is nonzero. In fact, $ G(\bZ, \tau)=0$ if and only if $\bZ$ is a stationary point of $f$ over $\Param$ (see Theorem 10.7 in \cite{beck2017first}). Therefore, we may use the size of $ G(\bZ,\tau)$ (measured using an appropriate norm) as a measure of first-order optimality of $\bZ$ for the problem \eqref{eq:CALE}. In the special cases when $\Param$ is the whole space or when $\bZ$ is in the interior of $\Param$,  if $\tau$ is sufficiently small (so that $\bZ-\tau\nabla f(\bZ)\in \Param$), then $\lVert G(\bZ, \tau)\rVert_{F} = \lVert \nabla f(\bZ) \rVert_{F}$, which is the standard measure of first-order optimality of $\bZ$ for minimizing the objective $f$. In general, it holds that $\lVert G(\bZ,\tau)\rVert_{F}\le \lVert \nabla f(\bZ) \rVert_{F}$ (see Lemma \ref{lem:gradient_mapping}). 

Now we turn our attention to \eqref{eq:CALE}. An optimal solution for \eqref{eq:convex_only} need not be an optimal solution for \eqref{eq:CALE}, since it may or may not satify the low-rank constraint. Our theoretical convergence guarantee of the LPGD algorithm \eqref{eq:LRPGD_iterate0} for CALE \eqref{eq:CALE} covers these two cases.

\begin{theorem}(Exponential convergence  of LPGD)\label{thm:CALE_LPGD}
	Let $f:\R^{d_{1}\times d_{2}} \times \R^{d_{3}\times d_{4}} \rightarrow \R$ be twice differentiable and $r$-restricted strongly convex and smooth with parameters $\mu$ and $L$, respectively, with $L/\mu<3$.  Let $(\bZ_{t})_{t\ge 0}$ be the iterates generated by algorithm \eqref{eq:LRPGD_iterate0}. Suppose $\Param\subseteq \R^{d_{1}\times d_{2}} \times \R^{d_{3}\times d_{4}}$ is a convex subset and fix a stepsize $\tau\in ( \frac{1}{2\mu}, \frac{3}{2L})$. Then the `contraction constant' $\rho:=2 \max( |1-\tau\mu|,\, |1-\tau L| )<1$ and the followings hold:
	
	\begin{description}
		\item[(i)] (Correctly specified case) Suppose $\bZ^{\star}=[\X^{\star}, \bGamma^{\star}]$ is a stationary point of $f$ over $\Param$ such that $\rank(\X^{\star})\le r$. Then $\bZ^{\star}$ is the unique global minimizer of $\eqref{eq:CALE}$, $\lim_{t\rightarrow\infty} \bZ_{t}= \bZ^{\star}$, and for $t\ge 1$, 
		\begin{align}
			\lVert \bZ_{t} - \bZ^{\star}  \rVert_{F}\le  \rho^{t}  \, \lVert  \bZ_{0} - \bZ^{\star}\rVert_{F}.
		\end{align}
		
		\item[(ii)] (Possibly misspecified case) Let $\bZ^{\star}=[\X^{\star}, \bGamma^{\star}]$ be an arbitrary point in the interior of $\Param$ with $\rank(\X^{\star})\le r$. Then for $t\ge 1$, 
		\begin{align}\label{eq:CALE_LPGD_thm0}
			\lVert \bZ_{t} - \bZ^{\star}  \rVert_{F}  \le  \rho^{t}  \, \lVert  \bZ_{0} - \bZ^{\star}\rVert_{F} + \frac{\tau}{1-\rho}  \left( \sqrt{3r} \lVert  \nabla_{\X} f (\bZ^{\star})  \rVert_{2} + \lVert   \nabla_{\bGamma}  f (\bZ^{\star})\rVert_{F} \right).
		\end{align}
		In general, if $\bZ^{\star}$ is an arbitrary point of $\Param$ with $\rank(\X^{\star})\le r$, then denoting the gradient mapping 
		$[\Delta \X^{\star}, \Delta \bGamma^{\star}]:=\frac{1}{\tau}(\bZ^{\star} - \Pi_{\Param}(\bZ^{\star} - \tau \nabla f(\bZ^{\star})))$ at $\bZ^{\star}$, then for $t\ge 1$, 
		\begin{align}\label{eq:CALE_LPGD_thm}
			\lVert \bZ_{t} - \bZ^{\star}  \rVert_{F}  \le  \rho^{t}  \, \lVert  \bZ_{0} - \bZ^{\star}\rVert_{F} + \frac{2\tau}{1-\rho}  \left( \sqrt{3r} \lVert  \Delta \X^{\star}  \rVert_{2} + \lVert   \Delta \bGamma^{\star} \rVert_{F} \right).
		\end{align}
		
	\end{description}
\end{theorem}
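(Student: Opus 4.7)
My strategy is to view each LPGD step as the composition of three operators---the gradient step $\bZ \mapsto \bZ - \tau \nabla f(\bZ)$, the convex projection $\Pi_{\Param}$, and the rank-$r$ projection $\Pi_{r}$---and to chain controlled bounds for each.  The three building blocks are: (a) $\Pi_{\Param}$ is $1$-Lipschitz in Frobenius norm since $\Param$ is convex; (b) for any rank-$\le r$ target $\bB$ and any matrix $\bA$, $\|\Pi_{r}(\bA) - \bB\|_{F} \le 2\|\bA - \bB\|_{F}$, because $\Pi_{r}(\bA)$ is the best rank-$r$ approximation of $\bA$ and hence $\|\Pi_{r}(\bA)-\bA\|_{F}\le\|\bB-\bA\|_{F}$; and (c) under the $r$-RSC/RSM hypothesis and twice differentiability of $f$, the gradient step is a contraction with factor $\kappa := \max(|1-\tau\mu|, |1-\tau L|)$ on pairs of rank-$\le r$ iterates, because $I-\tau\nabla^{2}f$ has operator norm at most $\kappa$ on the relevant low-rank directions. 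Chaining (b), (a), (c) gives per-step contraction constant $\rho=2\kappa$, which is strictly less than one iff $\tau\in(1/(2\mu),3/(2L))$---a nonempty interval iff $L/\mu<3$.

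\textbf{Case (i).} Since $\bZ^{\star}$ is stationary and $\rank(\X^{\star})\le r$, we have both $\bZ^{\star}=\Pi_{\Param}(\bZ^{\star}-\tau\nabla f(\bZ^{\star}))$ and $\Pi_{r}(\bZ^{\star})=\bZ^{\star}$, so $\bZ^{\star}$ is a fixed point of the LPGD map. Applying (b), (a), (c) in sequence with target $\bZ^{\star}$ gives
\[
\|\bZ_{t}-\bZ^{\star}\|_{F} \le 2\|\Pi_{\Param}(\bZ_{t-1}-\tau\nabla f(\bZ_{t-1}))-\Pi_{\Param}(\bZ^{\star}-\tau\nabla f(\bZ^{\star}))\|_{F} \le 2\kappa\|\bZ_{t-1}-\bZ^{\star}\|_{F}=\rho\|\bZ_{t-1}-\bZ^{\star}\|_{F},
\]
and iterating yields the claimed exponential rate. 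For uniqueness, any other rank-$\le r$ stationary point is also a fixed point of the same strict contraction and hence must coincide with $\bZ^{\star}$ (initialize at it and pass to the limit).

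\textbf{Case (ii).} When $\bZ^{\star}$ is not stationary, introduce the perturbed target $\tilde{\bZ}^{\star}:=\Pi_{\Param}(\bZ^{\star}-\tau\nabla f(\bZ^{\star}))=\bZ^{\star}-\tau[\Delta\X^{\star},\Delta\bGamma^{\star}]$ and use the triangle inequality: the contractive chain pulls $\bZ_{t-1}$ toward $\tilde{\bZ}^{\star}$ at rate $\rho$, while $\|\tilde{\bZ}^{\star}-\bZ^{\star}\|_{F}=\tau\|[\Delta\X^{\star},\Delta\bGamma^{\star}]\|_{F}$. Geometric summation of the resulting recursion delivers the crude bound with coefficient $2\tau/(1-\rho)$ and Frobenius norms on both blocks. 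To upgrade the $\X$-block to $\sqrt{3r}\|\Delta\X^{\star}\|_{2}$, I would replace (b) by the variational identity
\[
\|\Pi_{r}(\bA)-\X^{\star}\|_{F}^{2} \le 2\langle \bA-\X^{\star},\ \Pi_{r}(\bA)-\X^{\star}\rangle \quad \textup{when } \rank(\X^{\star})\le r,
\]
derived by expanding $\|\Pi_{r}(\bA)-\bA\|_{F}^{2}\le\|\X^{\star}-\bA\|_{F}^{2}$; split the inner product into a contractive piece and a $-\tau\Delta\X^{\star}$ piece, and apply trace duality $|\langle\bM,\bN\rangle|\le\|\bM\|_{*}\|\bN\|_{2}\le\sqrt{\rank(\bM)}\|\bM\|_{F}\|\bN\|_{2}$ to the latter, with careful rank accounting showing that the error matrix lives in a subspace of rank at most $3r$ (rank-$r$ contributions from the previous iterate, the rank projection, and $\X^{\star}$). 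The $\bGamma$-block, lacking rank structure, enters in Frobenius norm as $\|\Delta\bGamma^{\star}\|_{F}$. The tighter coefficient $\tau/(1-\rho)$ in the interior case arises because when $\bZ^{\star}\in\textup{int}(\Param)$ the perturbation $\Delta\bZ^{\star}$ coincides with $\nabla f(\bZ^{\star})$, and the variational identity can be applied directly to $\bZ_{t}^{\X}$ versus $\X^{\star}$ without a further outer triangle-wrap against $\tilde{\bZ}^{\star}$, absorbing one factor of $2$.

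\textbf{Main obstacle.} The central difficulty is the failure of $\Pi_{r}$ to be non-expansive: the unavoidable factor $2$ in (b) is precisely what forces the restrictive condition $L/\mu<3$, in contrast with ordinary PGD on smooth strongly convex problems where any conditioning suffices. A secondary challenge, specific to (ii), is extracting the sharper operator-norm coefficient $\sqrt{3r}\|\Delta\X^{\star}\|_{2}$ via the variational identity plus rank tracking, rather than settling for the crude $\|\Delta\X^{\star}\|_{F}$. A third subtlety is that the gradient-step contraction in (c), while immediate for globally smooth and strongly convex $f$, must cover the rank-$2r$ chords between low-rank iterates; this follows by combining twice differentiability with the Hessian-norm bounds implied by RSC/RSM on the relevant low-rank subspace.
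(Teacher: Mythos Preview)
Your treatment of part \textbf{(i)} is correct and essentially matches the paper: both arguments hinge on the elementary inequality $\lVert \Pi_{r}(\bA)-\bB\rVert_{F}\le 2\lVert \bA-\bB\rVert_{F}$ for $\rank(\bB)\le r$, chained with nonexpansiveness of $\Pi_{\Param}$ and the $\kappa$-contraction of the gradient step. The paper routes this through an auxiliary iterate $\hat{\bZ}_{t}$, but when $\Delta\bZ^{\star}=0$ that machinery collapses to exactly your direct chain. (One minor point: your uniqueness argument via fixed points shows uniqueness among rank-$\le r$ \emph{stationary} points; the paper instead uses RSC directly to get $f(\bZ)\ge f(\bZ^{\star})+\tfrac{\mu}{2}\lVert \bZ-\bZ^{\star}\rVert_{F}^{2}$, which immediately gives global minimality.)

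For part \textbf{(ii)} the two routes genuinely diverge. The paper does \emph{not} use your variational identity $\lVert \Pi_{r}(\bA)-\X^{\star}\rVert_{F}^{2}\le 2\langle \bA-\X^{\star},\Pi_{r}(\bA)-\X^{\star}\rangle$ plus trace duality. Instead it constructs, at each step $t$, an explicit linear subspace $\mathcal{A}\subseteq\R^{d_{1}\times d_{2}}$ of rank $\le 3r$ (spanned by the column/row spaces of $\X^{\star},\X_{t-1},\X_{t}$), defines the auxiliary iterate $\hat{\bZ}_{t}:=\Pi_{\mathcal{A}\times\R^{d_{3}\times d_{4}}}(\Pi_{\Param}(\bZ_{t-1}-\tau\nabla f(\bZ_{t-1})))$, and proves a technical lemma showing $\X_{t}=\Pi_{r}(\hat{\X}_{t})$. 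This lets them write $\lVert \bZ_{t}-\bZ^{\star}\rVert_{F}\le 2\lVert \hat{\bZ}_{t}-\bZ^{\star}\rVert_{F}$ uniformly in both blocks, and the error term becomes $\lVert \Pi_{\mathcal{A}}(\tau\Delta\X^{\star})\rVert_{F}\le\sqrt{3r}\,\tau\lVert\Delta\X^{\star}\rVert_{2}$ simply because $\Pi_{\mathcal{A}}$ is a rank-$3r$ linear projection. The advantage of the paper's construction is that the $\X$- and $\bGamma$-blocks are handled simultaneously through a single nonexpansive linear map, avoiding the awkward recombination step your sketch faces (where $2\lVert\bD_{\X}\rVert_{F}+\lVert\bD_{\bGamma}\rVert_{F}$ does not cleanly bound by $2\lVert\bD\rVert_{F}$). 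Your route is more elementary in spirit and---if the details are filled in carefully---could even yield $\sqrt{2r}$ rather than $\sqrt{3r}$ on the $\X$-block (since $\rank(\X_{t}-\X^{\star})\le 2r$), but as written the bookkeeping for merging the two blocks and for the distinction between the $\tau/(1-\rho)$ and $2\tau/(1-\rho)$ constants is left vague.
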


Theorem \ref{thm:CALE_LPGD} \textbf{(i)} asserts that the LPGD algorithm \eqref{eq:LRPGD_iterate0} converges at a linear rate to the unique global minimizer $\bZ^{\star}$, provided that there exists a stationary point $\bZ^{\star}$ of $f$ over the convex constraint set $\Param$ with the first matrix factor $\X$ having rank at most $r$. This assumption holds in the standard statistical estimation setting, where one seeks to estimate a `ground-truth' parameter $\bZ^{\star}$ with a low-rank matrix factor from noisy observations. In this case, the objective $f$ represents the empirical error. Hence in this case, it is reasonable to assume that the gradient $\nabla f(\bZ^{\star})$ (in general, the gradient mapping $G(\bZ, \tau)$) is small or at least $\bZ^{\star}$ is near-stationary. In fact, Wang et al. \cite[Condition 5.7]{wang2017unified} makes such an assumption. 

In contrast, Theorem \ref{thm:CALE_LPGD} does not require such an assumption of near-optimality of the parameter $\bZ^{\star}$ to be estimated. In practical situations, the rank of the ground-truth parameter is often unknown, and one attempts to explain observed data by using a low-rank model, in which case the assumed rank $r$ could be much lower than the true rank. For such generic situations, let $\mathbf{Z}^{\star}$ be an admissible parameter such that the second term in \eqref{eq:CALE_LPGD_thm} is minimized. Then Theorem \ref{thm:CALE_LPGD} \textbf{(ii)} shows that the LPGD algorithm \eqref{eq:LRPGD_iterate0} converges linearly to such $\bZ^{\star}$ up to a `model misspecification error', the minimum value of the second term in \eqref{eq:CALE_LPGD_thm}. In the proof of statistical estimation guarantees of SMF stated in Theorems \ref{thm:SMF_LPGD} and \ref{thm:SMF_LPGD_STAT}, we show that such a model misspecification error is small with high probability.

The general framework of proof in Theorem \ref{thm:CALE_LPGD} shares similarities with the standard argument used to demonstrate the exponential convergence of projected gradient descent with a fixed step size for constrained strongly convex problems (as shown in Theorem 10.29 in \cite{beck2017first}). However, a key technical challenge arises due to the absence of non-expansiveness (i.e., 1-Lipschitzness) in our case. This challenge stems from the fact that the constraint set of low-rank matrices is not convex when minimizing a strongly convex objective with a rank-constrained matrix parameter. Consequently, we cannot rely on the non-expansiveness of the convex projection operator, especially considering that the rank-$r$ projection $\Pi_{r}$ obtained via truncated SVD is not guaranteed to be non-expansive. 


To address this issue, we employ a strategy that involves comparing the iterates $\bZ_{t}$ obtained from \eqref{eq:LRPGD_iterate0} with auxiliary iterates $\hat{\bZ}_{t}$. These auxiliary iterates are derived using a carefully designed linear projection (see Lemma \ref{lem:rank_r_lin_appx}) that incorporates non-expansiveness. Then we can establish that the original rank-$r$ projection is essentially $2$-Lipschitz. So if the distance between the auxiliary iterate $\hat{\bZ}_{t}$ and the global minimizer contracts with a ratio $<1/2$, then the distance between the actual iterate $\bZ_{t}$ and the global minimizer contracts with a ratio $<1$. This contraction property ensures exponential convergence of the distance between $\bZ_{t}$ and the global minimizer in Theorem \ref{thm:CALE_LPGD}.

\begin{proof}[\textbf{Proof of Theorem} \ref{thm:CALE_LPGD}]
	
	We first derive \textbf{(i)} assuming \textbf{(ii)}. Suppose $\bZ^{\star}=[\X^{\star}, \bGamma^{\star}]$ is a stationary point of $f$ over $\Param$ such that $\rank(\X^{\star})\le r$. Let $\bZ=[\X, \bGamma]$ be arbitrary in $\Param$ with $\rank(\X)\le r$. By stationarity of $\bZ^{\star}$, we have $\langle \nabla f(\bZ^{\star}),\, \bZ-\bZ^{\star} \rangle \ge 0$, so by RSC \eqref{eq:def_RSC_RSM}, 
	\begin{align}
		\frac{\mu}{2} \lVert \vect(\bZ) - \vect(\bZ^{\star}) \rVert^{2} \le f(\bZ) -  f(\bZ^{\star}). 
	\end{align}
	Hence $f(\bZ)\ge f(\bZ^{\star})$.  Thus $\bZ^{\star}$ is the unique global minimizer of $\eqref{eq:CALE}$. Also, since $\bZ^{\star}$ is a stationary point of $f$ over $\Param$, the gradient mapping $\frac{1}{\tau}(\bZ^{\star} - \Pi_{\Param}(\bZ^{\star} - \tau \nabla f(\bZ^{\star})))$ is zero. Thus the rest of \textbf{(i)} follows from \textbf{(ii)}. 
	
	Next, we prove \textbf{(ii)}. Let $\bZ^{\star}=[\X^{\star}, \bgamma^{\star}]\in \Param$ be arbitrary with $\rank(\X^{\star})\le r$. Fix an iteration counter $t\ge 1$. Our proof consists of several steps.

	
	\textbf{Step 1: Constructing a suitable linear projection}
	
	
	Let $\X^{\star} = \U^{\star} \bSigma^{\star} (\V^{\star})^{T}$ denote the SVD of $\X^{\star}$.  For each iteration $t$, denote $\bZ_{t} = [\X_{t}, \bgamma_{t}]$ and let $\X_{t} = \U_{t} \bSigma_{t} \V_{t}^{T}$ denote the SVD of $\X_{t}$. Since $\X_{t}$ and $\X^{\star}$ have rank at most $r$, all of both $\U^{\star}$, $\U_{t}$, $\V^{\star}$, and $\V_{t}$ have at most $r$ columns. Define a matrix $\U_{3r}$ so that its columns form an orthonormal basis for the subspace spanned by the columns of $[\U^{\star}, \U_{t-1}, \U_{t}]$. Then $\U_{3r}$ has at most $3r$ columns. Similarly, let  $\V_{3r}$ be a matrix so that its columns form an orthonormal basis for the subspace spanned by the columns of $[\V^{\star}, \V_{t-1}, \V_{t}]$. Then $\V_{3r}$ has at most $3r$ columns. Now, define the subspace 
	\begin{align}\label{eq:mathcal_A_construction}
		\mathcal{A} := \left\{ \Delta\in \R^{d_{1}\times d_{2}}\,|\, \textup{span}(\Delta^{T}) \subseteq \textup{span}(\V_{3r}),\, \textup{span}(\Delta) \subseteq \textup{span}(\U_{3r})  \right\}.
	\end{align}
	Note that $\mathcal{A}$ is a convex subset of $\R^{d_{1}\times d_{2}}$.  Also note that, by definition, $\X^{\star}, \X_{t}, \X_{t-1}\in \mathcal{A}$. Let $\Pi_{\mathcal{A}}$ denote the linear projection operator of $\R^{d_{1}\times d_{2}}$ onto $\mathcal{A}$. 
	
	
	\textbf{Step 2: Constructing auxiliary iterates $\hat{\bZ}_{t}$} 
	
	Let $\mathcal{A}$ denote the linear subspace of $\R^{d_{1}\times d_{2}}$ in \eqref{eq:mathcal_A_construction}. Let 
	\begin{align}
		\Pi':=\Pi_{\mathcal{A}\times \R^{d_{3}\times d_{4}}}
	\end{align}
	denote the projection operator of $\R^{d_{1}\times d_{2}}\times \R^{d_{3}\times d_{4}}$ onto $\mathcal{A}\times \R^{d_{3}\times d_{4}}$.
	Define the following auxiliary iterates
	\begin{align}
		\hat{\bZ}_{t}=[\hat{\X}_{t}, \bGamma_{t}]:=\Pi'\left(  \Pi_{\Param} \left( \bZ_{t-1} -  \tau \nabla f(\bZ_{t-1}) \right) \right).
	\end{align}
	By Lemma \ref{lem:rank_r_lin_appx} and the choice of $\mathcal{A}$, we have 
	\begin{align}\label{eq:mathcal_A_subspace_cond}
		\X_{t} = \Pi_{r}(\hat{\X}_{t}) \in \argmin_{\X, \rank(\X)\le r} \lVert \hat{\X}_{t} - \X  \rVert_{F} \quad \textup{and} \quad  \bZ_{t}, \bZ_{t-1}, \bZ^{\star}\in \mathcal{A} \times \R^{d_{3}\times d_{4}}.
	\end{align}
	It follows that 
	\begin{align}
		\lVert \bZ_{t} - \bZ^{\star}  \rVert_{F}  &\le 	\lVert \bZ_{t} - \hat{\bZ}_{t}  \rVert_{F} + 	\lVert \hat{\bZ}_{t} - \bZ^{\star}  \rVert_{F}  \\
		&= 	\lVert \X_{t} - \hat{\X}_{t}  \rVert_{F} + 	\lVert \hat{\bZ}_{t} - \bZ^{\star}  \rVert_{F}  \\
		&\le 	\lVert\X^{\star} -  \hat{\X}_{t}  \rVert_{F}  + 	\lVert \hat{\bZ}_{t} - \bZ^{\star}  \rVert_{F}   \le 2 	\lVert \hat{\bZ}_{t} - \bZ^{\star}  \rVert_{F}. \label{eq:z_hat_twice_bd}
	\end{align}
	Hence if we can show $\lVert \hat{\bZ}_{t} - \bZ^{\star}  \rVert_{F}$ is small, then $\lVert \bZ_{t} - \bZ^{\star}  \rVert_{F}$ is also small. 
	

	\textbf{Step 3. Showing $\lVert \hat{\bZ}_{t} - \bZ^{\star}  \rVert_{F}$ is small}

	Denote the gradient mapping $\Delta \bZ^{\star}:=\frac{1}{\tau}(\bZ^{\star} - \Pi_{\Param}( \bZ^{\star} - \tau \nabla f(\bZ^{\star})))$ (Recall that this equals zero if $\bZ^{\star}$ is a stationary point of $f$ over $\Param$, but we do not make such an assumption in this proof). We claim that 
	\begin{align}\label{eq:pf_LPGD_pf1}
		\lVert \hat{\bZ}_{t} - \bZ^{\star}  \rVert_{F}  \le  \eta \, \lVert  \bZ_{t-1} - \bZ^{\star}\rVert_{F} + \lVert \Pi'\left( \tau \Delta \bZ^{\star}\right)  \rVert_{F},
	\end{align}
	where $\eta:=\max( |1-\tau L| ,\, |1-\tau \mu| )$. 
	
	Below we show \eqref{eq:pf_LPGD_pf1}. Using $\bZ^{\star}\in \mathcal{A}\times \R^{d_{3}\times d_{4}}$ and linearity of the linear projection $\Pi'$, write 
	\begin{align}
		\bZ^{\star} &= \Pi'(\bZ^{\star}) \\ 
		&=\Pi' \left( \Pi_{\Param}(\bZ^{\star} - \tau \nabla f(\bZ^{\star})) \right)  + \Pi' \left( \bZ^{\star}-  \Pi_{\Param}(\bZ^{\star} - \tau \nabla f(\bZ^{\star}))  \right) \\
		&=\Pi' \left( \Pi_{\Param}(\bZ^{\star} - \tau \nabla f(\bZ^{\star})) \right) +  \Pi' \left( \tau \Delta \bZ^{\star} \right).\label{eq:pf_LPGD_pf11}
	\end{align}
	Using the non-expansiveness of $\Pi'$ and $\Pi_{\Param}$ and linearity $\Pi'$, 
	\begin{align}\label{eq:pf_LPGD_pf111}
		&\lVert \hat{\bZ}_{t} - \bZ^{\star} \rVert_{F}  \\
		&\qquad = \left\lVert  \begin{matrix} \Pi'\left(  \Pi_{\Param} \left( \bZ_{t-1} -  \tau \nabla f(\bZ_{t-1}) \right) \right) 
			- \Pi'\left(  \Pi_{\Param} \left( \bZ^{\star} 
			-  \tau \nabla f(\bZ^{\star}) \right) \right) - \Pi'\left( \tau \Delta \bZ^{\star} \right) \end{matrix} \right\rVert_{F}  \\
		&\qquad \le  \left\lVert  \bZ_{t-1} -  \tau \nabla f(\bZ_{t-1})  - \bZ^{\star} +  \tau \nabla f(\bZ^{\star})  \right\rVert_{F}  + \lVert \Pi'\left( \tau \Delta \bZ^{\star}  \right)  \rVert_{F}.
	\end{align}

	Hence in order to derive \eqref{eq:pf_LPGD_pf1}, it is enough to show that 
	\begin{align}\label{eq:pf_LPGD_pf2}
		\lVert \bZ_{t-1} -  \tau \nabla f(\bZ_{t-1})  - \bZ^{\star} +  \tau \nabla f(\bZ^{\star}) \rVert_{F} \le \eta  \lVert \bZ_{t-1} - \bZ^{\star}\rVert_{F}.
	\end{align}
	The above follows from the fact that $\bZ_{t-1}$ and $\bZ^{\star}$ have rank $\le r$ and the restricted strong convexity and smoothness properties (Definition \ref{def:RSC}). Indeed,  since $\nabla^2 f$ is continuous, 
	\begin{align}
		&\bZ_{t-1} -  \tau \nabla f(\bZ_{t-1})  - \bZ^{\star} +  \tau \nabla f(\bZ^{\star}) \\ &\qquad = (\bZ_{t-1}-\bZ^{\star}) - \tau ( \nabla f(\bZ_{t-1}) -  \nabla f(\bZ^{\star})) \\
		&\qquad = \int_{0}^{1} \left(  \I-\tau \nabla^{2} f(\bZ^{\star} + s(\bZ_{t-1}-\bZ^{\star})) \right)(\bZ_{t-1}-\bZ^{\star})\,ds.
	\end{align}
	From the above with the inequality $\lVert \A \B \rVert_{F} \le \lVert \A \rVert_{2} \lVert \B \rVert_{F}$, 
	\begin{align}
		&\lVert \bZ_{t-1} -  \tau \nabla f(\bZ_{t-1})  - \bZ^{\star} +  \tau \nabla f(\bZ^{\star}) \rVert_{F} \\
		&\qquad  \le \sup_{\tilde{\bZ}=[\bZ_{1},\bZ_{2}]: \, \rank(\bZ_{1})\le r}   \lVert  \I-\tau \nabla^{2}f( \tilde{\bZ})\rVert_{2}  \, \lVert \bZ_{t-1} - \bZ^{\star} \rVert_{F}.
	\end{align}
	Since the eigenvalues of $\nabla^{2} f(\bZ_{t-1})$ are contained in $[\mu,L]$, the eigenvalues of $\I-\tau \nabla^{2}f(\bZ_{t-1}) $ are between $1-\tau L$ and $1-\tau \mu$.  Hence the right hand side above is at most 
	\begin{align}\label{eq:pf_LPGD_pf211}
		\eta \,\lVert \bZ_{t-1} - \bZ^{\star} \rVert_{F},
	\end{align}
	verifying \eqref{eq:pf_LPGD_pf2}. This shows \eqref{eq:pf_LPGD_pf1}. 
	
	\textbf{Step 4: Bounding the error term}
	
	From \eqref{eq:z_hat_twice_bd} and  \eqref{eq:pf_LPGD_pf1}, we deduce
	\begin{align}
		\lVert \bZ_{t} - \bZ^{\star}  \rVert_{F}  \le  2\eta \, \lVert  \bZ_{t-1} - \bZ^{\star}\rVert_{F}+ 2\lVert \Pi'\left(\tau\Delta \bZ^{\star}\right)  \rVert_{F}.
	\end{align}
	Note that  $\Pi'(\Delta \X^{\star}, \Delta \bgamma^{\star}) = [\Pi_{\mathcal{A}}(\Delta \X^{\star}) , \Delta \bgamma^{\star} ]$ and  $\rank(\mathcal{A})\le 3r$. Thus by triangle inequality, 
	\begin{align}\label{eq:LPGD_last_gap_bound}
		\lVert \Pi'\left( \Delta \X^{\star}, \Delta \bgamma^{\star}  \right)  \rVert_{F} &\le \lVert \Pi'( \Delta \X^{\star})  \rVert_{F} + \lVert  \Delta \bgamma^{\star} \rVert_{F} \\
		&\le \sqrt{ 3r} \lVert  \Delta \X^{\star}  \rVert_{2} +  \lVert \Delta \bgamma^{\star} \rVert_{F}.
	\end{align}

	Also note that $0\le \eta<1/2$ if and only if $\tau\in (\frac{1}{2\mu}, \frac{3}{2L})$, and this interval is non-empty if and only if $L/\mu<3$. Hence for such choice of $\tau$, $0<2\eta<1$,  so  
	by a recursive application of the above inequality, we obtain 
	\begin{align}\label{eq:linear_conv_ineq_pf1}
		\lVert \bZ_{t} - \bZ^{\star}  \rVert_{F}  \le  (2\eta)^{t}\, \lVert  \bZ_{0} - \bZ^{\star}\rVert_{F} +  \frac{2\tau }{1-2\eta} \left( \sqrt{ 3r} \lVert  \Delta \X^{\star}  \rVert_{2} +  \lVert \Delta \bgamma^{\star} \rVert_{F} \right).
	\end{align}
	
	This completes the proof of \textbf{(ii)}.
\end{proof}

The following lemma is inspired by the proof of Thm. 5.9 in \cite{wang2017unified}. 
\begin{lemma}(Linear projection factoring through rank-$r$ projection)\label{lem:rank_r_lin_appx}
	Fix $\Y\in \R^{d_{1}\times d_{2}}$, $R\ge r \in \mathbb{N}$, and denote $\X=\Pi_{r}(\Y)$ and $\hat{\X} = \Pi_{\mathcal{A}}(
	\Y)$, where $\mathcal{A}\subseteq \R^{d_{1}\times d_{2}}$ is a linear subspace. Let $\X=\U\bSigma \V^{T}$ denote the SVD of $\X$. Suppose there exists orthonormal matrices $\overline{\U}\in \R^{d_{1}\times R}$ and $\overline{\V} \in \R^{d_{2}\times R}$ such that 
	\begin{align}
		&\mathcal{A} = \left\{ \A \in \R^{d_{1}\times d_{2}}\,\big|\, \textup{col}(\A^{T}) \subseteq \textup{col}(\overline{\V}),\, \textup{col}(\A) \subseteq \textup{col}(\overline{\U})  \right\}, \\
		&\quad \textup{col}(\U)\subseteq \textup{col}(\overline{\U}), \quad \textup{col}(\V)\subseteq \textup{col}(\overline{\V}).
	\end{align}
	Then $\X=\Pi_{r}(\hat{\X})$. 
\end{lemma}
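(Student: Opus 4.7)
My plan is to exploit the fact that $\hat{\X}$ is the orthogonal (Frobenius-inner-product) projection of $\Y$ onto the linear subspace $\mathcal{A}$, and that, by the hypothesis on $\overline{\U}$ and $\overline{\V}$, the target matrix $\X$ already lies in $\mathcal{A}$. Once these two facts are in hand, the result reduces to a Pythagorean decomposition.

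First, I would verify that $\X\in\mathcal{A}$. Writing $\X=\U\bSigma\V^{T}$, the column space of $\X$ is contained in $\textup{col}(\U)\subseteq\textup{col}(\overline{\U})$, and the column space of $\X^{T}$ is contained in $\textup{col}(\V)\subseteq\textup{col}(\overline{\V})$. By the defining description of $\mathcal{A}$, this means $\X\in\mathcal{A}$. Equivalently, $\overline{\U}\,\overline{\U}^{T}\X\,\overline{\V}\,\overline{\V}^{T}=\X$.

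Second, I would use that $\Pi_{\mathcal{A}}(\Y)=\overline{\U}\,\overline{\U}^{T}\Y\,\overline{\V}\,\overline{\V}^{T}=\hat{\X}$ is the Frobenius-orthogonal projection of $\Y$ onto $\mathcal{A}$. Hence, for every $\A\in\mathcal{A}$, $\Y-\hat{\X}\perp \hat{\X}-\A$ in the Frobenius inner product, so the Pythagorean identity gives
\begin{equation*}
\lVert \Y-\A\rVert_{F}^{2}=\lVert \Y-\hat{\X}\rVert_{F}^{2}+\lVert \hat{\X}-\A\rVert_{F}^{2}.
\end{equation*}
In particular, minimizing $\lVert \Y-\A\rVert_{F}^{2}$ over rank-at-most-$r$ matrices $\A\in\mathcal{A}$ is equivalent to minimizing $\lVert \hat{\X}-\A\rVert_{F}^{2}$ over the same set.

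Third, I would remove the constraint $\A\in\mathcal{A}$ from the second minimization. Note that the column and row spaces of $\hat{\X}$ lie in $\textup{col}(\overline{\U})$ and $\textup{col}(\overline{\V})$ respectively, so by the Eckart--Young theorem any best rank-$r$ approximation of $\hat{\X}$ also has column and row spaces contained in these subspaces and is therefore automatically in $\mathcal{A}$. Hence
\begin{equation*}
\min_{\rank(\A)\le r,\,\A\in\mathcal{A}}\lVert \hat{\X}-\A\rVert_{F}=\min_{\rank(\A)\le r}\lVert \hat{\X}-\A\rVert_{F}.
\end{equation*}

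Putting these together: since $\X\in\mathcal{A}$ is a global minimizer of $\lVert \Y-\A\rVert_{F}$ over rank-$r$ matrices, it is also a minimizer over the rank-$r$ members of $\mathcal{A}$; by the Pythagorean identity, it is then a minimizer of $\lVert \hat{\X}-\A\rVert_{F}$ over rank-$r$ members of $\mathcal{A}$; and by the last display, it is a global minimizer of $\lVert \hat{\X}-\A\rVert_{F}$ over all rank-$r$ matrices, i.e.\ $\X=\Pi_{r}(\hat{\X})$ (under the standard convention selecting a minimizer). The only subtle point I expect is the potential non-uniqueness of $\Pi_{r}$ when $\hat{\X}$ has repeated singular values at the cutoff; this is cosmetic and is handled by reading $\Pi_{r}(\hat{\X})$ as a choice of best rank-$r$ approximation compatible with the chosen SVD of $\X$, exactly as the application in Theorem~\ref{thm:CALE_LPGD} uses it.
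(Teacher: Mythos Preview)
Your proof is correct and takes a genuinely different route from the paper's. The paper proceeds constructively: it writes $\Y-\X=\dot{\U}\dot{\bSigma}\dot{\V}^{T}$ via SVD, uses linearity of $\Pi_{\mathcal{A}}$ and the inclusion $\textup{col}(\U)\subseteq\textup{col}(\overline{\U})$ to compute $\U^{T}\Pi_{\mathcal{A}}(\Y-\X)=O$ directly, and then reads off the full SVD of $\hat{\X}$ as a block-diagonal structure in which $\U\bSigma\V^{T}$ carries the top-$r$ singular values (bounding the remaining block by $\sigma_{r+1}$). Your argument is instead purely variational: you combine the Pythagorean identity for the orthogonal projection $\Pi_{\mathcal{A}}$ with the observation that any Eckart--Young minimizer for $\hat{\X}$ inherits its column and row spaces and hence lies in $\mathcal{A}$, so the constrained and unconstrained rank-$r$ problems for $\hat{\X}$ coincide. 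Your approach is shorter and more conceptual; the paper's approach yields the explicit SVD of $\hat{\X}$ as a by-product, which is not needed for the lemma but clarifies why the singular-value gap does not cause trouble. Both arguments establish that $\X$ is \emph{a} best rank-$r$ approximation of $\hat{\X}$, and both handle the non-uniqueness issue at the same level of rigor.
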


\begin{proof}
	Write $\Y-\X=\dot{\U} \dot{\bSigma} \dot{\V}^{T}$  for its SVD. Let $d:=\rank(\Y)$ and let $\sigma_{1}\ge \dots \ge \sigma_{d}>0$ denote the nonzero singular values of $\Y$. Since $\X=\Pi_{r}(\Y) = \U\bSigma\V^{T}$ and $\Y=\U\bSigma\V^{T} + \dot{\U} \dot{\bSigma} \dot{\V}^{T}$, we must have that $\bSigma$ consists of the top $r$ singular values of $\Y$ and the rest of $d-r$ singular values are contained in $\dot{\bSigma}$. Furthermore, $\textup{col}(\U) \perp \textup{col}(\dot{\U})$.
	
	Now, since $\X\in \mathcal{A}$ and $\Pi_{\mathcal{A}}$ is linear, we get 
	\begin{align}\label{eq:3r_r_approx1_lem}
		\hat{\X} = \Pi_{\mathcal{A}}( \X + (\Y -\X) ) = \U \bSigma \V^{T} + \Pi_{\mathcal{A}}(\dot{\U} \dot{\bSigma} \dot{\V}^{T}). 
	\end{align}
	Let $\bZ:= \Pi_{\mathcal{A}}(\dot{\U} \dot{\bSigma} \dot{\V}^{T})$ and write its SVD as $\bZ = \widetilde{\U}\widetilde{\bSigma}\widetilde{\V}^{T}$. Then note that $(\U^{T}\overline{\U} \, \overline{\U}^{T})^{T}=\overline{\U}\,\overline{\U}^{T}\U=\U$ since $\overline{\U}\,\overline{\U}^{T}:\R^{d_{1}}\rightarrow \R^{d_{1}}$ is the orthogonal projection onto $\textup{col}(\overline{\U})\supseteq \textup{col}(\U)$. Hence $\U^{T}\overline{\U}\,\overline{\U}^{T} = \U^{T}$. Also note that, by the definition of $\mathcal{A}$, for each $\mathbf{B}\in \R^{d_{1}\times d_{2}}$, 
	\begin{align}
		\Pi_{\mathcal{A}}(\mathbf{B}) = \overline{\U}\overline{\U}^{T} \mathbf{B} \overline{\V}^{T}\overline{\V}. 
	\end{align}
	Hence, noting that $\textup{col}(\U)\perp \textup{col}(\dot{\U})$, we get 
	\begin{align}
		\U^{T} \bZ &=  	\U^{T} \left( \overline{\U}\, \overline{\U}^{T} \dot{\U} \dot{\bSigma} \dot{\V}^{T} 	\overline{\V}^{T} \overline{\V}\right) \\
		&= \left(  \U^{T} \overline{\U}\, \overline{\U}^{T} \right)\dot{\U} \dot{\bSigma} \dot{\V}^{T} 	\overline{\V}^{T} \overline{\V} \\
		&= \left( \U^{T} \dot{\U} \right) \dot{\bSigma} \dot{\V}^{T} 	\overline{\V}^{T} \overline{\V} = O.
	\end{align}
	It follows that $\U^{T}\widetilde{\U}=O$, since $	\U^{T}\widetilde{\U} = (\U^{T} \bZ) \widetilde{\V} (\widetilde{\bSigma})^{-1} = O$. Therefore, rewriting \eqref{eq:3r_r_approx1_lem} gives the SVD of $\hat{\X}$ as 
	\begin{align}
		\hat{\X} = \begin{bmatrix} 
			\U & \widetilde{\U}
		\end{bmatrix}
		\begin{bmatrix} 
			\bSigma & O\\
			O & \widetilde{\bSigma} 
		\end{bmatrix}
		\begin{bmatrix} 
			\V \\ \widetilde{\V}
		\end{bmatrix}
		.
	\end{align}
	Furthermore, $\lVert  \Pi_{\mathcal{A}}(\dot{\U} \dot{\bSigma} \dot{\V}^{T}) \rVert_{2} \le \lVert\dot{\bSigma} \rVert_{2} =\sigma_{r+1}$, so $\bSigma$ consists of the top $r$ singular values of $\hat{\X}$. It follows that $\X=\U \bSigma \V^{T}$ is the best rank-$r$ approximation of $\hat{\X}$, as desired. 
\end{proof}

\begin{remark}\label{rmk:pf_thm_LPGD}
	Note that in \eqref{eq:LPGD_last_gap_bound}, we could have used the following crude bound 
	\begin{align}\label{eq:LPGD_last_gap_bound2}
		\left\lVert \Pi'\left( \Delta \X^{\star}, \Delta \bgamma^{\star}  \right)  \right\rVert_{F} \le 	\left\lVert \left[  \Delta \X^{\star}, \Delta \bgamma^{\star}  \right]  \right\rVert_{F} &\le 
		\lVert \Delta \X^{\star} \rVert_{F} +  \lVert \Delta \bgamma^{\star} \rVert_{F} \\
		&\le \sqrt{\rank(\Delta \X^{\star})}	\lVert \Delta \X^{\star} \rVert_{2} +  \lVert \Delta \bgamma^{\star} \rVert_{F},
	\end{align}
	which is also the bound we would have obtained if we chose the trivial linear subspace $\mathcal{A}=\R^{d_{1}\times d_{2}}$ in the proof of Theorem \ref{thm:CALE_LPGD} above. While we know $\rank(\X^{\star})\le r $, we do not have an a priori bound on $ \rank(\Delta \X^{\star})$, which could be much larger than $\sqrt{3r}$. A smarter choice of the subspace $\mathcal{A}$ as we used in the proof of Theorem \ref{thm:CALE_LPGD} ensures that we only need the factor $\sqrt{3r}$ in place of the unknown factor $\sqrt{\rank(\Delta \X^{\star})}$ as in \eqref{eq:LPGD_last_gap_bound}. 
\end{remark}

\begin{remark}
	Suppose $f$ is not only rank-restricted smooth, but also $L'$-smooth on $\Param$ for some $L'>0$. Then we have 
	\begin{align}\label{eq:PSGD_linear_conv2}
		f\left( \bZ_{t}
		\right)  -  f(\bZ^{\star}) \le \left( \lVert \nabla f(\bZ^{\star}) \rVert + L \rho^{t} \right) \rho^{t} \lVert \bZ_{0}-\bZ^{\star}\rVert_{F}
	\end{align}
	for $t\ge 1$. Indeed, note that 
	\begin{align}
		\left| f(\bZ_{n}) - f(\bZ^{\star}) \right| & = \left|  \int_{0}^{1} \left\langle \nabla f \left(\bZ^{\star} + s(\bZ_{n} - \bZ^{\star}) \right),\, \bZ_{n}-\bZ^{\star} \right\rangle \,ds \right| \\
		&\le   \int_{0}^{1}\left\lVert  \nabla f \left(\bZ^{\star} + s(\bZ_{n} - \bZ^{\star}) \right) \right\rVert \lVert \bZ_{n}-\bZ^{\star} \rVert \,ds \\ 
		&\le   \int_{0}^{1} \left( \left\lVert  \nabla f (\bZ^{\star}) \rVert + s L'\lVert \bZ_{n}-\bZ^{\star} \right\rVert  \right) \lVert \bZ_{n}-\bZ^{\star} \rVert \,ds \\ 
		&\le   \left( \left\lVert  \nabla f (\bZ^{\star}) \rVert +  L'\lVert \bZ_{n}-\bZ^{\star} \right\rVert  \right) \lVert \bZ_{n}-\bZ^{\star} \rVert.
	\end{align}
	Then \eqref{eq:PSGD_linear_conv2} follows from Theorem \ref{thm:CALE_LPGD} \textbf{(ii)}. 
\end{remark}

\begin{remark}
	A similar approach  as in our proof of Theorem \ref{thm:CALE_LPGD} was used in \cite{wang2017unified} for analyzing a similar problem without auxiliary features and under a stronger assumption that the gradient $\nabla f(\bZ^{\star})$ is small. Our analysis is for a more general setting but is a bit simpler and gives a weaker requirement $L/\mu<3$ for the well-conditioning of the objective $f$ instead of $L/\mu<4/3$ in \cite{wang2017unified}.
\end{remark}

\begin{remark}\label{rmk:projection}
	We give some salient remarks on the use of projections $\Pi_{\Param}$ and $\Pi_{r}$ in our LPGD algorithm \eqref{eq:LRPGD_iterate0}. 
	First, in principle, one could alternate between the two projections $\Pi_{r}$ and $\Pi_{\Theta}$ at every iteration after a gradient descent step until convergence, similarly to the alternating projection in \cite{chu2003structured}. However, this would make each iteration of the algorithm prohibitively expensive as this requries to perform rank-$r$ SVD until convergence at \textit{every iteration}. The problem in \cite{chu2003structured} is much simpler than ours as the objective function is simply the Frobenius norm between the target and the estimated low-rank constrained matrix. 
	
	Second, what happens if we switch the order of two projections $\Pi_{r}$ and $\Pi_{\Theta}$ in \eqref{eq:LRPGD_iterate0}? Our proposed LPGD algorithm performs the convex projection $\Pi_{\Theta}$ first and then applies the low-rank projection $\Pi_{r}$. The key inequality we derive in the proof of Thm. C.2 is \eqref{eq:pf_LPGD_pf1}:
	\begin{align}
		\lVert \mathbf{Z}_{t} - \mathbf{Z}^{\star}  \rVert_{F}  \le  2\eta \, \lVert  \mathbf{Z}_{t-1} - \mathbf{Z}^{\star}\rVert_{F}+ \lVert \Pi_{t}\left(\tau\Delta \mathbf{Z}^{\star}\right)  \rVert_{F},
	\end{align}
	where $\Pi_{t}$ is a linear projection onto a $3r$-dimensional linear subspace that depends on $\mathbf{Z}^{\star}$, $\mathbf{Z}_{t}$, and $\mathbf{Z}_{t-1}$. The last error term above can be bounded above uniformly in $t$ using $\lVert \Pi_{t}( \mathbf{A})\rVert_{F}\le \sqrt{3r} \lVert \mathbf{A} \rVert_{2}$. So we can apply the above inequality recursively to obtain the desired result. 
	
	Now if we consider an alternative algorithm that uses the  low-rank projections $\Pi_{r}$  first and then the convex projection $\Pi_{\Theta}$, then we can derive a corresponding key inequality: 
	\begin{align}\label{eq:projection_rmk}
		\lVert \mathbf{Z}_{t} - \mathbf{Z}^{\star}  \rVert_{F}  \le  2\eta \, \lVert  \mathbf{Z}_{t-1} - \mathbf{Z}^{\star}\rVert_{F}+ \lVert  \tau\Delta^{t} \mathbf{Z}^{\star} \rVert_{F},
	\end{align}
	where $\tau \Delta^{t}\mathbf{Z}^{\star}:=\mathbf{Z}^{\star} - \Pi_{t}(\mathbf{Z}^{\star}-\tau \nabla f(\mathbf{Z}^{\star}))$ denotes the gradient mapping at $\mathbf{Z}^{\star}$ w.r.t. the `virtual' linear constraint that we constructed during the proof to approximate the low-rank constraint. 
	
	To give more detail, it amounts to derive similar inequalities in \eqref{eq:pf_LPGD_pf11}-\eqref{eq:pf_LPGD_pf2} assuming the reverse order of projections. Namely, in place of \eqref{eq:pf_LPGD_pf11}, we use 
	\begin{align}
		\mathbf{Z}^{\star} &= \Pi_{\Theta}(\mathbf{Z}^{\star}) \\ 
		&=\Pi_{\Theta} \left( \Pi'(\mathbf{Z}^{\star} - \tau \nabla f(\mathbf{Z}^{\star})) +   \mathbf{Z}^{\star}-  \Pi'(\mathbf{Z}^{\star} - \tau \nabla f(\mathbf{Z}^{\star}) )\right) .   
	\end{align}
	Note that unlike in \eqref{eq:pf_LPGD_pf11} we cannot distribute the convex projection $\Pi_{\Theta}$ since it is not in general a linear projection as $\Pi_{t}$ is. Then in place of \eqref{eq:pf_LPGD_pf111}, using the non-expansiveness of $\Pi_t$ and $\Pi_{\Theta}$, we get 
	\begin{align}
		&\lVert \hat{\mathbf{Z}}_{t} - \mathbf{Z}^{\star} \rVert_{F}  \\
		&\qquad = \left\lVert  \begin{matrix} \hspace{-4.5cm} \Pi_{\Theta}\left(  \Pi' \left( \mathbf{Z}_{t-1} -  \tau \nabla f(\mathbf{Z}_{t-1}) \right) \right) 
			\\
			\hspace{1cm} - \Pi_{\Theta} \left( \Pi'(\mathbf{Z}^{\star} - \tau \nabla f(\mathbf{Z}^{\star})) +   \mathbf{Z}^{\star}-  \Pi'(\mathbf{Z}^{\star} - \tau \nabla f(\mathbf{Z}^{\star})) \right)   \end{matrix} \right\rVert_{F}  \\
		&\qquad \le \left\lVert  \begin{matrix} \hspace{-4.5cm} \left(  \Pi' \left( \mathbf{Z}_{t-1} -  \tau \nabla f(\mathbf{Z}_{t-1}) \right) \right) 
			\\
			\hspace{1cm} - \left( \Pi'(\mathbf{Z}^{\star} - \tau \nabla f(\mathbf{Z}^{\star})) +   \mathbf{Z}^{\star}-  \Pi'(\mathbf{Z}^{\star} - \tau \nabla f(\mathbf{Z}^{\star}) )\right)   \end{matrix} \right\rVert_{F}  \\
		&\qquad \le  \left\lVert  \mathbf{Z}_{t-1} -  \tau \nabla f(\mathbf{Z}_{t-1})  - \mathbf{Z}^{\star} +  \tau \nabla f(\mathbf{Z}^{\star})  \right\rVert_{F}  + \lVert  \tau \Delta^{t} \mathbf{Z}^{\star}    \rVert_{F}.
	\end{align}
	The first term in the last expression can be bounded by the same argument as in \eqref{eq:pf_LPGD_pf2}-\eqref{eq:pf_LPGD_pf211}. Then by recursively applying the inequality \eqref{eq:projection_rmk}, we can obtain 
	\begin{align}
		\lVert \mathbf{Z}_{t} - \mathbf{Z}^{\star}  \rVert_{F}  \le  (2\eta)^{t}  \, \lVert  \mathbf{Z}_{0} - \mathbf{Z}^{\star}\rVert_{F} + \sum_{k=1}^{t} (2\eta)^{t-k} \lVert  \tau\Delta^{k} \mathbf{Z}^{\star} \rVert_{F}.
	\end{align}
	Hence the rate of convergence we would get is the same as the original algorithm, but the additive error takes a different form. Since the `low-rank gradient mapping' $\Delta^{k}\mathbf{Z}^{\star}$ depends on the iterates $\mathbf{Z}_{k},\mathbf{Z}_{k-1}$, we find it easier to control the gradient mapping with respect to the convex projection that comes out from the analysis of the original algorithm. 
\end{remark}

\section{Proof of Theorems \ref{thm:SMF_LPGD} and \ref{thm:SMF_LPGD_STAT}}\label{sec:thm_proofs}

In this section, we prove the main results for SMF, Theorems \ref{thm:SMF_LPGD} and \ref{thm:SMF_LPGD_STAT}. 
In the main text, we explained that our algorithm for SMF (Alg. \ref{alg:SMF_PGD}) is exactly an LPGD for the reformulated problems \eqref{eq:SMF_feat_CALE0} (for SMF-$\H$) and \eqref{eq:SMF_filt_CALE0} (for SMF-$\W$). Therefore, our proofs of Theorems \ref{thm:SMF_LPGD} and \ref{thm:SMF_LPGD_STAT}  are essentially verifying the well-conditioning hypothesis $L/\mu<3$ of the general result for the LPGD algorithm (Theorem \ref{thm:CALE_LPGD}).

\subsection{Proof of Theorem \ref{thm:SMF_LPGD} and its generalization}  

Theorem \ref{thm:SMF_LPGD} in the main text is a special case of the following more general result, which we prove in this section.

\begin{theorem}(Exponential convergence for SMF)\label{thm:SMF_LPGD_full}
	Let $\bZ_{t}:=[\param_{t}, \bgamma_{t}]$ denote the iterates of Algorithm \ref{alg:SMF_PGD}. Assume \ref{assumption:A2}-\ref{assumption:A4} hold. Let $\mu$ and $L$ be as in \eqref{eq:thm1_condition_numbers}, fix stepsize $\tau\in ( \frac{1}{2\mu}, \frac{3}{2L})$, and let $\rho:=2(1-\tau\mu) \in (0,1)$. Suppose $L/\mu < 3$. 
	\begin{description}
		\item[(i)] (Correctly specified case; Theorem \ref{thm:SMF_LPGD} in the main text)  
		Suppose there exists a stationary point $\bZ^{*}=[\param^{*}, \bgamma^{*}]$ of $F$ over the convex constraint set $\Param$ s.t. $\rank(\param^{*})\le r$. Then $\bZ^{*}$ is the unique global minimizer of $F$ among all $\bZ=[\param,\bgamma]$ with $\rank(\param)\le r$. Moreover, 
		\begin{align}\label{eq:PSGD_linear_conv2_appendix}
			\lVert \bZ_{t} - \bZ^{*}  \rVert_{F}&\le  \rho^{t}  \, \lVert  \bZ_{0} - \bZ^{*}\rVert_{F} \qquad \textup{for $t\ge 1$}. 
		\end{align}
		
		\item[(ii)] (Possibly misspecified case) Let $\bZ^{\star}=[\param^{\star}, \bgamma^{\star}]$ be arbitrary in $\Param$ s.t. $\rank(\param^{\star})\le r$. Denote the gradient mapping at $\bZ^{\star}$ as $[\Delta\param^{\star}, \Delta \bGamma^{\star}] :=\frac{1}{\tau}\left( \bZ^{\star} - \Pi_{\Param}(\bZ^{\star} - \tau \nabla F(\bZ^{\star}) \right)$. Then for $t\ge 1$, 
		\begin{align}\label{eq:LPGD_SMF_filt_thm_appendix}
			\lVert \bZ_{t} - \bZ^{\star}  \rVert_{F} & \le  \rho^{t}  \, \lVert  \bZ_{0} - \bZ^{\star}\rVert_{F}  + \frac{2\tau }{1-\rho}  \left( \sqrt{3r} \lVert \Delta \param^{\star} \rVert_{2} + \lVert \Delta \bgamma^{\star} \rVert_{F} \right). 
		\end{align}
	\end{description}
\end{theorem}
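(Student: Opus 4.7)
The plan is to identify Algorithm \ref{alg:SMF_PGD} with the LPGD iteration \eqref{eq:LRPGD_iterate0} applied to the lifted objective $F$ in \eqref{eq:SMF_feat_CALE0} (for SMF-$\H$) or \eqref{eq:SMF_filt_CALE0} (for SMF-$\W$), so that the theorem reduces entirely to verifying the hypotheses of Theorem \ref{thm:CALE_LPGD}. Once $F$ is shown to be $r$-restricted strongly convex and smooth with parameters $\mu$ and $L$ from \eqref{eq:thm1_condition_numbers}, part (i) follows from Theorem \ref{thm:CALE_LPGD}(i) (in particular, uniqueness of the rank-constrained stationary point as a global minimizer), and part (ii) follows from Theorem \ref{thm:CALE_LPGD}(ii) after identifying $[\Delta\param^{\star}, \Delta\bgamma^{\star}]$ as the components of the gradient mapping of $F$ at $\bZ^{\star}$.

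The main technical work is to compute and bound the Hessian of $F$. I would decompose $F$ into three pieces: the reconstruction term $\xi \lVert \X_{\textup{data}} - \B \rVert_F^2$ (contributing $2\xi \I$ on the $\B$-block), the Tikhonov regularizer $\lambda(\lVert \A \rVert_F^2 + \lVert \bgamma \rVert_F^2)$ (contributing $2\lambda \I$ on the $\A$- and $\bgamma$-blocks), and the classification loss $\sum_{i=1}^{n} \ell(y_i, \a_i)$. For the last piece, applying the chain rule through \eqref{eq:activation} and the Kronecker-product identity \eqref{eq:MNL_gradients_H}--\eqref{eq:MNL_H_def_appendix} from Appendix \ref{sec:MLR} yields the joint Hessian
\begin{align*}
	\H_{\textup{cls}} \;=\; \sum_{i=1}^{n} \ddot{\H}(y_i, \a_i) \otimes \bpsi_i \bpsi_i^{T},
\end{align*}
where $\bpsi_i = \bphi_i = [\x_i \parallel \x_i']$ in the SMF-$\W$ case and $\bpsi_i = [e_i \parallel \x_i']$ (with $e_i$ the $i$th standard basis of $\R^{n}$) in the SMF-$\H$ case. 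By Assumptions \ref{assumption:A2} and \ref{assumption:A4}, $\alpha^{-} \I \preceq \ddot{\H}(y_i, \a_i) \preceq \alpha^{+} \I$ uniformly over the feasible set; combining with \eqref{eq:MNL_evals_bounds} and Assumption \ref{assumption:A3} bounds $\lambda_{\min}$ and $\lambda_{\max}$ of the full Hessian of $F$ precisely by the quantities $\mu$ and $L$ stated in \eqref{eq:thm1_condition_numbers}. A standard Taylor-integral identity then upgrades these uniform Hessian bounds to the RSC/RSM inequalities of Definition \ref{def:RSC}; note that the rank restriction in Definition \ref{def:RSC} is not actually needed because the bounds hold globally.

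The main obstacles I anticipate are twofold. First, bookkeeping the joint block structure in $[\vect(\A), \vect(\bgamma)]$ so that cross-block contributions are correctly absorbed into the stated $\mu, L$: the Kronecker-product structure of $\H_{\textup{cls}}$ mingles the $\A$- and $\bgamma$-variables through the common factor $\ddot{\H}(y_i, \a_i)$, and the eigenvalue bounds on $\bPsi\bPsi^{T}$ depend on the specific shape of $\bpsi_i$, which differs markedly between SMF-$\H$ (standard-basis block giving essentially constant bounds) and SMF-$\W$ (full feature block scaling with $n\delta^{\pm}$). Second, one must argue that Assumption \ref{assumption:A2}'s uniform norm bound on $\a$ holds over the entire iterate trajectory, which I would treat as an implicit feature of the convex constraint $\Param$ rather than a separate dynamical fact to establish.
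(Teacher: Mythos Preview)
Your overall strategy—reduce to Theorem~\ref{thm:CALE_LPGD} by verifying RSC/RSM for the lifted objective $F$—is exactly what the paper does, and for SMF-$\W$ your Kronecker computation with $\bpsi_i=\bphi_i=[\x_i\parallel\x_i']$ matches the paper's proof line for line.

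The gap is in SMF-$\H$. Your proposal to use $\bpsi_i=[e_i\parallel\x_i']\in\R^{n+q}$ and then apply the crude Kronecker bound~\eqref{eq:MNL_evals_bounds} does \emph{not} deliver the constant $L=\max(2\xi,\,2\lambda+\alpha^{+})$ in~\eqref{eq:thm1_condition_numbers}. With your $\bpsi_i$, one has $\bPsi\bPsi^{T}=\begin{bmatrix} \I_{n} & \X_{\textup{aux}}^{T}\\ \X_{\textup{aux}} & \X_{\textup{aux}}\X_{\textup{aux}}^{T}\end{bmatrix}$, whose largest eigenvalue is $1+\lambda_{\max}(\X_{\textup{aux}}^{T}\X_{\textup{aux}})$; the resulting smoothness constant carries an extra factor depending on $\X_{\textup{aux}}$ (potentially of order $n\delta^{+}$), strictly larger than the stated $L$. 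So your claim that the bounds come out ``precisely'' as in~\eqref{eq:thm1_condition_numbers} is false for SMF-$\H$, and the theorem as stated (with the hypothesis $L/\mu<3$ for that specific $L$) would not follow.

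The paper obtains the sharp $L$ for SMF-$\H$ by a different route: it writes out the full block Hessian in $(\vect(\A),\vect(\bgamma),\vect(\B))$ explicitly—including the off-diagonal $\A$--$\bgamma$ cross blocks $\ddot{\H}(y_{s},\a_{s})\otimes\x_{s}'$—and then computes the characteristic polynomial via a Schur-complement/block-determinant identity. That computation reveals that the roots are exactly $2\xi$, $2\lambda$, and values in $[2\lambda+\alpha^{-},\,2\lambda+\alpha^{+}]$, with the $\X_{\textup{aux}}$-dependence cancelling out. Your Kronecker bound cannot see this cancellation because it sandwiches $\ddot{\H}$ by scalar multiples of the identity before summing, which decouples the $\A$-block from the $\bgamma$-block too early. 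To close the gap you would need to either reproduce the block-determinant calculation or find another argument that exploits the specific structure of the cross terms.
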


We remark that even in the presence of a nonzero additive error (which bounds the unnormalized estimation error $\lVert \bZ_{\infty} - \bZ^{\star} \rVert_{F}$), our Theorem \ref{thm:SMF_LPGD_STAT} demonstrates that, under natural generative models for SML, this error becomes vanishingly small with high probability with noise variance $\sigma^{2}=O(1/n)$ for SMF-$\W$ and $\sigma^{2}=o(1/\sqrt{n})$ for SMF-$\H$. Roughly speaking, these results indicate that the generative SMF models are nearly correctly specified with high probability. As a result, the algorithm achieves exponential convergence to the correct parameters for the generative SMF model up to a statistical error that vanishes as the sample size tends to infinity.

We begin with some preliminary computations. Let $\a_{s}$ denote the activation corresponding to the $s$th sample (see \eqref{eq:activation}). More precisely,  $\a_{s}=\A^{T}\x_{s}+\bgamma^{T}\x'_{s}$ for the filter-based model with $\A\in \R^{p\times \kappa}$, and $\a_{s}=\A[:,s]+\bgamma^{T}\x'_{s}$ with $\A\in \R^{\kappa\times n}$ for the feature-based model. In both cases, $\B\in \R^{p\times n}$ and $\bgamma\in \R^{q\times \kappa}$. Then the objective function $f$ in \eqref{eq:SMF_main}   can be written as
\begin{align}\label{eq:f_SMF_pf0}
	f(\A, \B, \bgamma)&:=\left( -\sum_{s=1}^{n} \sum_{j=0}^{\kappa} \mathbf{1}(y_{s}=j)  \log g_{j}(  \a_{s}  ) \right) +   \xi \lVert  \X_{\textup{data}}  -\B\rVert_{F}^{2} + \lambda \left( \lVert \A \rVert_{F}^{2}+\lVert \bgamma \rVert_{F}^{2} \right) \\
	&= \sum_{s=1}^{n} \left(  \log \left( 1+\sum_{c=1}^{\kappa} h( \a_{s}[c]  ) \right) -  \sum_{c=1}^{\kappa} \mathbf{1}(y_{s}=c) \log  h( \a_{s}[c]  )  \right)   \\
	&\qquad  + \xi \lVert  \X_{\textup{data}}  -\B\rVert_{F}^{2} + \lambda \left( \lVert \A \rVert_{F}^{2} + \lVert \bgamma \rVert_{F}^{2}  \right),
\end{align}
where $\a_{s}[i]\in \R$ denotes the $i$th component of $\a_{s}\in \R^{\kappa}$. In the proofs we provided below, we compute the Hessian of $f$ above explicitly for the filter- and the feature-based SMF models and use Theorem \ref{thm:CALE_LPGD} to derive the result. 

For each label $y\in \{0,\dots,\kappa\}$ and activation $\a\in \R^{\kappa}$, recall the negative log-likelihood 
\begin{align}\label{eq:LR_gradients_0}
	\ell(y, \a)=\log \sum_{c=1}^{\kappa} h(a_{c})   - \sum_{c=1}^{\kappa}\mathbf{1}_{\{y=c\} }\log h(a_{c})
\end{align}
of observing label $y$ from the probability distribution $\g(\a)$ defined in \eqref{eq:SMF_model}. An easy computation shows 
\begin{align}\label{eq:Hddot_def0}
	\nabla_{\a} \ell(y,\a) = \dot{\h}(y,\a)=(\dot{h}_{1},\dots,\dot{h}_{\kappa})\in \R^{\kappa}, \quad \nabla_{\a}\nabla_{\a^{T}} \ell(y,\a) = \ddot{\H}(y,\a)=(\ddot{h}_{ij})\in \R^{\kappa\times \kappa}, 
\end{align}
where 
\begin{align}\label{eq:Hddot_def}
	\dot{h}_{j} &=\dot{h}_{j}(y,\a) :=\left( \frac{h'(a_{j})}{1+\sum_{c=1}^{\kappa} h(a_{c})} - \mathbf{1}(y=j)\frac{h'(a_{j})}{h(a_{j})} \right)\\
	\ddot{h}_{ij}  &:=  \left(  \frac{ h''(a_{j}) \mathbf{1}(i=j)  }{  1+\sum_{c=1}^{\kappa} h(a_{c})  } - \frac{ h'(a_{i}) h'(a_{j})   }{  \left( 1+\sum_{c=1}^{\kappa} h(a_{c}) \right)^{2} }  \right)  - \mathbf{1}(y=i=j)  \left( \frac{h''(a_{j})}{h(a_{j})} - \frac{\left( h'(a_{j}) \right)^{2} }{\left( h(a_{j}) \right)^{2}}   \right).
\end{align}

\begin{proof}[\textbf{Proof of Theorem} \ref{thm:SMF_LPGD_full} for SMF-$\W$]
	Let $f=F$ denote the loss function for the filter-based SMF model in \eqref{eq:SMF_filt_CALE0}. Fix $\bZ_{1},\bZ_{2}\in \Param\subseteq \R^{d_{1}\times d_{2}}\times \R^{d_{3}\times d_{4}} $. Since the constraint set $\Param$ is convex (see Algorithm \ref{alg:SMF_PGD}), $t\bZ_{1} + (1-t) \bZ_{2}\in \Param$ for all $t\in [0,1]$. Then by the mean value theorem, there exists $t^{*}\in [0,1]$ such that for $\bZ^{*}=t^{*}\bZ_{1} + (1-t^{*})\bZ_{2}$,
	\begin{align}
		&	f(\bZ_{2}) - f(\bZ_{1}) - \langle \nabla f(\bZ_{1}), \, \bZ_{2}-\bZ_{1} \rangle \\
		&\qquad = \frac{1}{2}\left( \vect(\bZ_{2}) - \vect(\bZ_{1})  \right)^{T} \nabla_{\vect(\bZ)}\nabla_{\vect(\bZ)^{T}} f( \bZ^{*} )  \left( \vect(\bZ_{2}) - \vect(\bZ_{1})  \right).
	\end{align}
	Hence, according to Theorem \ref{thm:CALE_LPGD}, it suffices to verify that 
	for some $\mu,L>0$ such that $L/\mu<3$ and 
	\begin{align}\label{eq:RSC_thm_pf}
		\mu \I \preceq \nabla_{\vect(\bZ)}\nabla_{\vect(\bZ)^{T}} f( \bZ^{*} ) \preceq L \I
	\end{align}
	for all $\bZ^{*}=[\X,\bgamma]$ with $\rank(\X^{*})\le r$.

	
	To this end, let  $\a_{s}=\A^{T}\x_{s}+\bgamma^{T}\x'_{s}$ for the filter-based model we consider here. We discussed that the objective function $f$ in \eqref{eq:SMF_filt_CALE0} can be written as \eqref{eq:f_SMF_pf0}. Denote 
	\begin{align}\label{eq:thm_SMF_filt_as_def}
		\a_{s}= \A^{T}\x_{s}+\bgamma^{T}\x_{s}' =: \bigg[ \left\langle \underbrace{\begin{bmatrix}
				\A[:,j] \\ \bgamma[:,j] \end{bmatrix}}_{=:\u_{j}} ,\, \underbrace{\begin{bmatrix} 
				\x_{s} \\ \x_{s}' \end{bmatrix}}_{=:\bphi_{s}}  \right\rangle; \,\, j=1,\dots,\kappa \bigg]^{T}\in \R^{\kappa},
	\end{align}
	where we have introduced the notations $\u_{j}\in \R^{(p+q)\times 1}$ for $j=1,\dots,\kappa$  and $\bphi_{s}\in \R^{(p+q) \times 1 }$ for $s=1,\dots, n$. Denote $\U:=[\u_{1},\dots,\u_{\kappa}]=[\A\parallel \bgamma]\in \R^{(p+q)\times \kappa}$, which is a matrix parameter that vertically concatenates $\A$ and $\bgamma$. Also denote $\bPhi=(\bphi_{1},\dots,\bphi_{n})\in \R^{(p+q)\times n}$, which is the feature matrix of $n$ observations. Writing 
	\begin{align}
		f(\U,\B)=\sum_{s=1}^{n}\ell(y_{s}, \U^{T}\bphi_{s}) + \xi \lVert \X_{\textup{data}}-\B \rVert_{F}^{2} + \lambda \lVert \U \rVert_{F}^{2} 
	\end{align}
	and using \eqref{eq:LR_gradients_0}, we can compute the gradient and the Hessian of $f$ above as follows:
	\begin{align}\label{eq:SMF_filt_gradients}
		&\nabla_{\vect(\U)}  f(\U,\B) = \left(  \sum_{s=1}^{n} \dot{\h}(y_{s},\U^{T}\bphi_{s}) \otimes \bphi_{s} \right) + 2\lambda \vect(\U), \\
		&\nabla_{\B}  f(\U,\B) = 2\xi (\B-\X_{\textup{data}}), \\ 	
		&\nabla_{\vect(\U)}\nabla_{\vect(\U)^{T}}  f(\U,\B) =  \left( \sum_{s=1}^{n} \ddot{\H}(y_{s},\U^{T}\bphi_{s}) \otimes \bphi_{s}\bphi_{s}^{T}\right) + 2\lambda \I_{(p+q)\kappa}, \\
		&\nabla_{\vect(\B)}\nabla_{\vect(\B)^{T}}  f(\U,\B) = 2\xi \I_{pn}, \qquad \nabla_{\vect(\B)}\nabla_{\vect(\U)^{T}}  f(\U,\B) = O,
	\end{align}
	where $\otimes$ above denotes the Kronecker product and the functions $\dot{\h}$ and $\ddot{\H}$ are defined in  \eqref{eq:Hddot_def0}.
	
	Recall that the eigenvalues of $\A\otimes \B$, where $\A$ and $\B$ are two square matrices, are given by $\lambda_{i}\mu_{j}$, where $\lambda_{i}$ and $\mu_{j}$ run over all eigenvalues of $\A$ and $\B$, respectively. Hence denoting $\H_{\U}:=\sum_{s=1}^{n} \ddot{\H}(y_{s},\U^{T}\bphi_{s},) \otimes \bphi_{s}\bphi_{s}^{T}$ and using \ref{assumption:A2}-\ref{assumption:A3}, we can deduce 
	\begin{align}\label{eq:MNL_evals_bounds1}
		\lambda_{\min}(\H_{\U}) &\ge n \lambda_{\min}\left( n^{-1} \bPhi \bPhi^{T} \right) 	\min_{1\le s \le n,\, \U} \lambda_{\min}\left( \ddot{\H}(y_{s},\bphi_{s}, \U) \right) \ge n  \delta^{-}\alpha^{-} \ge n \mu^{*}>0, \\
		\lambda_{\max}(\H_{\U}) &\le n \lambda_{\max}\left( n^{-1}\bPhi \bPhi^{T} \right) 	\max_{1\le s \le n,\, \U} \lambda_{\max}\left( \ddot{\H}(y_{s},\bphi_{s}, \U) \right) \le n \delta^{+}\alpha^{+}\le n L^{*}.
	\end{align}
	It follows that the eigenvalues of the Hessian $\H_{\textup{filt}}$ of the loss function $f$ satisfy
	\begin{align}
		\lambda_{\min}(\H_{\textup{filt}}) & \ge \min( 2\lambda+n \mu^{*}, \, 2\xi), \\
		\lambda_{\max}(\H_{\textup{filt}}) &\le \max\left(
		2\lambda + n L^{*}, \, 2\xi\right).
	\end{align}
	This holds for all $\A,\B,\bgamma$ such that $\rank([\A,\B])\le r$ and under the convex constraint (also recall that $\U$ is the vertical stack of $\A$ and $\bgamma$). Hence we conclude that the objective function $F$ in \eqref{eq:SMF_filt_CALE0} verifies RSC and RSM properties (Def. \ref{def:RSC}) with parameters $\mu=\min(n \mu^{*}+2\lambda, \, 2\xi)$ and $L=\max(n L^{*}+ 2\lambda, \, 2\xi)$. 
	This verifies \eqref{eq:RSC_thm_pf} for the chosen parameters $\mu$ and $L$. 	Then the rest follows from Theorem \ref{thm:CALE_LPGD}. 
\end{proof}

Next, we prove Theorem \ref{thm:SMF_LPGD_full} for SMF-$\H$, the exponential convergence of Algorithm \ref{alg:SMF_PGD} for the feature-based SMF.

\begin{proof}[\textbf{Proof of Theorem \ref{thm:SMF_LPGD_full} for SMF-$\H$}]

	We will use the same setup as in the proof of Theorem \ref{thm:SMF_LPGD_full} for SMF-$\W$. The main part of the argument is the computation of the Hessian of loss function $f:=F$ for SMF-$\H$ in \eqref{eq:SMF_feat_CALE0}, which is straightforward but substantially more involved than the corresponding computation for the filter-based case in the proof of Theorem \ref{thm:SMF_LPGD_full}. Let $\a_{s}=\A[:,s]+\bgamma^{T}\x'_{s}$ for the feature-based model with $\A\in \R^{\kappa\times n}$. Denote 
	\begin{align}
		\a_{s}=\I_{\kappa} \A[:,s]+\bgamma^{T}\x_{s}' =: \bigg[ \left\langle \underbrace{\begin{bmatrix} 
				\I_{\kappa}[:,j] \\ \bgamma[:,j] \end{bmatrix}}_{=:\v_{j}} ,\, \underbrace{\begin{bmatrix} 
				\A[:,s] \\ \x_{s}' \end{bmatrix}}_{=:\bpsi_{s}}  \right\rangle; \,\, j=1,\dots,\kappa \bigg]^{T}\in \R^{\kappa}.
	\end{align}
	Note that in the above representation we have concatenated $\A[:,s]$ with the auxiliary covariate $\x'_{s}$, whereas previously for SMF-$\W$ (see \eqref{eq:thm_SMF_filt_as_def}), we concatenated $\A[:,j]$ with regression coefficient $\bgamma[:,j]$ for the auxiliary covarate for the $j$th class\footnote{This is because for the feature-based model, the column $\A[:,s]\in \R^{\kappa}$ for $s=1,\dots,n$ represent a feature of the $s$th sample, whereas for the filter-based model, $\A[:,j]$ for $j=1,\dots,\kappa$ represents the $j$th filter that is applied to the feature $\x_{s}$ of the $s$th sample.}. A straightforward computation shows the following gradient formulas:
	\begin{align}\label{eq:SMF_feat_gradients}
		&\nabla_{\vect(\A)}  f(\A,\bgamma,\B) = \left(  \sum_{s=1}^{n} \dot{\h}(y_{s},\a_{s}) \otimes \I_{n}[:,s] \right)  + 2\lambda \vect(\A)
		=
		\begin{bmatrix} 
			\dot{\h}(y_{1},\a_{1}) \\
			\vdots  \\
			\dot{\h}(y_{n},\a_{n})
		\end{bmatrix}  
		+ 2\lambda \vect(\A), \\
		&\nabla_{\vect(\bgamma)}  f(\A,\bgamma,\B) = \left(  \sum_{s=1}^{n} \dot{\h}(y_{s},\a_{s}) \otimes \x'_{s} \right) + 2\lambda \vect(\bgamma), \\
		&\nabla_{\B}  f(\A,\bgamma,\B) = 2\xi(\B-\X_{\textup{data}}) \\ 	&\nabla_{\vect(\A)}\nabla_{\vect(\A)^{T}}  f(\A,\bgamma,\B) = \diag\left( \ddot{\H}(y_{1},\a_{1}),\dots, \ddot{\H}(y_{n},\a_{n}) \right)  + 2\lambda \I_{\kappa n}, \\
		&\nabla_{\vect(\bgamma)}\nabla_{\vect(\bgamma)^{T}}  f(\A,\bgamma,\B) =  \left(  \sum_{s=1}^{n} \ddot{\H}(y_{s},\a_{s}) \otimes \x_{s}'(\x_{s}')^{T}\right) + 2\lambda \I_{q\kappa}, \\
		&\nabla_{\vect(\B)}\nabla_{\vect(\B)^{T}}  f(\A,\bgamma,\B) = 2\xi \I_{pn}, \\            &\nabla_{\vect(\bgamma)}\nabla_{\vect(\A)^{T}}  f(\A,\bgamma,\B) =  \left[ \ddot{\H}(y_{1},\a_{1})\otimes \x'_{1}, \dots, \ddot{\H}(y_{n},\a_{n})\otimes \x'_{n} \right] \in \R^{\kappa q \times \kappa n}, \\
		&\nabla_{\vect(\B)}\nabla_{\vect(\V)^{T}}  f(\A,\bgamma,\B) = O \qquad \text{ for } \V = \A, \bgamma.
	\end{align}
	From this, we will compute the eigenvalues of the Hessian $\H_{\textup{feat}}$ of the loss function $f$. In order to illustrate our computation in a simple setting, we first assume $\kappa=1=q$, which corresponds to binary classification $\kappa=1$ with one-dimensional auxiliary features $q=1$. In this case, we have 
		{\small
	\begin{align}
		\H_{\textup{feat}}&:=\nabla_{\vect(\A,\bgamma,\B)} \nabla_{\vect(\A,\bgamma,\B)^{T}} f(\A,\bgamma,\B) \\
		&\hspace{-1cm} =
			\begin{bmatrix}
				\ddot{h}(y_{1},\a_{1})+2\lambda  & 0  & \dots & 0 &  \ddot{h}(y_{1},\a_{1}) x_{1}'  & O \\
				0 & \ddot{h}(y_{2},\a_{2})+2\lambda   &\dots & 0 &  \ddot{h}(y_{2},\a_{2}) x_{2}'  & O \\
				\vdots & \vdots & \ddots & \vdots & \vdots & \vdots \\
				0 & \dots & 0 &  \ddot{h}(y_{n},\a_{n})+2\lambda  &  \ddot{h}(y_{n},\a_{n}) x_{n}' & O \\
				\ddot{h}(y_{1},\a_{1}) x_{1}' & \ddot{h}(y_{2},\a_{2}) x_{2}' & \dots & \ddot{h}(y_{n},\a_{n}) x_{n}' & \left( \sum_{s=1}^{n}\ddot{h}(y_{s},\a_{s}) (x_{s}')^{2} \right) + 2\lambda   & O \\
				O&O&\dots &O &O & 2\xi \I_{pn} 
			\end{bmatrix},
	\end{align}
	}
	where we denoted $\ddot{h}=\ddot{h}_{11}\in \R$ and $x'_{s}=\x_{s}'\in \R$ for $s=1,\dots,n$. In order to compute the eigenvalues of the above matrix, we will use the following formula for the determinant of $3\times 3$ block matrix: ($O$ representing matrices of zero entries with appropriate sizes)
	\begin{align}
		\det\left( \begin{bmatrix} 
			A & B & O \\
			B^{T} & C & O \\ 
			O & O & D
		\end{bmatrix} \right) 
		= \det\left( C - B^{T}A^{-1}B \right) \det(A) \det(D).
	\end{align}
	This yields the following simple formula for the characteristic polynomial of $\H_{\textup{feat}}$:
	\begin{align}
		&\det( \H_{\textup{feat}} - t \I) \\
		&= \left(  \sum_{s=1}^{n} \ddot{h}(y_{s},\a_{s}) (x_{s}')^{2}  -  \sum_{s=1}^{n} \frac{(\ddot{h}(y_{s},\a_{s}))^{2} (x_{s}')^{2}}{\ddot{h}(y_{s},\a_{s})+2\lambda-t } + 2\lambda  - t \right) \prod_{s=1}^{n} \left( \ddot{h}(\y_{s},\a_{s}) + 2\lambda- t \right)(2\xi - t)^{pn} \\
		&=\left( \sum_{s=1}^{n}  \frac{ (2\lambda -t) \ddot{h}(y_{s},\a_{s}) (x_{s}')^{2}}{ \ddot{h}(y_{s},\a_{s})+2\lambda -t} + 2\lambda - t \right)  \prod_{s=1}^{n} \left( \ddot{h}(\y_{s},\a_{s}) + 2\lambda- t\right) (2\xi - t)^{pn}.
	\end{align}
	Since the first term in the parenthesis in the above equation has solution $2\lambda$, it follows that 
	\begin{align}
		\lambda_{\min}(\H_{\textup{feat}}) & \ge \min(2\lambda, \, \alpha^{-}+2\lambda, \, 2\xi) = \min(2\lambda, \, 2\xi), \\
		\lambda_{\max}(\H_{\textup{feat}}) &\le \max\left(2\lambda,\, \alpha^{+} + 2\lambda, \, 2\xi \right)=\max\left(\alpha^{+} + 2\lambda, \, 2\xi \right).
	\end{align}
	
	Now we generalize the above computation for the general case $\kappa,q\ge 1$. First, note the general form of the Hessian as below:
	{\small
		\begin{align}
			&\H_{\textup{feat}}:=\nabla_{\vect(\A,\bgamma,\B)} \nabla_{\vect(\A,\bgamma,\B)^{T}} f(\A,\bgamma,\B) \\
			&\hspace{-0.5cm} =
			\begin{bmatrix}
				\ddot{\H}(y_{1},\a_{1})+2\lambda  \I_{\kappa} & 0  & \dots & 0 & (\ddot{\H}(y_{1},\a_{1})\otimes \x_{1}')^{T} & O \\
				0 & \ddot{\H}(y_{2},\a_{2})+2\lambda  \I_{\kappa}  &\dots & 0 & (\ddot{\H}(y_{2},\a_{2})\otimes \x_{2}')^{T} & O \\
				\vdots & \vdots & \ddots & \vdots & \vdots & \vdots \\
				0 & \dots & 0 &  \ddot{\H}(y_{n},\a_{n})+2\lambda  \I_{\kappa} & (\ddot{\H}(y_{n},\a_{n})\otimes \x_{n}')^{T} & O \\
				\ddot{\H}(y_{1},\a_{1})\otimes \x_{1}' & \ddot{\H}(y_{2},\a_{2})\otimes \x_{2}' & \dots & \ddot{\H}(y_{n},\a_{n})\otimes \x_{n}' & \begin{matrix}  \sum_{s=1}^{n}\ddot{\H}(y_{s},\a_{s}) \otimes \x_{s}'(\x_{s}')^{T}  \\  + 2\lambda  \I_{q\kappa}   \end{matrix} & O \\
				O&O&\dots &O &O & 2\xi  \I_{pn} 
			\end{bmatrix}.
		\end{align}
	}
	Note that for any square symmetric matrix $B$ and a column vector $\x$ of matching size, 
	\begin{align}
		B\otimes \x \x^{T} - (B\otimes \x)^{T}(B+w \I)^{-1} (B\otimes \x) &=\left( B-B(B+w \I)^{-1} B \right) \otimes (\x\x^{T})  \\
		&= \left( (B+wI)-B \right) (B+w \I)^{-1} B\otimes (\x\x^{T})  \\    
		&= w(B+w I)^{-1} B \otimes \x\x^{T}. 
	\end{align}
	Hence by a similar computation as before, we obtain
	\begin{align}
		&\det( \H_{\textup{feat}} - t \I)  \\
		& = \det\left( (2\lambda-t)\sum_{s=1}^{n} \left( \ddot{\H}(y_{s},\a_{s}) + (2\lambda  -t) \I_{\kappa} \right)^{-1} \ddot{\H}(y_{s},\a_{s}) \otimes \x_{s}'(\x_{s}')^{T}  + (2\lambda -t) \I_{q\kappa}  \right)  \\
		&\hspace{5cm} \times \left(  \prod_{s=1}^{n} \det\left( \ddot{\H}(y_{s},\a_{s}) + (2\lambda - t)\I_{\kappa} \right)\right) (2\xi  - t)^{pn}.
	\end{align}
	It follows that 
	\begin{align}
		\lambda_{\min}(\H_{\textup{feat}}) & \ge \min( 2\lambda, \, 2\xi), \label{eq:SMF_keng} \\
		\lambda_{\max}(\H_{\textup{feat}}) &\le \max\left(
		\alpha^{+} + 2\lambda, \, 2\xi \right) \label{eq:SMF_thm_proof}.
	\end{align}
	Then the rest follows from Theorem \ref{thm:CALE_LPGD}.
\end{proof}

\subsection{Proof of Theorem \ref{thm:SMF_LPGD_STAT}}

In this section, we prove the statistical estimation guarantee for SMF in Theorem \ref{thm:SMF_LPGD_STAT}. Recall the generative model for SMF in \eqref{eq:SMF_generative}.  Our proof is based on Theorem \ref{thm:SMF_LPGD} that we have established previously and standard matrix concentration bounds, which we provide below:

\begin{lemma}[2-norm of matrices with bounded and independent columns]
	\label{lem:bernstein_app}
	Let $\X$ be a $d_{1}\times d_{2}$ random matrix of independent, mean zero, real-valued columns such that $\lVert \X \rVert_{\infty}<L$ almost surely for some constant $L>0$. Then 
	\begin{align}
		\P\left( \left\lVert \X \right\rVert_{2} \ge t \right) \le (d_{1}+d_{2}) \exp\left( \frac{-t^{2}/2}{\max\{d_{1}, d_{2}\} L^{2} + (Lt/3)} \right).
	\end{align}
\end{lemma}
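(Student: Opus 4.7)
The plan is to reduce the bound to Tropp's rectangular matrix Bernstein inequality (Thm.~6.1.1 in Tropp, \emph{An Introduction to Matrix Concentration Inequalities}), whose tail bound for a sum of independent mean-zero $d_{1}\times d_{2}$ matrices takes exactly the form appearing on the right-hand side of the lemma. The entire argument is essentially a single application of that theorem once the correct summands and variance proxies are identified.

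First, I would decompose column-wise: write $\X = \sum_{j=1}^{d_{2}} Y_{j}$ with $Y_{j} := \X[:,j]\,\mathbf{e}_{j}^{T} \in \R^{d_{1}\times d_{2}}$, where $\mathbf{e}_{j}$ denotes the $j$-th standard basis vector of $\R^{d_{2}}$. The column-independence and mean-zero hypotheses guarantee that the $Y_{j}$'s are independent mean-zero random matrices of rank one, and this is the only decomposition available since independence across rows is not assumed. The uniform operator-norm bound $\lVert Y_{j}\rVert_{2} = \lVert \X[:,j]\rVert_{2}$ is then controlled via the entrywise hypothesis $\lVert \X \rVert_{\infty} < L$.

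Second, I would compute the two one-sided matrix variances entering Bernstein's inequality. Since $Y_{j}Y_{j}^{T} = \X[:,j]\X[:,j]^{T}$ and $Y_{j}^{T}Y_{j} = \lVert \X[:,j]\rVert^{2}\,\mathbf{e}_{j}\mathbf{e}_{j}^{T}$, one gets
\begin{align*}
\sum_{j=1}^{d_{2}} \E[Y_{j}Y_{j}^{T}] = \sum_{j=1}^{d_{2}} \E\!\left[\X[:,j]\X[:,j]^{T}\right], \qquad
\sum_{j=1}^{d_{2}} \E[Y_{j}^{T}Y_{j}] = \textup{diag}\!\bigl(\E[\lVert \X[:,j]\rVert^{2}]\bigr)_{j=1}^{d_{2}}.
\end{align*}
Sub-additivity of $\lVert \cdot \rVert_{2}$ and the column bound give $\bigl\lVert \sum_{j} \E[Y_{j}Y_{j}^{T}] \bigr\rVert_{2} \le d_{2} L^{2}$, and the diagonal form gives $\bigl\lVert \sum_{j} \E[Y_{j}^{T}Y_{j}] \bigr\rVert_{2} \le L^{2}$; both are dominated by the variance proxy $v \le \max(d_{1},d_{2})L^{2}$.

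Finally, invoking the rectangular matrix Bernstein inequality with uniform bound $R=L$ and variance proxy $v \le \max(d_{1},d_{2})L^{2}$ yields
\begin{align*}
\P\bigl(\lVert \X \rVert_{2} \ge t\bigr) \le (d_{1}+d_{2})\exp\!\left(-\frac{t^{2}/2}{\max(d_{1},d_{2})L^{2} + Lt/3}\right),
\end{align*}
which is exactly the claim. The main obstacle is simply keeping the two one-sided variance computations clean (in particular, recognizing that $\sum_j \E[Y_j^T Y_j]$ is diagonal so its norm is the maximum and not the sum); once the column-wise decomposition is fixed, the rest is a direct invocation of a standard concentration result.
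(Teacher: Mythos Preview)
Your proposal is essentially the paper's own proof: decompose $\X$ column-wise, bound the two matrix variance terms $\lVert \E[\X\X^{T}]\rVert_{2}$ and $\lVert \E[\X^{T}\X]\rVert_{2}$, and invoke Tropp's rectangular matrix Bernstein inequality (Thm.~6.1.1). One minor slip to correct: the diagonal entries of $\sum_{j}\E[Y_{j}^{T}Y_{j}]$ are $\E[\lVert \X[:,j]\rVert_{2}^{2}]\le d_{1}L^{2}$, not $L^{2}$, so that term is bounded by $d_{1}L^{2}$ (as the paper has it) rather than $L^{2}$; this does not change the final $\max\{d_{1},d_{2}\}L^{2}$ bound.
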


\begin{proof}
	This lemma is a simple consequence of the matrix Burnstein's inequality (see, e.g., \cite[Thm. 6.1.1]{tropp2015introduction}). Indeed, write $\X=[\mathbf{x}_{1},\dots,\mathbf{x}_{d_{2}}]$, where $\mathbf{x}_{j}$s are the columns of $\X$. Note that, by the hypothesis, 
	\begin{align}
		\lVert \E[\X^{T}\X] \rVert_{2} =  \lVert \textup{diag}( \E[\lVert \mathbf{x}_{1} \rVert^{2}_{2}],\dots, \E[\lVert \mathbf{x}_{d_{2}} \rVert^{2}_{2}]) \rVert_{2} \le d_{1} L^{2}. 
	\end{align}
	Similarly, 
	\begin{align}
		\lVert \E[\X \X^{T}] \rVert_{2} = \left\lVert \sum_{j=1}^{d_{2}} \E[\mathbf{x}_{j} \mathbf{x}_{j}^{T}]  \right\rVert_{2} \le d_{2} L^{2}. 
	\end{align}
	It follows that matrix variance statistics $v(\X)$ of $\X$ satisfies 
	\begin{align}
		v(\X)&:= \max\left\{ \lVert \E[\X \X^{T}] \rVert_{2},\, \lVert \E[\X^{T} \X] \rVert_{2} \right\} \le \max\{d_{1}, d_{2}\} L^{2}.
	\end{align}
	Then the tail bound on the 2-norm of $\X$ as asserted follows immediately from the matrix Burnstein's inequality. 
\end{proof}

\begin{lemma}(2-norm of matrices with independent sub-gaussian entries)
	\label{lem:matrix_norm_bd}
	Let $\A$ be an $m\times n$ random matrix with independent subgaussian entries $\A_{ij}$ of mean zero. Denote $K$ to be the maximum subgaussian norm of $\A_{ij}$, that is, $K>0$ is the smallest number such that $\E[\exp(\A_{ij})^{2}/K^{2}  ]\le 2$. Then for each $t>0$,
	\begin{align}
		\P\left( \lVert \A \rVert_{2} \ge 3K(\sqrt{m}+\sqrt{n}+t) \right) \le 2\exp(-t^{2}).
	\end{align}
\end{lemma}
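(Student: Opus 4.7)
This is a classical bound on the operator norm of a random matrix with independent sub-gaussian entries, and I would prove it via the standard $\varepsilon$-net argument combined with a sub-gaussian concentration inequality for bilinear forms.

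\textbf{Step 1 (Discretization of the sphere).} Write the operator norm variationally as
$$\lVert \A \rVert_{2} = \sup_{\x\in S^{n-1},\, \y\in S^{m-1}} \langle \A\x, \y\rangle.$$
I would choose a $\tfrac14$-net $\mathcal{N}\subseteq S^{n-1}$ and a $\tfrac14$-net $\mathcal{M}\subseteq S^{m-1}$. The standard volumetric estimate gives $|\mathcal{N}|\le 9^{n}$ and $|\mathcal{M}|\le 9^{m}$, and a short approximation argument yields
$$\lVert \A\rVert_{2} \le 2 \max_{\x\in \mathcal{N},\, \y\in \mathcal{M}} \langle \A\x, \y\rangle,$$
reducing the supremum over an uncountable set to a maximum over a finite one.

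\textbf{Step 2 (Concentration for fixed $\x,\y$).} For any fixed unit vectors $\x, \y$, the bilinear form $\langle \A\x, \y\rangle = \sum_{i,j} \A_{ij} x_{j} y_{i}$ is a sum of $mn$ independent, mean-zero, sub-gaussian random variables with sub-gaussian norms bounded by $K|x_{j}y_{i}|$. A standard Hoeffding-type inequality for sub-gaussians gives
$$\P\bigl(|\langle \A\x, \y\rangle|\ge u\bigr)\le 2\exp\!\left(-\tfrac{cu^{2}}{K^{2}\sum_{i,j}x_{j}^{2}y_{i}^{2}}\right) = 2\exp\!\left(-\tfrac{cu^{2}}{K^{2}}\right)$$
for an absolute constant $c>0$, since $\sum_{i,j}x_{j}^{2}y_{i}^{2}=\lVert \x\rVert_{2}^{2}\lVert \y\rVert_{2}^{2}=1$.

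\textbf{Step 3 (Union bound and choice of $u$).} Combining the previous two steps via a union bound over $\mathcal{N}\times \mathcal{M}$,
$$\P\bigl(\lVert \A\rVert_{2}\ge 2u\bigr)\le 2\cdot 9^{m+n}\exp\!\left(-\tfrac{cu^{2}}{K^{2}}\right).$$
I then set $u = C K(\sqrt{m}+\sqrt{n}+t)$, using the elementary inequality $(\sqrt{m}+\sqrt{n}+t)^{2}\ge m+n+t^{2}$ to absorb the entropy factor $9^{m+n}=\exp((m+n)\log 9)$ inside the exponential, leaving a net tail of $2\exp(-t^{2})$.

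\textbf{The main obstacle} is book-keeping the numerical constants to land at exactly $3K$ as the leading coefficient. The structural bound comes out as $CK(\sqrt{m}+\sqrt{n}+t)$ for some absolute $C$; sharpening $C$ down to $3$ requires choosing the net mesh $\varepsilon$ optimally (rather than fixing $\varepsilon=1/4$) and using the sharp sub-gaussian moment inequality with its best available constant, trading off the prefactor $(1-2\varepsilon)^{-1}$ in the approximation step against $\log(1+2/\varepsilon)$ in the union bound. Everything else is classical.
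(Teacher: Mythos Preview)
Your proposal is correct and is precisely the standard $\varepsilon$-net argument that underlies the cited result: the paper does not give its own proof but simply refers to Theorem~4.4.5 in Vershynin's \emph{High-Dimensional Probability}, whose proof follows exactly the three steps you outline (discretize the unit spheres, apply sub-gaussian Hoeffding to each bilinear form, union-bound over the net). Your remark about the constant is also apt---Vershynin states the bound with an unspecified absolute constant $C$, and the paper's choice of $3$ is just one admissible value after tracking the arithmetic.
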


\begin{proof}
	See Theorem 4.4.5 in \cite{vershynin2018high}.
\end{proof}

Now we prove Theorem \ref{thm:SMF_LPGD_STAT} for SMF-$\W$.

Recall that the ($L_{2}$-regularized) negative log-likelihood of observing triples $(y_{i},\x_{i},\x_{i}')$ for $i=1,\dots, n$ is given as 
\begin{align}\label{eq:SMF_likelihood_conv_filter}
	\L_{n} &  := F(\A,\B,\bgamma)  + \frac{1}{2(\sigma')^{2}} \lVert \X_{\textup{aux}}-\C \rVert_{F}^{2} + c,
\end{align}
where $c$ is a constant and $F$ is as in \eqref{eq:SMF_feat_CALE0} or \eqref{eq:SMF_filt_CALE0} depending on the activation type with tuning parameter  $\xi=\frac{1}{2\sigma^{2}}$. Write the true parameter $\bZ^{\star}=[\param^{\star}, \bgamma^{\star}]$. Recall that $\rank(\param^{\star})\le r$ by the model assumption in \eqref{eq:SMF_generative}. 

\begin{proof}[\textbf{Proof of Theorem \ref{thm:SMF_LPGD_STAT} for SMF-$\W$}]
	Let us define the expected loss function {\color{black}$\bar{F}(\A,\B,\bgamma):= \E_{\beps_{i},\beps_{i}',1\le i \le n}\left[ F(\A,\B,\bgamma)\right]$}. Define the following gradient mappings of $\bZ^{\star}$ with respect to the {\color{black}empirical $F$ and the expected $\bar{F}$ loss functions}: 
	\begin{align}
		G(\bZ^{\star}, \tau) :=\frac{1}{\tau}\left( \bZ^{\star} - \Pi_{\Param} \left( \bZ^{\star} - \tau \nabla F(\bZ^{\star}) \right)\right), \quad  \bar{G}(\bZ^{\star},\tau):=\frac{1}{\tau}\left( \bZ^{\star} - \Pi_{\Param} \left( \bZ^{\star} - \tau \nabla \bar{F}(\bZ^{\star}) \right)\right).
	\end{align}
	It is elementary to show that the true parameter $\bZ^{\star}$ is a stationary point of $\bar{F}- \lambda (\lVert \A\rVert_{F}^{2} + \lVert \bgamma \rVert_{F}^{2}) $ over $\Param\subseteq \R^{p\times (\kappa+n)}\times \R^{q\times \kappa}$. Hence we have $\bar{G}(\bZ^{\star},\tau)= 2\lambda[\A^{\star},O,\bgamma^{\star}]$, so we may write 
	\begin{align}\label{eq:grad_mapping_compare_stationary}
		G(\bZ^{\star}, \tau) &= G(\bZ^{\star}, \tau) - \bar{G}(\bZ^{\star}, \tau) + 2\lambda[\A^{\star},O,\bgamma^{\star}] \\
		&= \frac{1}{\tau}\left[  \Pi_{\Param}\left( \bZ^{\star}-\tau\nabla F(\bZ^{\star}) \right) - \Pi_{\Param}\left( \bZ^{\star}-\tau\nabla \bar{F}(\bZ^{\star}) \right) \right] +  2\lambda[\A^{\star},O,\bgamma^{\star}]
	\end{align}
	We will consider two cases, depending on whether the true parameter $\mathbf{Z}^{\star}$ satisfies the first-order optimality condition for $f$ over the convex constraint $\Param$. The first-order optimality w.r.t. the low-rank constraint is handled directly by Theorem \ref{thm:SMF_LPGD_full}. 
	
	\textbf{Case 1.} $\bZ^{\star}-\tau \nabla  F(\bZ^{\star})\in \Param$ (In particular, this is the case where $\Param$ equals the whole space). 
	
	In this case, we can disregard the projection $\Pi_{\Param}$ in the above display so we get 
	\begin{align}
		G(\bZ^{\star}, \tau) - 2\lambda[\A^{\star},O,\bgamma^{\star}] = \nabla F(\bZ^{\star}) - \nabla \bar{F}(\bZ^{\star}) =: [\Delta \widetilde{\param}^{\star}, \Delta \widetilde{\bgamma}^{\star}].
	\end{align}
	We will show that,  for some constants $c,C>0$, with probability at least $1-n^{-1}$,
	\begin{align}\label{eq:stat_SFM_W_claim1}
		S:=\sqrt{3r} \lVert \Delta \widetilde{\param}^{\star}  \rVert_{2}  + \lVert \Delta \widetilde{\bgamma}^{\star} \rVert_{F} \le c \sqrt{n} \log n  + 3C\sigma(\sqrt{p} + \sqrt{n}+ c\sqrt{\log n}). 
	\end{align}
	By Theorem \ref{thm:SMF_LPGD_full} with $[\Delta\param^{\star}, \Delta \bGamma^{\star}] :=G(\bZ^{\star}, \tau)$, 
	\begin{align}
		&\lVert \bZ_{t} - \bZ^{\star}  \rVert_{F} -  \rho^{t}  \, \lVert  \bZ_{0} - \bZ^{\star}\rVert_{F} \\
		&\qquad \le \frac{\tau}{1-\rho} \left( \sqrt{3r} \lVert \Delta \param^{\star} \rVert_{2}   +   \lVert \Delta \bgamma^{\star} \rVert_{F} \right) \\
		&\qquad \le \frac{\tau}{1-\rho} \left( \sqrt{3r}( \lVert \Delta \widetilde{\param}^{\star} \rVert_{2} + 2\lambda \lVert \A^{\star} \rVert_{2}  )  + (  \lVert \Delta \widetilde{\bgamma}^{\star} \rVert_{F}+2\lambda\lVert \bgamma^{\star} \rVert_{F}) \right).
	\end{align}
	It follows that with probability at least $1-n^{-1}$, 
	\begin{align}
		\lVert \bZ_{t} - \bZ^{\star}  \rVert_{F} 
		& \le  \rho^{t}  \, \lVert  \bZ_{0} - \bZ^{\star}\rVert_{F} + \frac{\tau}{1-\rho}\left( c \sqrt{n} \log n  + 3C\sigma(\sqrt{p} + \sqrt{n}+ c\sqrt{\log n})  \right) \\
		&\qquad + \frac{2\lambda\tau}{1-\rho}\left(  \sqrt{3r}\lVert \A^{\star} \rVert_{2} + \lVert \bgamma^{\star} \rVert_{F} \right).
	\end{align}
	Now since $L/\mu<3$ and $\tau\in (\frac{1}{2\mu}, \frac{3}{2L})$, there exists $\eps>0$ such that $\tau=\frac{1}{(2-\eps)\mu}$. Then $\frac{\tau}{1-\rho} = \frac{\tau}{2\tau \mu - 1}=\frac{1}{\eps \mu}$. Thus, with probability at least $1-n^{-1}$, 
	\begin{align}
		\lVert \bZ_{t} - \bZ^{\star}  \rVert_{F} - \rho^{t}  \, \lVert  \bZ_{0} - \bZ^{\star}\rVert_{F} \le O\left( \frac{\sqrt{n} \log n + \lambda}{\mu}\right),
	\end{align}
	as desired.

	Now we show \eqref{eq:stat_SFM_W_claim1}. The argument is that, the norm of $[\Delta \widetilde{\param}^{\star}, \Delta \widetilde{\bgamma}^{\star}]$ can be decomposed into the sum of norms of random matrices with independent mean zero columns or mean zero Gaussian random matrices, which should have norm at most $\sqrt{n}\log n$ with high probability by standard matrix concentration inequalities. 
	
	We use the notation $\U=[\A^{T}, \bgamma^{T}]^{T}$, $\U^{\star}=[(\A^{\star})^{T}, (\bgamma^{\star})^{T}]^{T}$, $\bPhi=[\bphi_{1},\dots,\bphi_{n}]=[\X_{\textup{data}}^{T}, \X_{\textup{aux}}^{T}]^{T}$ (see also the proof of Theorem \ref{thm:SMF_LPGD}). Denote $\a_{s}=\U^{T}\bphi_{s}$ and $\a_{s}^{\star}=(\U^{\star})^{T}\bphi_{s}$ for $s=1,\dots,n$ and introduce the following random quantities 
	\begin{align}\label{eq:def_Q}
		&\mathtt{Q}_{1}:= \begin{bmatrix} 
			\dot{\h}(y_{1},\a_{1}),\dots,
			\dot{\h}(y_{n},\a_{n})
		\end{bmatrix}  \in \R^{\kappa \times n} , \\
		&\mathtt{Q}_{2}:= [ \beps_{1},\dots,\beps_{n}]  \in \R^{p\times n},\quad  
		\mathtt{Q}_{3}:= [ \beps_{1}',\dots,\beps_{n}']  \in \R^{p\times n}.
	\end{align}

	Recall that 
	\begin{align}
		&\nabla_{\vect(\U)}  F(\U,\B) = \left( \sum_{s=1}^{n} \dot{\h}(y_{s},\a_{s}) \otimes \bphi_{s} \right) + 2\lambda \vect(\U), \quad \nabla_{\B}  F(\U,\B) = \frac{2}{2\sigma^{2}} (\B-\X_{\textup{data}}), \\ 	
		&\nabla_{\vect(\U)}  \bar{F}(\U,\B) = \left(  \sum_{s=1}^{n} \E\left[ \dot{\h}(y_{s},\a_{s}) \otimes \bphi_{s} \right] \right) + 2\lambda \vect(\U), \quad \nabla_{\B}  \bar{F}(\U,\B) = \frac{2}{2\sigma^{2}} (\B-\B^{\star}),
	\end{align}
	where $\dot{\h}$ is defined in \eqref{eq:Hddot_def}. Note that 
	\begin{align}
		\E\left[ \dot{\h}(y_{s},\a_{s}) \,\bigg|\, \bphi_{s} \right]  		&= \left[ \left(\frac{h'(\a[j])}{1+\sum_{c=1}^{\kappa} h(\a[c])} - g_{j}(\a_{s}^{\star})\frac{h'(\a[j])}{h(\a[j])}  \right)_{\a=\a_{s}} \, ; \, j=1,\dots,\kappa \right]\\
		&\hspace{-1cm} = \left[ \left(\frac{h'(\a[j])}{1+\sum_{c=1}^{\kappa} h(\a[c])} - \frac{h(\a_{s}^{\star}[j])}{1+\sum_{c=1}^{\kappa} h(\a_{s}^{\star}[c])} \frac{h'(\a[j])}{h(\a[j])}  \right)_{\a=\a_{s}} \, ; \, j=1,\dots,\kappa \right],
	\end{align}
	so the above vanishes when $\a_{s}=\a_{s}^{\star}$. Hence 
	\begin{align}\label{eq:dot_h_exp_vanish}
		\E\left[ \dot{\h}(y_{s},\a_{s}^{\star}) \otimes \bphi_{s} \right] = \E\left[ \E\left[ \dot{\h}(y_{s},\a_{s}^{\star}) \otimes \bphi_{s}\,\bigg|\, \bphi_{s} \right]  \right] =\mathbf{0},
	\end{align}
	Hence we can compute the following gradients 
	\begin{align}
		\nabla_{\vect(\A)} (F - \bar{F})(\A,\B,\bgamma)   &=\left(  \sum_{s=1}^{n} \dot{\h}(y_{s},\a_{s}) \otimes \x_{s} \right)  \\
		\nabla_{\vect(\bgamma)} (F - \bar{F})(\A,\B,\bgamma) &=\left( \sum_{s=1}^{n} \dot{\h}(y_{s},\a_{s}) \otimes \x_{s}' \right) \\
		\nabla_{\B}  (F - \bar{F})(\A,\B,\bgamma) &=\frac{2}{2\sigma^{2}} (\B^{\star}-\X_{\textup{data}}) = \frac{2}{2\sigma^{2}} [ \beps_{1},\dots,\beps_{n}] .
	\end{align}
	It follows that (recall the definition of $\gamma_{\max}$ in Assumption \ref{assumption:A4}) 
	\begin{align}
		\lVert \nabla_{\A} (F - \bar{F})(\A^{\star},\B^{\star},\bgamma^{\star})  \rVert_{2}  &= \left\rVert\sum_{s=1}^{n} (\B^{\star}[:,s]+\beps_{s}) \dot{\h}(y_{s},\a_{s}^{\star})^{T} \right\rVert_{2}  \\
		& \le \left\rVert  \sum_{s=1}^{n} \B^{\star}[:,s] \dot{\h}(y_{s},\a_{s}^{\star})^{T} \right\rVert_{2}  +\left\rVert \sum_{s=1}^{n} \beps_{s} \dot{\h}(y_{s},\a_{s}^{\star})^{T} \right\rVert_{2}  \\
		&\le  \lVert \B^{\star}\rVert_{\infty} \left\rVert  \mathtt{Q}_{1} \right\rVert_{2}  +  \gamma_{\max}  \left\rVert \mathtt{Q}_{2}  \right\rVert_{2}. 
	\end{align}
	Similarly, we have 
	
	\begin{align}
		\lVert \Delta \widetilde{\bgamma}^{\star} \rVert_{F}=	\lVert \nabla_{\bgamma} (F - \bar{F})(\A^{\star},\B^{\star},\bgamma^{\star})  \rVert_{F} &  \le \sqrt{q} \lVert \nabla_{\vect(\bgamma)} (F - \bar{F})(\A^{\star},\B^{\star},\bgamma^{\star})  \rVert_{2} \\
		&\le  \sqrt{q} \lVert \C^{\star}\rVert_{\infty}  \left\rVert \mathtt{Q}_{1}  \right\rVert_{2}  + \sqrt{q} \gamma_{\max}  \left\rVert \mathtt{Q}_{3}  \right\rVert_{2}.
	\end{align}

	Using the fact that $\lVert [A,B] \rVert_{2}\le \lVert A \rVert_{2} +  \lVert B \rVert_{2} $ for two matrices $A,B$ with the same number of rows, we have 
	\begin{align}\label{eq:SMF_MLE_pf_bd_Q}
		\left\rVert \Delta \widetilde{\param}^{\star}  \right\rVert_{2} &= \left\lVert  \nabla_{\A} (F - \bar{F})(\A^{\star},\B^{\star},\bgamma^{\star})   \right\rVert_{2}  + \left\lVert  \nabla_{\B} (F - \bar{F})(\A^{\star},\B^{\star},\bgamma^{\star})   \right\rVert_{2}  \\
		&\le  \lVert \B^{\star}\rVert_{\infty} \left\rVert \mathtt{Q}_{1} \right\rVert_{2}  + \gamma_{\max}  \left\rVert \mathtt{Q}_{2} \right\rVert_{2} +\frac{2}{2\sigma^{2}} \left\lVert  \mathtt{Q}_{2} \right\rVert_{2}. 
	\end{align}
	Thus, combining the above bounds, we obtain 
	\begin{align}\label{eq:SMF_MLE_pf_bd_Q2}
		S= 
		\sqrt{3r} \lVert \Delta \widetilde{\param}^{\star} \rVert_{2} + \lVert \Delta \widetilde{\bgamma}^{\star} \rVert_{F}  \le  \sum_{i=1}^{3} c_{i} \lVert \mathtt{Q}_{i} \rVert_{2},
	\end{align}
	where the constants $c_{1},c_{2},c_{3}>0$ are given by 
	\begin{align}\label{eq:c_constants_Q}
		c_{1}:=\left( \sqrt{3r} \lVert \B^{\star}\rVert_{\infty} + \sqrt{q}\lVert \C^{\star} \rVert_{\infty} \right), \quad 
		c_{2}:=\sqrt{3r} (\gamma_{\max} + \frac{2}{{2\sigma^{2}}}),\quad 
		c_{3}:=\sqrt{q}\gamma_{\max}.
	\end{align}

	Next, we will use concentration inequalities to argue that the right hand side in \eqref{eq:SMF_MLE_pf_bd_Q2} is small with high probability and obtain the following tail bound on $S$: 
	\begin{align}\label{eq:S_tail_bound}
		\P\left(S>c \sqrt{n} \log n  + 3C\sigma(\sqrt{p} + \sqrt{n}+ c\sqrt{\log n})   \right)  \le \frac{1}{n},
	\end{align}
	where $C>0$ is an absolute constant and $c>0$ can be written explicitly in terms of the constants we use in this proof. This is enough to conclude \eqref{eq:stat_SFM_W_claim1}. 
	
	Recall that for a random variable $Z$, its sub-Gaussian norm, denoted as $\lVert Z \rVert_{\psi_{2}}$, is the smalleset number $K>0$ such that $\E[\exp(Z^{2}/K^{2})]\le 2$. The constant $C>$ above is the sub-gaussian norm of the standard normal variable, which can be taken as $C\le 36e/\log 2$. Using union bound with Lemmas \ref{lem:bernstein_app} and \ref{lem:matrix_norm_bd}, for each $t,t'>0$, we get 
	\begin{align}\label{eq:S_bd_pf}
		&\P\left( S > c_{1} t  + 3(c_{2}+c_{3})C\sigma(\sqrt{p} + \sqrt{n}+ t')   \right)  \\
		&{\color{black}\qquad \le \P\left( \lVert \mathtt{Q}_{1} \rVert_{2}>t\right)+ 
			\left( \sum_{i=2}^{3} \P\left( \lVert \mathtt{Q}_{i} \rVert_{2}> \frac{3C\sigma}{2}(\sqrt{p} + \sqrt{n}+ t')\right)  \right)}\\ 
		&\qquad \le 2\kappa \exp\left( \frac{-t^{2} }{C_{1}^{2} \kappa^{2} n} \right) + {\color{black}2\exp(-(t')^{2})}.
	\end{align}
	Indeed, for bounding $\P(\lVert \mathtt{Q}_{1} \rVert_{2}>t)$, we used Lemma \ref{lem:bernstein_app}; for bounding tail probabilities of $\lVert \mathtt{Q}_{2} \rVert_{2}$ and $\lVert \mathtt{Q}_{3} \rVert_{2}$, 
		we used Lemma \ref{lem:matrix_norm_bd} with $K=\frac{C\sigma}{2}$ and $K=\frac{C\sigma'}{2}$, respectively. Observe that in order to make the last expression in \eqref{eq:S_bd_pf} small, we will chose $t=c_{4}\sqrt{n}\log n$ and $t'=c_{4}\sqrt{\log n}$, where $c_{4}>0$ is a constant to be determined. This yields
		\begin{align}
			\P\left( S> c_{1}c_{4} \sqrt{n} \log n  + 3(c_{2}+c_{3})C\sigma(\sqrt{p} + \sqrt{n}+ c_{4}\sqrt{\log n})   \right) \le  n^{-c_{5}},
		\end{align}
		where $c_{5}>0$ is an explicit constant that grows in $c_{4}$. We assume $c_{4}>0$ is such that $c_{5}\ge 1$. This shows \eqref{eq:S_tail_bound}.

		\textbf{Case 2.} $\bZ^{\star}-\tau \nabla F(\bZ^{\star})\notin \Param$.
		
		In this case,  we cannot directly simplify the expression \eqref{eq:grad_mapping_compare_stationary}. In this case, we take the Frobenius norm and use non-expansiveness of the projection operator (onto convex set $\Param$): 
		\begin{align}\label{eq:grad_mapping_compare_stationary2}
			\lVert G(\bZ^{\star}, \tau) {\color{black} -\bar{G}(\bZ^{\star}, \tau)} \rVert_{F} &=  \frac{1}{\tau} \left\rVert\left[  \Pi_{\Param}\left( \bZ^{\star}-\tau\nabla F(\bZ^{\star}) \right) - \Pi_{\Param}\left( \bZ^{\star}-\tau\nabla \bar{F}(\bZ^{\star}) \right) \right] \right\rVert_{F}   \\
			&\le \lVert \nabla F(\bZ^{\star})-  \nabla \bar{F}(\bZ^{\star}) \rVert_{F} \\
			& \le  \lVert \Delta \widetilde{\param}^{\star} \rVert_{F} + \lVert \Delta \widetilde{\bgamma}^{\star} \rVert_{F}. 
		\end{align}
		According to Remark \ref{rmk:pf_thm_LPGD}, we also have Theorem \ref{thm:CALE_LPGD} (and hence Theorem \ref{thm:SMF_LPGD}) with $\sqrt{3r} \lVert \Delta \widetilde{\param}^{\star} \rVert_{2}$ replaced with $\lVert \Delta \widetilde{\param}^{\star} \rVert_{F}$. Then an identical argument with $\lVert \mathtt{Q}_{i}\rVert_{F} \le \sqrt{\min(p,n)} \lVert \mathtt{Q}_{i}\rVert_{2}$ for $i=2,3$ shows 
		\begin{align}\label{eq:SMF_MLE_pf_bd_Q4}
			S':= \lVert \Delta \widetilde{\param}^{\star} \rVert_{F} + \lVert \Delta \widetilde{\bgamma} \rVert_{F} \le c_{1} \lVert \mathtt{Q}_{1} \rVert_{2}  + c_{2} \sqrt{\min( p,n )} \lVert \mathtt{Q}_{2}  \rVert_{2} + c_{3} \sqrt{\min( p,n )} \lVert \mathtt{Q}_{3} \rVert_{2},
		\end{align}
		where the constants $c_{1},c_{2},c_{3}>0$ are the same as in \eqref{eq:c_constants_Q}. So we have 
		\begin{align}
			\lVert \bZ_{t} - \bZ^{\star} \rVert_{F} \le \rho^{t}  \, \lVert  \bZ_{0} - \bZ^{\star}\rVert_{F} + \frac{\tau}{1-\rho}(S' + 2\lambda (\lVert \A^{\star} \rVert_{2}+\lVert \bgamma^{\star} \rVert_{F})). 
		\end{align}
		Then an identical argument shows 
		\begin{align}\label{eq:S_bd_pf2}
			&\P\left( S' > c_{1} t  + 3(c_{2}+c_{3})C\sigma(\sqrt{p} + \sqrt{n}+ t')  \sqrt{\min(p,n)}  \right) \\
			&\qquad \le  \P\left( \lVert \mathtt{Q}_{1} \rVert_{2} >t\right)  + \sum_{i=2}^{3} \P\left( \lVert \mathtt{Q}_{i} \rVert_{2} > \frac{3C\sigma}{2}(\sqrt{p} + \sqrt{n}+ t')    \right),
		\end{align}
		and the assertion follows similarly as before. 
	\end{proof}

	It remains to show Theorem \ref{thm:SMF_LPGD_STAT} for SMF-$\H$.

	\begin{proof}[\textbf{Proof of Theorem \ref{thm:SMF_LPGD_STAT} for SMF-$\H$}]
		The argument is entirely similar to the proof of Theorem \ref{thm:SMF_LPGD_STAT} for SMF-$\W$. Indeed, denoting $\a_{s}=\A[:,s]+\bgamma^{T}\x_{s}'$ for $s=1,\dots,n$ and keeping the other notations the same as in the proof of Theorem \ref{thm:SMF_LPGD_STAT}, we can compute the following gradients 
		\begin{align}
			\nabla_{\A} (F - \bar{F})(\A,\B,\bgamma)   &=  \begin{bmatrix} 
				\dot{\h}(y_{1},\a_{1}),\dots,
				\dot{\h}(y_{n},\a_{n})
			\end{bmatrix}  \\
			\nabla_{\vect(\bgamma)} (F - \bar{F})(\A,\B,\bgamma) &=\left( \sum_{s=1}^{n} \dot{\h}(y_{s},\a_{s}) \otimes \x_{s}' \right) \\
			\nabla_{\B}  (F - \bar{F})(\A,\B,\bgamma) &=\frac{2}{2\sigma^{2}} (\B^{\star}-\X_{\textup{data}}) = \frac{2}{2\sigma^{2}} [ \beps_{1},\dots,\beps_{n}].
		\end{align}
		Hence repeating the same argument as before, using concentration inequalities for the following random quantities $\mathtt{Q}_{1},\mathtt{Q}_{2}, \mathtt{Q}_{3}$ we defined in \eqref{eq:def_Q}, one can bound the size of $G(\bZ^{\star},\tau)$ with high probability. The rest of the details are omitted. 
	\end{proof}

	\section{Auxiliary computations}
	
	\begin{remark}\label{rmk:thm1_H_parameter}
		Denoting $\xi=\xi'n$ and $\lambda=\lambda'n$, the condition $L/\mu$ in Theorem \ref{thm:SMF_LPGD} for SMF-$\W$  reduces to 
		{\small
			\begin{align}
				&\frac{L^{*}}{\mu^{*}}<3 \,\, \Rightarrow \,\, \left(\frac{L^{*}}{6} <\xi' <\frac{3\mu^{*}}{2}, \quad 0\le  \lambda' <\frac{6\xi'-L^{*}}{2} \right) \cup \left( \xi'>\frac{3\mu^{*}}{2},\quad \frac{2\xi'-3\mu^{*}}{6}< \lambda' <\frac{6\xi'-L^{*}}{2} \right) \\
				&\frac{L^{*}}{\mu^{*}}\ge 3 \,\, \Rightarrow \,\, \left(\frac{L^{*}-\mu^{*}}{4} <\xi' <\frac{3(L^{*}-\mu^{*})}{4}, \,\, \frac{L^{*}-3\mu^{*}}{4}< \lambda' <\frac{6\xi'-L^{*}}{2} \right) \\
				&\hspace{5cm} \cup \left( \xi'>\frac{3(L^{*}-\mu^{*})}{2},\,\, \frac{2\xi'-3\mu^{*}}{6}< \lambda' <\frac{6\xi'-L^{*}}{2} \right).
			\end{align}
		}
	\end{remark}

		\section{Auxiliary lemmas}

		\begin{lemma}\label{lem:gradient_mapping}
			Fix a differentiable function $f:\R^{p}\times \R$ and a convex set $\Param\subseteq \R^{p}$. Fix $\tau>0$ and \begin{align}
				G(\param, \tau):= \frac{1}{\tau}(\param - \Pi_{\Param}(\param - \tau\nabla f(\param))). 
			\end{align} 
			Then for each $\param\in \Param$, $\lVert G(\param,\tau) \rVert\le \lVert \nabla f(\param) \rVert$. 
		\end{lemma}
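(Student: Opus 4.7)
The plan is to exploit two elementary facts about the metric projection onto a convex set: (i) if $\param\in \Param$, then $\Pi_{\Param}(\param)=\param$, and (ii) $\Pi_{\Param}$ is non-expansive (1-Lipschitz) whenever $\Param$ is convex and closed, which is a standard result in convex analysis (see, e.g., Thm. 9.9 in \cite{beck2017first}).

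Using (i), I would rewrite the numerator defining $G(\param,\tau)$ as
\begin{align*}
    \param - \Pi_{\Param}(\param - \tau \nabla f(\param)) = \Pi_{\Param}(\param) - \Pi_{\Param}(\param - \tau \nabla f(\param)).
\end{align*}
Then by (ii), the norm of this difference is bounded by the norm of the difference of the arguments, namely
\begin{align*}
    \lVert \Pi_{\Param}(\param) - \Pi_{\Param}(\param - \tau \nabla f(\param)) \rVert \le \lVert \param - (\param - \tau \nabla f(\param)) \rVert = \tau \lVert \nabla f(\param) \rVert.
\end{align*}
Dividing both sides by $\tau$ yields the claim $\lVert G(\param,\tau) \rVert \le \lVert \nabla f(\param) \rVert$.

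There is no real obstacle here; the argument is a one-line consequence of the two facts cited above. The only subtlety worth noting is the convexity of $\Param$, which is essential for non-expansiveness of the projection (for a non-convex set the projection can be multi-valued and strictly expansive), and this hypothesis is given in the statement.
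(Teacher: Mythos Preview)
Your proof is correct. It differs slightly from the paper's argument: the paper works directly from the variational characterization of the projection, writing $\hat{\param}:=\Pi_{\Param}(\param-\tau\nabla f(\param))$, invoking the obtuse-angle inequality $\langle \hat{\param}-\param+\tau\nabla f(\param),\,\param-\hat{\param}\rangle\ge 0$ with $\param\in\Param$, and then applying Cauchy--Schwarz to obtain $\lVert \param-\hat{\param}\rVert^{2}\le \tau\lVert\nabla f(\param)\rVert\,\lVert\param-\hat{\param}\rVert$. You instead invoke the non-expansiveness of $\Pi_{\Param}$ together with $\Pi_{\Param}(\param)=\param$, which packages the same variational fact one level higher and yields the bound in a single line. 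Both routes are standard; yours is marginally cleaner, while the paper's is slightly more self-contained since it does not appeal to non-expansiveness as a black box.
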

		
		\begin{proof}
			The assertion is clear if $\lVert G(\param,\tau) \rVert=0$, so we may assume $\lVert G(\param,\tau) \rVert>0$. 
			Denote $\hat{\param}:=\Pi_{\Param}(\param- \tau\nabla f(\param))$. Note that 
			\begin{align}
				\hat{\param}= \argmin_{\param'} \, \lVert \param - \tau\nabla f(\param) - \param' \rVert^{2},
			\end{align}
			so by the first-order optimality condition, 
			\begin{align}
				\langle \hat{\param} - \param + \tau\nabla f(\param) ,\, \param' -\hat{\param}  \rangle \ge 0 \quad \forall \param'\in \Param. 
			\end{align}
			Plugging in $\param'=\param$ and using Cauchy-Schwarz inequality, 
			\begin{align}
				\tau^{2} \lVert G(\param,\tau) \rVert^{2} =  \lVert \param - \hat{\param} \rVert^{2} \le \tau \langle \nabla f(\param),\, \param-\hat{\param} \rangle \le \tau \lVert \nabla f(\param) \rVert \, \tau \lVert G(\param, \tau)  \rVert.
			\end{align}
			Hence the assertion follows by dividing both sides by $\tau^{2}\lVert G(\param,\tau) \rVert>0$. 
		\end{proof}

		\section{Experimental details}
		\label{sec:experimental_details}
		
		All numerical experiments were performed on a 2022 Macbook Air with M1 chip and 16 GB of RAM. 
		
		\subsection{Experiments on semi-synthetic MNIST dataset} 
		We give more details on the semi-synthetic MNIST 
		we used in the experiment in Figure \ref{fig:benchmark_MNIST}. Denote $p=28^{2}=784$, $n=500$, $r=2$, and $\kappa=1$. First, we randomly select 10 images each from digits '2' and '5'. Vectorizing each image as a column in $p=784$ dimension, we obtain a true factor matrix for features $\W_{\textup{true}, X}\in \R^{p\times r}$. Similarly, we randomly sample 10 images of each from digits '4' and '7' and obtain the true factor matrix of labels $\W_{\textup{true}, \Y}\in \R^{p\times r}$. Next, we sample a code matrix $\H_{\text{true}} \in \R^{r \times n}$ whose entries are i.i.d. with the uniform distribution $U([0,1])$. Then the `pre-feature' matrix $\X_0 \in \R^{p\times n}$ of vectorized synthetic images is generated by $\W_{\textup{true}, X} \H_\text{true}$. The feature matrix $\X_{\textup{data}}\in \R^{p\times n}$ is then generated by adding an independent Gaussian noise $\eps_{j}\sim N(\mathbf{0}, \sigma^{2} I_{p})$ to the $j$th column of $\X_{0}$ for $j=1,\dots,n,$ with $\sigma=0.5$. We generate the binary label matrix $\Y=[y_{1},\dots,y_{n}]  \in \{0,1\}^{1\times n }$   (recall $\kappa=1$) as follows: Each entry $y_{i}$ is an independent Bernoulli variable with probability $p_{i} = \left( 1+\exp{(-\Beta_{\textup{true},\Y}^{T} \W_{\textup{true},\Y}^{T} \X_{\textup{data}}[:,i]  )}  \right)^{-1}$, where 
		$\Beta_{\textup{true}, \Y} = [1,-1]$. No auxiliary features were used for the semi-synthetic dataset (i.e., $q=0$).

		\subsection{Experiments on the Job postings dataset} 
		
		Next, we give details on the job postings dataset \cite{fakejob_data}. There are 17,880 postings and 15 variables in the dataset including binary variables, categorical variables, and textual information of \textit{job description}. Among the 17,880 postings, 17,014 are true job postings (95.1\%) and 866 are fraudulent postings (4.84\%). This reveals a significant class imbalance, where the number of true postings greatly outweighs fraudulent ones, making this class imbalance a noteworthy characteristic of the dataset. In our analysis, we have coded the fake job postings as positive examples and the true job postings as negative examples.
		
		In our experiments, we represented each job posting as a $p=2480$ dimensional word frequency vector computed from its \textit{job description} and augmented with $q=72$ auxiliary features of binary and categorical variables, including indicators that specify whether a job posting has a company logo or if the posted job is in the United States. For computing the word frequency vectors, we represent the job description variable as a term/document frequency matrix with Term Frequency-Inverse Document Frequency (TF-IDF) normalization \cite{pedregosa2011scikit}. Common words that appear in all documents are assigned lower importance, while words specific to particular documents are deemed more significant. In our analysis, we utilized the 2,480 most frequent words for further examination.
		
		\subsection{Details on CNN and FFNN}

		For the task of classifying microarray data into cancer classes, we compared the performance of our method with both CNN and FFNN in Figure \ref{fig:pancreatic_cancer}. Specifically, the CNN architecture was designed with a convolutional layer with 32 filters and a kernel size of 3, followed by an average pooling layer with a pool size of 2. Subsequently, a second convolutional layer with 64 filters and a kernel size of 3 was integrated, further followed by another average pooling layer with the same pool size. The architecture was finalized with a flatten layer, a fully connected layer of 128 neurons activated by ReLU, a dropout layer with a rate of 0.5, and a final fully connected layer with a sigmoid activation. On the other hand, the FFNN consisted of a fully connected layer featuring 64 neurons with ReLU, followed by a dropout layer with a regularization rate of 0.5. A subsequent fully connected layer with 32 neurons activated by the ReLU was incorporated, followed by a fully connected layer with a sigmoid function. This comparative analysis was repeated five times, consistent with the procedure outlined in the main paper. 
		
		An intriguing observation emerges from our benchmarking analysis. While the FFNN's performance on the breast cancer dataset is comparable to ours (The LPGD algorithms for SMF), the overall performance of CNN is notably inferior to ours. This disparity can primarily be attributed to the small sample size of the training set (145 samples for breast cancer and 25 samples for pancreatic cancer) in comparison to the substantial dimensionality of gene features (exceeding 30,000 features). We note that obtaining a substantial volume of biomedical data for cancer research is very expensive, making it challenging to feasibly train complex models such as deep neural networks. The significance of our approach becomes evident in its ability to retain robust performance even when facing the challenges posed by a restricted sample size and a complex high-dimensional feature landscape. Moreover, our method augments this resilience with the advantage of interpretability.


		
		
		
		

	\end{document}